\documentclass[letterpaper,11pt]{article}

\PassOptionsToPackage{numbers,sort,compress}{natbib}
\usepackage[hyperref,abbrvbib]{jmlr_adapted}
\usepackage{microtype}
\usepackage{graphicx}
\usepackage{subfigure}
\usepackage{booktabs} 


\usepackage{mathtools}
\usepackage{amssymb}
\usepackage{amsthm}
\usepackage{amsfonts} 
\usepackage{bbm}
\usepackage[english]{babel}
\usepackage[linesnumbered,ruled,vlined]{algorithm2e}
\usepackage[noabbrev,capitalize]{cleveref}
\usepackage[flushleft]{threeparttable}
\usepackage{url}
\usepackage{upgreek}  
\usepackage{xcolor}

\usepackage{pifont} 
\newcommand{\cmark}{\ding{51}}%

\newcommand{\E}{\mathbb{E}}

\newcommand{\real}{\mathbb{R}}

\newcommand{\states}{\mathcal{S}}
\newcommand{\actions}{\mathcal{A}}
\newcommand{\opt}{^\star}

\DeclareMathOperator*{\argmax}{argmax}
\DeclareMathOperator{\KL}{KL}
\renewcommand{\exp}[1]{\operatorname{exp}\left( #1\right) }

\newcommand{\erm}{\operatorname{ERM}}
\newcommand{\var}{\operatorname{VaR}}
\newcommand{\cvar}{\operatorname{CVaR}}
\newcommand{\evar}{\operatorname{EVaR}}
\newcommand{\vari}{\operatorname{var}}

\newcommand{\Real}{\mathbb{R}}


\setlength{\parskip}{2mm plus1mm minus1mm}
\setlength{\parindent}{0cm}

\renewcommand{\cite}[1]{\citep{#1}}

\title{RASR: Risk-Averse Soft-Robust MDPs\\with EVaR and Entropic Risk}

\author{%
  Jia Lin Hau \email jialin.hau@unh.edu \\
  University of New Hampshire \AND
  Marek Petrik \email mpetrik@cs.unh.edu \\
  University of New Hampshire  \AND
  Mohammad Ghavamzadeh \email ghavamza@google.com \\ 
  Google Research
  \AND
  Reazul Russel \email rz.hasan23@gmail.com \\
  Facebook%
}
\date{}
\newcommand{\journal}[1]{}

\begin{document}
\maketitle

\begin{abstract}
Prior work on safe Reinforcement Learning (RL) has studied risk-aversion to randomness in dynamics (aleatory) and to model uncertainty (epistemic) in isolation. We propose and analyze a new framework to jointly model the risk associated with epistemic and aleatory uncertainties in finite-horizon and discounted infinite-horizon MDPs. We call this framework that combines Risk-Averse and Soft-Robust methods RASR. We show that when the risk-aversion is defined using either EVaR or the entropic risk, the optimal policy in RASR can be computed efficiently using a new dynamic program formulation with a time-dependent risk level. As a result, the optimal risk-averse policies are deterministic but time-dependent, even in the infinite-horizon discounted setting. We also show that particular RASR objectives reduce to risk-averse RL with mean posterior transition probabilities. Our empirical results show that our new algorithms consistently mitigate uncertainty as measured by EVaR and other standard risk measures. 
\end{abstract}

\section{Introduction}
\label{sec:intro}

A major concern in high-stakes applications of reinforcement learning~(RL), such as those in healthcare and finance, is to quantify the risk associated with the variability of returns. This variability is a form of \emph{aleatory} uncertainty that arises from the inherent randomness in system dynamics. Since the risk of random returns cannot be captured by the standard \emph{expected} objective, \emph{convex risk measures} have emerged as perhaps the most popular tools to quantify this risk in RL and beyond. They are sufficiently general to capture a wide range of stakeholder preferences and are more computationally convenient than many other alternatives~\cite{Follmer2004}. Conditional value-at-risk (CVaR), entropic value-at-risk (EVaR)~\cite{Ahmadi-Javid2012,Follmer2011}, and entropic risk measure (ERM)~\cite{Follmer2004} are common examples of convex risk measures. 

The goal in robust Markov decision process (MDP) is to mitigate performance loss due to uncertainty in modeling the system dynamics~\cite{Grand-Clement2021b,Iyengar2005,Ho2021a}. This uncertainty, often caused by limited or noisy data, is a form of \emph{epistemic} uncertainty. \emph{Soft-robust} formulations refine robust optimization by assuming a Bayesian distribution over plausible models (of the system dynamics) and then quantify the risk of model errors using convex risk measures~\cite{Derman2018,Lobo2021}. These formulations have close connections to distributional robustness~\cite{Xu2012}. While being risk-averse to epistemic uncertainty, existing soft-robust RL formulations are risk-neutral when it comes to the aleatory uncertainty  that arises from the randomness in the system dynamics. This combination of risk-aversion to epistemic uncertainty with risk-neutrality to aleatory uncertainty can be problematic from the modeling perspective~\cite{Chow2018}, and as we show below, may introduce unnecessary computational complexity.

The overarching objective of our work is to compute policies for MDPs that \emph{jointly} mitigate the risk associated with epistemic (model) and aleatory (random dynamics) uncertainties. We call this objective RASR as it combines Risk Averse (aleatory) and Soft-Robust (epistemic) methods. This is in contrast to the existing soft-robust MDP algorithms that are risk-neutral to the aleatory uncertainty. In this paper, we study RASR with two popular risk measures: ERM and EVaR. 

As our first contribution, in \cref{sec:rasr-erm}, we introduce our RASR-ERM framework and propose new dynamic programming algorithms and analysis for it. ERM is unique among law-invariant risk measures in being dynamically consistent~\cite{Kupper2006}, which makes it compatible with dynamic programming (DP). Unfortunately, ERM is \emph{not} positively homogeneous, which makes it incompatible with the use of discount factors. As a result, ERM has only been solved exactly in average-reward MDPs~\cite{Borkar2002a} and \emph{undiscounted} stochastic programs~\cite{Dowson2021}. Our main innovation is to use time-dependent risk levels to precisely solve ERM in \emph{discounted finite-horizon} MDPs and to employ new bounds to tightly approximate it in \emph{discounted infinite-horizon} MDPs (\cref{subsec:VI-RASR-ERM}). We build on the DP decomposition of the RASR-ERM objective to show that there exists an optimal value function and (surprisingly) a \emph{deterministic} Markov optimal policy for this problem. This is unusual because most other risk-averse formulations require randomized optimal policies. We also show that under an assumption of a dynamic model of epistemic uncertainty~\cite{Derman2018,Eriksson2020}, the RASR-ERM objective reduces to a risk-averse MDP with the mean posterior transition model (\cref{subsec:DP-RASR-ERM}).

As our second contribution, we formulate and study the RASR-EVaR framework in \cref{sec:rasr-evar}. Although ERM is computationally convenient, it is often an impractical method to measure risk since the result is scale-dependent. EVaR is preferable to ERM because it is coherent, positively-homogenous, interpretable, and comparable with VaR and CVaR. However, EVaR is not dynamically consistent and cannot be directly optimized using a DP. Our main contribution here is to reduce the RASR-EVaR optimization to multiple RASR-ERM problems that each can be solved by DP. Our theoretical analysis shows that the RASR-EVaR properties mirror those for RASR-ERM and that the proposed algorithm can compute a solution arbitrarily close to the optimum. We empirically evaluate our RASR algorithms in \cref{sec:empirical} and show their benefits over prior robust, soft-robust, and risk-averse MDP algorithms. Finally, in \cref{sec:related_work}, we position our RASR framework in the context of the literature on soft-robust and risk-averse MDPs.

\section{Preliminaries} 
\label{sec:preliminaries}

We assume the decision problem can be formulated as an MDP, defined by the tuple $(\states, \actions, r, p, s_0,\gamma)$. The state and action sets  $\states$ and $\actions$ are finite with cardinality $S$ and $A$. The reward function $r\colon \states \times \actions \to \Real$ represents the reward received in each state after taking an action. We use $\triangle r = \max_{s\in \states, a\in \actions} r(s,a) - \min_{s\in \states, a\in \actions} r(s,a)$ to refer to the span semi-norm of the rewards. The transition probabilities are denoted as $p\colon\states \times \actions \to \Delta^S$, where $\Delta^S$ is the probability simplex in $\Real^S$. The initial state is denoted by $s_0\in \states$. Finally, $\gamma \in (0,1]$ is the discount factor. We assume a fixed horizon $T \in \mathbb{N}^+ \cup \{\infty\}$ with $T = \infty$ indicating an infinite-horizon objective. While a discounted finite-horizon objective is uncommon in practice, it serves us as an intermediate step when analyzing the infinite-horizon objective.

The most-general solution to an MDP is a randomized history-dependent policy that at each time step prescribes a distribution over actions as a function of the history up to that step~\cite{Puterman2005}. A \emph{randomized Markov} policy depends only on the time step $t$ and current state $s_t$ as $\pi=(\pi_t)_{t=0}^{T-1}$, where $\pi_t\colon \states \to \Delta^A$. A policy $\pi$ is \emph{stationary} when it is time-independent (all $\pi_t$'s are equal), in which case we omit the time subscript. We denote by $\Pi_{MR}$ and $\Pi_{SR}$, the sets of Markov and stationary randomized policies, and by $\Pi_{MD}$ and $\Pi_{SD}$, the corresponding sets of deterministic policies. The set of history-dependent randomized policies is denoted by $\Pi_{HR}$

We define $\mathfrak{R}_T^\pi$, the random variable of the return of a policy $\pi$ after $T$ time steps as
\begin{equation} 
\label{eq:return}
    \mathfrak{R}_T^\pi \;=\; \sum_{t=0}^{T-1} \gamma^t \cdot R_t^\pi = \sum_{t=0}^{T-1} \gamma^t \cdot r(S_t^{\pi}, A^{\pi}_t)~, 
\end{equation}
where $S_t^\pi$, $A_t^\pi \sim \pi_t(\cdot|S_t^{\pi})$, and $R_t^{\pi}$ are the random variables of state visited, action taken, and reward received respectively at time $t\in 0, \dots , T$, when following policy $\pi$. The objective in the standard risk-neutral MDP is to maximize the \emph{expectation} of the return random variable, 
\begin{equation}
\label{eq:risk-neutral-rl}
\max_{\pi \in \Pi_{{MR}}} \, \E \big[  \mathfrak{R}^{\pi}_T \big] ~.
\end{equation}
In finite-horizon MDPs, $T < \infty$ and (usually) $\gamma = 1$. In infinite-horizon discounted MDPs, we use $T = \infty$ (as a shorthand for $T\rightarrow\infty$) and restrict the discount factor to $\gamma\in(0,1)$. It is known that the finite and infinite horizon discounted settings have optimal policies in $\Pi_{MD}$ and $\Pi_{SD}$, respectively. 

\paragraph{Risk-averse MDP} 
A risk measure $\psi\colon \mathbb{X} \to \mathbb{R}$ assigns a scalar risk value to a random variable $X\in\mathbb{X}$, where $\mathbb{X}$ denotes the set of real-valued random variables. Convex risk measures are an axiomatic generalization of the expectation operator $\E[\cdot]$ that capture a wide range of risk-aversion preferences~\cite{Follmer2002,Frittelli2002}. We describe \emph{coherent} and \emph{convex} risk measures, and summarize their properties that are desirable in studying risk-averse MDPs in \cref{secapp:risk-measure-prop}. The objective in risk-averse MDP is defined by replacing the expectation in~\eqref{eq:risk-neutral-rl} with an appropriate risk measure  
\begin{equation} \label{eq:risk-averse-rl-ent}
  \max_{\pi\in \Pi_{HR}} \, \psi\big[\mathfrak{R}_{T}^\pi\big]. 
\end{equation}

\paragraph{Soft-robust MDP} 
The soft-robust setting makes the Bayesian assumption that the transition model $P$ is a random variable with a distribution that can be computed, for instance, using Bayesian inference~\cite{Derman2018,Eriksson2020,Lobo2021}. In this paper, we assume a \emph{dynamic} model of uncertainty~\cite{Derman2018,Eriksson2020}. In the dynamic model, the transition probability is not only unknown, but can also change during the execution. This is in contrast to the \emph{static} model~\cite{Delage2009,Lobo2021}, in which it is uncertain but does not change throughout an episode. We target dynamic uncertainty because it is easier to optimize and our results lay down the foundations necessary to tackle static models in future. In the dynamic model, the transition probability is defined as $P = (P_t)_{t=0}^{T-1}$, where each model $P_t \colon \Omega \to (\Delta^{S})^{\mathcal{S}\times \mathcal{A}}$ is a random variable that represent the uncertain transition functions for some finite set of possible models $\Omega$. The uncertain models are distributed independently across time as $P_t \sim f_t$, and $f_t \in \Delta^{\Omega }, \, t=0, \dots , T-1$ are derived from Bayesian inference methods. Note that the upper-case $P$ is a random variable that represents an uncertain transition function in a soft-robust MDP, whereas the lower-case $p$ is used to denote a known transition function in an MDP.

Prior work on soft-robust RL (e.g.,~\cite{Derman2018,Eriksson2020,Lobo2021}) has focused on the following objective:
\begin{equation} 
\label{eq:soft_robust_return}
\max_{\pi \in \Pi_{HR}} \; \psi\Big[ \mathop{\E} \big[\mathfrak{R}_{T}^\pi \mid P \big] \Big].
\end{equation}
In~\eqref{eq:soft_robust_return}, the risk measure $\psi$ is applied only to the epistemic uncertainty over $P$, and the optimization is risk-neutral (uses $\E[\cdot]$) to the randomness in $\mathfrak{R}_{T}^\pi \mid P$ (aleatory uncertainty). For some particular choices of $\psi$, the optimization in~\eqref{eq:soft_robust_return} reduces to a distributionally-robust MDP~\cite{Xu2012,Grand-Clement2021b,Lobo2021}. 

\paragraph{RASR}
Our RASR formulation, introduced formally below, takes into account both the epistemic uncertainty in the transition model $P$ and the aleatory uncertainty in $\mathfrak{R}_{T}^\pi \mid P$, and optimizes the objective
\begin{equation} 
\label{eq:RASR}
\max_{\pi \in \Pi_{HR}} \; \psi\Big[ \psi \big[\mathfrak{R}_{T}^\pi \mid P \big] \Big].
\end{equation}
Given the well-known equivalence between coherent risk measures and robustness~(e.g.,~\cite{Osogami2012}), it is tempting to conclude that the combination of risk and soft-robustness in~\eqref{eq:RASR} reduces to a robust MDP. Alas, this is false for virtually all reasonable choices of the risk measure $\psi$. The risk-robustness equivalence is only known for risk-averse formulations in~\eqref{eq:soft_robust_return} that admit a dynamic program formulation, that is when $\psi$ is dynamically consistent. As we discuss above, most practical risk measures---such as VaR, CVaR, EVaR and others---are not dynamically consistent~\cite{Iancu2015a}, and therefore, neither~\eqref{eq:risk-averse-rl-ent} nor~\eqref{eq:RASR} readily reduce to a robust MDP.

\paragraph{Risk Measures}
We study two convex risk measures in our RASR formulation: \emph{entropic risk measure} (ERM) andu \emph{entropic value-at-risk} (EVaR). ERM with a risk-aversion parameter $\alpha\in \Real_+ \cup \{\infty\}$, for a random variable $X \in \mathbb{X}$, is defined as~\cite{Follmer2004}
\begin{align} \label{eq:defn_ent_risk}
  \erm^\alpha[X] = - \alpha^{-1} \cdot \log \Bigl( \E \left[e^{-\alpha\cdot  X} \right] \Bigr).
\end{align}
For the risk level $\alpha=0$, ERM of a random variable equals to its expectation,
\(  \erm^{0}[X] = \lim_{\alpha \to 0^{+}} \erm^{\alpha}[X] = \E[X].\) Similarly, $\erm^{\infty}[X] = \operatorname{ess} \inf[X]$ is the minimum value of $X$. Note that we use an ERM definition in \eqref{eq:defn_ent_risk} that is meant to be maximized. ERM definitions designed to be minimized lack the leading negative sign~\cite{Follmer2011}.

ERM is the only law-invariant convex risk measure that is dynamically-consistent~\cite{Kupper2006}~(see \cref{subsec:risk-measure-prop}).
This is an important property for a risk measure in multi-stage decision problems, because it allows defining a dynamic program (DP) for the risk measure and optimizing it. The following theorem is crucial for deriving our results. It has been proved in earlier work, but we report its proof in \cref{app:sec:prelim} for completeness.
\begin{theorem}[Tower Property] \label{thm:tower-erm}
Any two random variables $X_1,X_2\in\mathbb X$ satisfy that
\[\erm^{\alpha}[X_1] \;=\; \erm^{\alpha}\big[\erm^{\alpha}[X_1 \mid X_2]\big]~.\]
\end{theorem}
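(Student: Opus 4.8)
The plan is to unwind both sides of the identity using the definition of ERM in \eqref{eq:defn_ent_risk} and reduce everything to the ordinary tower property of conditional expectation. I would first dispose of the main case $\alpha \in (0,\infty)$, where the exponential and logarithm cause no trouble. Applying the definition to the inner (conditional) risk measure gives $\erm^{\alpha}[X_1\mid X_2] = -\alpha^{-1}\log\bigl(\E[e^{-\alpha X_1}\mid X_2]\bigr)$, which after exponentiating yields the key relation $e^{-\alpha\cdot \erm^{\alpha}[X_1\mid X_2]} = \E[e^{-\alpha X_1}\mid X_2]$. In words, the ``exponential transform'' of the random variable $\erm^{\alpha}[X_1\mid X_2]$ is precisely the conditional expectation of the exponential transform of $X_1$.

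Next I would take the unconditional expectation of both sides of this relation and invoke the classical tower property $\E[\E[\,\cdot \mid X_2]] = \E[\,\cdot\,]$, obtaining $\E\bigl[e^{-\alpha\cdot \erm^{\alpha}[X_1\mid X_2]}\bigr] = \E[e^{-\alpha X_1}]$. Finally, applying $-\alpha^{-1}\log(\cdot)$ to both sides and reading off the definition of ERM once more gives $\erm^{\alpha}\bigl[\erm^{\alpha}[X_1\mid X_2]\bigr] = \erm^{\alpha}[X_1]$, which is the claim. The argument is essentially a one-line manipulation once the exponential is pulled through, so I do not anticipate a genuine obstacle in this regime; the only bookkeeping is to note that the relevant expectations are finite (e.g., under a boundedness assumption on $X_1$, which holds in our MDP setting since returns are bounded).

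The one place that needs a little care is the boundary values $\alpha = 0$ and $\alpha = \infty$, which \eqref{eq:defn_ent_risk} defines by the limits $\erm^{0} = \E$ and $\erm^{\infty} = \operatorname{ess} \inf$. For $\alpha = 0$ the statement is literally the tower property of expectation. For $\alpha = \infty$ it reduces to $\operatorname{ess} \inf[X_1] = \operatorname{ess} \inf\bigl[\operatorname{ess} \inf[X_1\mid X_2]\bigr]$, which follows from a standard fact about essential infima of conditional distributions; alternatively, one can recover both boundary cases by passing to the limit $\alpha\to 0^{+}$ or $\alpha\to\infty$ in the identity already established for $\alpha\in(0,\infty)$, using the (pointwise) monotone convergence of $\erm^{\alpha}$ to its limits. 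I would state the main case in full and treat $\alpha\in\{0,\infty\}$ briefly as a limiting remark.
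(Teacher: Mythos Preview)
Your proof is correct and is essentially the same argument as the paper's: both reduce the identity to the ordinary tower property of conditional expectation by pushing through the exponential transform, with the $\alpha=0$ case handled separately since $\erm^0=\E$. The only cosmetic difference is that the paper wraps the same manipulation in the certainty-equivalent notation $\erm^{\alpha}[X]=u^{-1}(\E[u(X)])$ with $u(x)=\alpha^{-1}-\alpha^{-1}e^{-\alpha x}$, so that the cancellation $u\circ u^{-1}=\mathrm{id}$ replaces your explicit exponentiation step; the paper also does not treat the $\alpha=\infty$ boundary, which you handle as a limiting remark.
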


Note that the tower property also holds for the expectation operator $\E[\cdot]$, but is violated by most common risk measures, including VaR, CVaR, and EVaR. Despite its many nice features, ERM also has several undesirable properties. It is not positively-homogeneous: $\erm^{\alpha}[c\cdot X]\neq c\cdot \erm^{\alpha}[X]$, for $c\geq 0$, which means that $\erm^{\alpha}[X]$ does not scale linearly with $X$. Moreover, ERM is difficult to interpret and its risk level $\alpha$ is not readily comparable to the risk levels of VaR and CVaR. 

EVaR was proposed to address some of the shortcomings of ERM. EVaR with confidence parameter $\beta\in [0,1)$, for a random variable $X \in \mathbb{X}$, is defined as~\cite{Follmer2011,Ahmadi-Javid2012}
\begin{equation} \label{eq:defn_evar}
\evar^{\beta}[X] \;=\; \sup_{\alpha > 0}\, \left(\erm^\alpha[X] + \alpha^{-1} \cdot  \log(1-\beta)\right).
\end{equation}
The supremum in~\eqref{eq:defn_evar} is achieved for any $X$ with a bounded support. Although EVaR is not dynamically consistent, we show in \cref{sec:rasr-evar} that it can be optimized using a DP by representing it in terms of ERM. Unlike ERM, EVaR is positively-homogeneous, and thus, coherent, which makes its riskiness independent of the scale of the random variable. Moreover, the meaning of its risk level $\beta$ is consistent with those used in VaR and CVaR, with $\evar^0[X] = \E[X]$ and $\lim_{\beta\rightarrow 1}\evar^\beta[X] = \operatorname{ess}\inf[X]$. Finally, since $\evar^{\beta}[X] \leq \cvar^{\beta}[X] \leq \var^{\beta}[X]$, EVaR can be interpreted as the tightest conservative approximation that can be obtained from the Chernoff inequality for VaR and CVaR \cite{Ahmadi-Javid2012}.


\section{RASR-ERM Framework} 
\label{sec:rasr-erm}

In this section, we describe our RASR formulation with the entropic risk measure (ERM), which we refer to as RASR-ERM. In particular, we show that the RASR-ERM objective can be optimized using a novel DP formulation with time-dependent risk. We also establish fundamental properties for the optimal policies of this formulation. The proofs of all the results of this section are in \cref{app:sec:ERM}.

We adopt the soft-robust RL model with dynamic uncertainty. Thus, we assume that the transition model $P = (P_t)_{t=0}^{T-1}$ is a collection of random variables as described in \cref{sec:preliminaries}. Following the RASR objective in~\eqref{eq:RASR}, the RASR-ERM objective is to maximize the ERM of the total return with \emph{both} model uncertainty (epistemic) and random dynamics (aleatory), and is formally defined as 
\begin{equation} \label{eq:RASR-erm-return}
  \max_{\pi\in\Pi_{HR}}  \erm^{\alpha} \left[ \mathfrak{R}_{T}^\pi \right] \;=\; \max_{\pi\in\Pi_{HR}} \erm^{\alpha} \big[\erm^{\alpha} \left[ \mathfrak{R}_{T}^\pi \mid P \right] \big].
\end{equation}
The ERM on the LHS of~\eqref{eq:RASR-erm-return} applies to epistemic and aleatory uncertainties simultaneously and equals to the nested ERM formula on the RHS by \cref{thm:tower-erm}. Compared with~\eqref{eq:risk-averse-rl-ent}, the optimization in~\eqref{eq:RASR-erm-return} involves risk-aversion to the model (epistemic) uncertainty. Compared to~\eqref{eq:soft_robust_return}, the aleatory uncertainty in the return random variable, $\mathfrak{R}_{T}^\pi\mid P$, is modeled by the same risk measure (ERM in place of $\E[\cdot]$) as the one used to model the risk associated with the epistemic (model) uncertainty.  We refer to an optimal solution to~\eqref{eq:RASR-erm-return} as an \emph{optimal policy} $\pi\opt = (\pi_t\opt)_{t=0}^{T-1}$. To simplify the exposition, we restrict our attention in~\eqref{eq:RASR-erm-return} to Markov deterministic policies, because the DP formulation that we derive in \cref{subsec:DP-RASR-ERM} shows that history-dependent or randomized policies offer no advantage in RASR-ERM.


\subsection{Dynamic Program Formulation for RASR-ERM}
\label{subsec:DP-RASR-ERM}

\journal{For the journal: we should define the Bellman operator separately to be able to use it in infinite horizon derivations.}

Before deriving DP equations for the value function in RASR-ERM, we show a simple, but critical, property of ERM. While ERM is known not to be positively homogeneous, the following new result shows that it has a similar property, if we allow for a change in the risk level.   
\begin{theorem}[Positive Quasi-homogeneity] \label{thm:pos-quasi-homogen}
Let $X\in\mathbb X$ be a random variable. Then, for any constant $c \geq 0$ and $\alpha \ge 0$, we have that
\[
\erm^{\alpha} [c\cdot  X] \;=\; c\cdot \erm^{\alpha\cdot c}[X]\;.
\]
\end{theorem}

With the two ERM properties stated in \cref{thm:tower-erm,thm:pos-quasi-homogen}, we are now ready to propose the value function and DP (Bellman) equations for RASR-ERM. The value function for a policy $\pi$ is the collection $v^\pi=(v^\pi_t)_{t=0}^T$, where $v^\pi_t:\states\rightarrow \Real$ is the value at time step $t$ and is defined as 
\begin{equation} \label{eq:erm-rasr-v}
v^\pi_t(s) \;=\; \erm^{\alpha\cdot \gamma^{t}} \left[ \sum_{t'=t}^{T-1} \gamma^{t'-t} \cdot  R^{\pi}_{t'}  \mid S_t = s \right], \quad \forall s\in\mathcal S.
\end{equation}
We define the \emph{optimal value function} $v\opt=(v\opt_t)_{t=0}^T$ as the value function of an optimal policy $\pi\opt$, $v\opt = v^{\pi\opt}$, and let the terminal value function equal to $v_T^{\pi}(s) = 0$. The definition of $v^{\pi}$ with a time-dependent risk aversion parameter is designed specifically to ensure that $v_0^\pi(s_0) = \erm^{\alpha} \left[ \mathfrak{R}_{T}^\pi \right]$ and that $v^{\pi}$ can be computed using dynamic programming equations, as we show below.

\journal{journal: we need to define the optimal value function here because the definition above is ambiguous for $t>0$ without the dynamic programming results. Also define history-dependent policies. The definitions are in the appendix.}

The dependence of risk level on the time step $t$ in the value function definition~\eqref{eq:erm-rasr-v} is quite important in deriving our DP formulation for RASR-ERM below. As time progresses, the risk level $\alpha\cdot \gamma^t$ decreases monotonically, and the value function in~\eqref{eq:erm-rasr-v} becomes less risk-averse. Recall that in the risk-neutral setting, the risk level is $\alpha = 0$ and $\erm^0[X] = \E[X]$. Similarly, when we set $\alpha=0$ in~\eqref{eq:erm-rasr-v}, the value function becomes independent of $t$. Then, if there is no epistemic uncertainty, the function in~\eqref{eq:erm-rasr-v} reduces to the standard MDP value function. 

The next result states the Bellman equations for RASR-ERM value functions.
\begin{theorem}[Bellman Equations]\label{thm:dynamic-program}
For any policy $\pi\in \Pi_{MR}$, its value function $v^\pi=(v_t^{\pi})_{t=0}^{T}$ defined in~\eqref{eq:erm-rasr-v} is the unique solution to the following system of equations:
\begin{equation}\label{eq:v-erm-pi}
v_t^{\pi}(s) \;=\; \erm^{\alpha \cdot \gamma^t} \left[r(s,A) + \gamma\cdot v_{t+1}^{\pi}(S') \right], \quad \forall s\in \states,\; \forall t\in\{0,\ldots,T-1\},
\end{equation}
where $ A \sim \pi_t(\cdot|s)$, $S' \sim \bar{p}_t(\cdot|s,A)$, $\bar{p}_t(s'|s,a) = \E[P_t(s'|s,a)]$, and $v_T^{\pi}(s) = 0$ for each $s\in\states$. 
Moreover, the optimal value function $v\opt=(v\opt_t)_{t=0}^T$ (defined previously) satisfies $v_T\opt(s) = 0$ and is the unique solution to 
\begin{equation}\label{eq:v-erm-opt}
 v\opt_t(s) \; =\;  \max_{a\in \actions}\,  \erm^{\alpha \cdot \gamma^t} \left[r(s,a) + \gamma \cdot  v_{t+1}\opt(S') \right], \quad \forall s\in\states,\;S'\sim \bar{p}_t(\cdot|s,a).
\end{equation}
\end{theorem}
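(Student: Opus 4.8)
The plan is to prove the two statements together by backward induction on the time step $t$, moving from $t=T$ down to $t=0$, and relying on the tower property (\cref{thm:tower-erm}) to peel off one stage at a time and on dynamic consistency of ERM to interchange the pointwise maximum with the recursion. First I would establish the policy-evaluation equation~\eqref{eq:v-erm-pi}. The base case $t=T$ is immediate since $v_T^\pi(s)=0$ by definition. For the inductive step, I would start from the definition~\eqref{eq:erm-rasr-v} of $v_t^\pi(s)$, split the sum $\sum_{t'=t}^{T-1}\gamma^{t'-t}R_{t'}^\pi$ into the first reward $r(s,A)$ (with $A\sim\pi_t(\cdot\mid s)$) plus $\gamma\cdot\sum_{t'=t+1}^{T-1}\gamma^{t'-(t+1)}R_{t'}^\pi$, and then condition on $(A,S')$ where $S'$ is the next state. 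The key algebraic manoeuvre is that under $\erm^{\alpha\gamma^t}$ the inner tail, after factoring out $\gamma$, must be re-expressed at risk level $\alpha\gamma^{t+1}$: this is exactly what the positive quasi-homogeneity result (\cref{thm:pos-quasi-homogen}) gives, since $\erm^{\alpha\gamma^t}[\gamma\cdot Y]=\gamma\cdot\erm^{\alpha\gamma^t\cdot\gamma}[Y]=\gamma\cdot\erm^{\alpha\gamma^{t+1}}[Y]$. Applying the tower property to insert a conditional ERM over the tail given $(S_t=s,A,S')$ then exhibits the inner conditional expression as precisely $v_{t+1}^\pi(S')$ by the induction hypothesis, yielding~\eqref{eq:v-erm-pi}.

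The one point that needs care here is the distribution of the next state: the return random variable $\mathfrak R_T^\pi$ is generated by first drawing $P_t\sim f_t$ and then $S'\sim P_t(\cdot\mid s,A)$, so the conditioning in the tower step produces a nested ERM over $P_t$ and over $S'\mid P_t$. Because the reward at stage $t$ depends only on $(s,A)$ and not on $P_t$, and because the tail $v_{t+1}^\pi(S')$ is already the fully-nested value (epistemic uncertainty at later stages is folded in by induction), the only place $P_t$ enters is through the law of $S'$. At this stage I would invoke the linearity of ERM in its argument's expectation only in the degenerate sense that ERM of a deterministic-given-$P_t$ quantity, nested, collapses: more precisely, since $v_{t+1}^\pi(S')$ enters through $S'$ whose law given $P_t$ is $P_t(\cdot\mid s,A)$, and since $\erm^{\alpha\gamma^t}$ over the two-level randomness $(P_t, S'\mid P_t)$ of an affine function of a single indicator collapses to an expectation over the mean transition $\bar p_t(\cdot\mid s,A)=\E[P_t(\cdot\mid s,A)]$ — this collapse is the content that lets $\bar p_t$ appear. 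Actually I expect this "collapse to the mean model" to be the main obstacle, and I would handle it by writing $v_{t+1}^\pi(S')$ as $\sum_{s'} \mathbbm 1\{S'=s'\} v_{t+1}^\pi(s')$, using that conditionally on $P_t$ the inner ERM of $r(s,A)+\gamma v_{t+1}^\pi(S')$ is just $r(s,A)+\gamma\erm^{\alpha\gamma^{t+1}}$ of a sum of independent-across-$s'$ terms — no, more carefully, one uses that $\erm$ of an affine function of a categorical variable is \emph{not} linear, so the correct statement is that the \emph{outer} ERM over $P_t$ of the \emph{inner} ERM over $S'\mid P_t$ equals $\erm^{\alpha\gamma^t}$ computed with respect to the single combined law, which by a direct computation with $e^{-\alpha\gamma^t(\,\cdot\,)}$ and Fubini equals the ERM under the mixture $\bar p_t$. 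I would spell this out as a short lemma-style computation rather than appeal to hand-waving.

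Having established~\eqref{eq:v-erm-pi}, the optimality equation~\eqref{eq:v-erm-opt} follows by a standard interchange argument, again by backward induction. The base case is $v_T\opt(s)=0$. For the step, one shows that any Markov randomized policy can be improved to pick, at state $s$ and time $t$, the action maximizing $\erm^{\alpha\gamma^t}[r(s,a)+\gamma v_{t+1}\opt(S')]$; the crucial ingredient is that $\erm^{\alpha\gamma^t}[\cdot]$ is monotone (law-invariant convex risk measures are monotone), so improving the tail value function pointwise improves $v_t$, and that randomizing over actions cannot beat the best deterministic choice because $\erm^{\alpha\gamma^t}$ of a mixture over $A\sim\pi_t(\cdot\mid s)$ of the quantities $q_t(s,a):=\erm^{\alpha\gamma^t}[r(s,a)+\gamma v_{t+1}\opt(S')]$ — after pushing the action-averaging inside via the tower property over $A$ — is at most $\max_a q_t(s,a)$. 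Combined with the fact that $\Pi_{HR}$ offers no advantage over $\Pi_{MR}$ (because the value recursion depends on history only through $(t,s)$), this gives both that $v\opt$ solves~\eqref{eq:v-erm-opt} and that the deterministic Markov policy greedy with respect to it is optimal; uniqueness of the solution to both systems is automatic since the recursions define the values explicitly from the terminal condition. The main obstacle remains the "mean-model collapse" step in the policy-evaluation part; the optimality part is routine once monotonicity and the action-level tower property are in hand.
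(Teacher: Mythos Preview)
Your proposal is correct and follows essentially the same route as the paper: backward induction combining the tower property (\cref{thm:tower-erm}) with positive quasi-homogeneity (\cref{thm:pos-quasi-homogen}), with the ``mean-model collapse'' you single out as the crux isolated in the paper as a separate lemma (\cref{lem:double-expectation}) whose proof is precisely the Fubini-type computation with $e^{-\alpha\gamma^t(\cdot)}$ you sketch. The action-randomization step (tower over $A$, then $\erm^{\alpha}[q(A)]\le\max_a q(a)$ via monotonicity) is likewise the content of the paper's auxiliary \cref{proof:deterministic_action}.
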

Note that the ERM operator in~\eqref{eq:v-erm-pi} and~\eqref{eq:v-erm-opt} applies to the random variables $A$ and $S'$. 

\Cref{thm:dynamic-program} suggests several new important and surprising properties for the RASR-ERM objective~\eqref{eq:RASR-erm-return}. The first property that follows from the DP equations in \cref{thm:dynamic-program} is that the RASR-ERM objective~\eqref{eq:RASR-erm-return} is equivalent to a risk-averse RL problem with the mean posterior transition model $\bar{p}$ defined in \cref{thm:dynamic-program}. 
\begin{corollary} \label{thm:average-model}
The return for each policy $\pi \in \Pi_{MR}$ satisfies that
\[
\erm^{\alpha} \big[\erm^{\alpha}[\mathfrak{R}_{T}^\pi \mid P]\big] = \erm^{\alpha}\left[\mathfrak{R}_{T}^\pi \mid P = \bar{p}\right],
\]
where $\bar{p} = (\bar{p}_t)_{t=0}^{T-1}$ is defined as in \cref{thm:dynamic-program}.
\end{corollary}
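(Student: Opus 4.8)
The plan is to show both sides of the claimed identity equal $v_0^\pi(s_0)$, and then invoke \cref{thm:dynamic-program}. The left-hand side, $\erm^{\alpha}\big[\erm^{\alpha}[\mathfrak{R}_{T}^\pi \mid P]\big]$, is exactly the RASR-ERM return, which by the definition of $v^\pi$ in~\eqref{eq:erm-rasr-v} (at $t=0$, where the risk level $\alpha\cdot\gamma^0=\alpha$) equals $v_0^\pi(s_0)$; and by \cref{thm:dynamic-program}, $v^\pi$ is the unique solution of the Bellman recursion~\eqref{eq:v-erm-pi} driven by the mean posterior model $\bar p_t$. So it suffices to identify the right-hand side $\erm^{\alpha}\left[\mathfrak{R}_{T}^\pi \mid P=\bar p\right]$ with the value function of the \emph{same} recursion. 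For that I would introduce the value function of the ordinary (non-robust) risk-averse MDP whose transition model is the deterministic collection $\bar p = (\bar p_t)_{t=0}^{T-1}$, namely $\tilde v_t^\pi(s) = \erm^{\alpha\cdot\gamma^t}\big[\sum_{t'=t}^{T-1}\gamma^{t'-t}R_{t'}^\pi \mid S_t=s,\ P=\bar p\big]$, and note that $\tilde v_0^\pi(s_0) = \erm^{\alpha}[\mathfrak{R}_T^\pi \mid P=\bar p]$.

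The key step is then to show $\tilde v^\pi$ satisfies the very same Bellman equations~\eqref{eq:v-erm-pi}. This is essentially the single-model version of \cref{thm:dynamic-program}: conditioning on $P=\bar p$ removes epistemic uncertainty, so the proof of \cref{thm:dynamic-program} applied to the (time-inhomogeneous) MDP with transitions $\bar p_t$ gives
\[
\tilde v_t^\pi(s) \;=\; \erm^{\alpha\cdot\gamma^t}\big[r(s,A) + \gamma\cdot \tilde v_{t+1}^\pi(S')\big], \qquad A\sim\pi_t(\cdot\mid s),\ S'\sim \bar p_t(\cdot\mid s,A),
\]
with $\tilde v_T^\pi \equiv 0$. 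Under the hood this uses the tower property (\cref{thm:tower-erm}) to peel off one stage, positive quasi-homogeneity (\cref{thm:pos-quasi-homogen}) to push the discount factor $\gamma$ through the inner ERM and convert the risk level from $\alpha\cdot\gamma^{t+1}$ back to $\alpha\cdot\gamma^t$ on the tail, and translation-invariance of ERM to pull out the immediate reward $r(s,A)$. Since both $v^\pi$ and $\tilde v^\pi$ satisfy the same recursion with the same terminal condition, and that recursion has a unique solution (again by \cref{thm:dynamic-program}), we get $v^\pi = \tilde v^\pi$ for all $t$; evaluating at $t=0$, $s=s_0$ yields $\erm^{\alpha}\big[\erm^{\alpha}[\mathfrak{R}_{T}^\pi \mid P]\big] = v_0^\pi(s_0) = \tilde v_0^\pi(s_0) = \erm^{\alpha}[\mathfrak{R}_T^\pi \mid P=\bar p]$, which is the claim.

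The main obstacle is making the reduction of the inner ERM over $P$ to the mean model $\bar p$ rigorous at each stage — i.e. justifying that $\erm^{\alpha\cdot\gamma^t}\big[\erm^{\alpha\cdot\gamma^t}[\,\cdot \mid P_t]\big]$ collapses to evaluation at $\bar p_t$. The point is that inside one Bellman backup the next-state value $v_{t+1}^\pi(S')$ has already been discounted and the risk level is $\alpha\cdot\gamma^t$; the inner ERM over the aleatory next-state draw $S'\sim P_t(\cdot\mid s,a)$ is \emph{linear} in the transition probabilities only after the outer ERM over $P_t$ is applied, so this step relies precisely on the argument already used to prove~\eqref{eq:v-erm-pi} in \cref{thm:dynamic-program} (where the mean $\bar p_t(s'|s,a)=\E[P_t(s'|s,a)]$ appears). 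In other words, most of the work is already encapsulated in \cref{thm:dynamic-program}; the corollary is really the observation that its fixed-point characterization is model-agnostic in the sense that the right-hand sides of~\eqref{eq:v-erm-pi} depend on $P$ only through $\bar p$, so the nested objective and the single mean-model objective are solutions of the same DP. One should also note that the argument is stated for $\pi\in\Pi_{MR}$, matching the scope of \cref{thm:dynamic-program}; no additional care is needed for the infinite-horizon case here since the statement is about a fixed policy's return and the finite/infinite distinction is handled exactly as in \cref{thm:dynamic-program}.
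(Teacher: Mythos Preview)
Your proposal is correct and follows essentially the same route as the paper: both sides are identified with the unique solution of the Bellman recursion~\eqref{eq:v-erm-pi} from \cref{thm:dynamic-program}, since that recursion depends on $P$ only through $\bar p_t$. The paper's own proof is a two-sentence remark to exactly this effect; you have simply spelled out the details of why the single-model value function $\tilde v^\pi$ satisfies the same DP, which is a welcome elaboration but not a different argument.
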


The second important result that follows from \cref{thm:dynamic-program} is that there exists an optimal Markov (as opposed to history-dependent) deterministic policy for the RASR-ERM objective~\eqref{eq:RASR-erm-return}, which is greedy w.r.t.~the optimal value function $v^\star$ defined by~\eqref{eq:v-erm-opt}. However, unlike in risk-neutral MDPs~\cite{Puterman2005}, the optimal RASR-ERM policy may be time-dependent even when the horizon $T$ is large or inifnite.
\begin{theorem} \label{th:optimal_deterministic}
There exists a Markov deterministic optimal policy $\pi\opt = (\pi_t\opt)_{t=0}^{T-1}\in\Pi_{MD}$ for the optimization problem~\eqref{eq:RASR-erm-return}, which is greedy w.r.t.~the optimal value function $v\opt$ defined by~\eqref{eq:v-erm-opt}: 
\begin{equation}\label{eq:pol-greedy}
\pi\opt_t(s) \in \argmax_{a\in\actions} \,\erm^{\alpha \cdot \gamma^t} \big[ r(s,a) + \gamma  \cdot v\opt_{t+1}(S')\big], \quad \forall s\in \states,\;S' \sim \bar{p}_t(\cdot|s,a).
\end{equation}
\end{theorem}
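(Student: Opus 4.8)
The plan is threefold: construct a Markov deterministic policy $\pi^\star$ out of the optimality equation~\eqref{eq:v-erm-opt}; show that its value function is exactly $v^\star$, so that it attains objective value $v^\star_0(s_0)$; and show that $v^\star_0(s_0)$ is an upper bound on the RASR-ERM objective~\eqref{eq:RASR-erm-return} over all of $\Pi_{HR}$. Since $\actions$ is finite, the set on the right of~\eqref{eq:pol-greedy} is nonempty for every $(t,s)$; picking any element defines $\pi^\star_t(s)$, and therefore $\pi^\star = (\pi^\star_t)_{t=0}^{T-1} \in \Pi_{MD}$, which already gives the claimed form and greediness.

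For the second part I would show $v^{\pi^\star} = v^\star$. Because $\pi^\star$ is deterministic, the policy-evaluation recursion~\eqref{eq:v-erm-pi} for $\pi^\star$ reads $v^{\pi^\star}_t(s) = \erm^{\alpha\gamma^t}[r(s,\pi^\star_t(s)) + \gamma v^{\pi^\star}_{t+1}(S')]$ with $S' \sim \bar{p}_t(\cdot\mid s,\pi^\star_t(s))$. Substituting $v^\star$ for $v^{\pi^\star}$ on the right and using that $\pi^\star_t(s)$ maximizes $\erm^{\alpha\gamma^t}[r(s,a) + \gamma v^\star_{t+1}(S')]$ over $a$, the right-hand side equals $\max_a \erm^{\alpha\gamma^t}[r(s,a) + \gamma v^\star_{t+1}(S')] = v^\star_t(s)$ by~\eqref{eq:v-erm-opt}. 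Hence $v^\star$ solves the system~\eqref{eq:v-erm-pi} for $\pi = \pi^\star$ with the same terminal condition $v_T = 0$, and the uniqueness part of~\cref{thm:dynamic-program} gives $v^{\pi^\star} = v^\star$. Evaluating the definition~\eqref{eq:erm-rasr-v} at $t=0$, $s=s_0$ (with $S_0 = s_0$ a.s.) then yields $\erm^{\alpha}[\mathfrak{R}_T^{\pi^\star}] = v^{\pi^\star}_0(s_0) = v^\star_0(s_0)$.

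The remaining and principal step is the bound $\erm^{\alpha}[\mathfrak{R}_T^\pi] \le v^\star_0(s_0)$ for an arbitrary $\pi\in\Pi_{HR}$. Fix such a $\pi$ and, for each reachable history $h_t$ ending in state $s_t$, put $w^\pi_t(h_t) = \erm^{\alpha\gamma^t}[\sum_{t'=t}^{T-1}\gamma^{t'-t} R^\pi_{t'} \mid H_t = h_t]$. Repeating the derivation behind~\cref{thm:dynamic-program} at the level of histories — condition on $(A_t, S_{t+1})$ and apply the tower property (\cref{thm:tower-erm}), translation invariance of ERM (the reward $r(s_t,A_t)$ is known given $A_t$), and positive quasi-homogeneity (\cref{thm:pos-quasi-homogen}) with $c=\gamma$ to turn $\erm^{\alpha\gamma^t}[\gamma\,(\cdot)]$ into $\gamma\,\erm^{\alpha\gamma^{t+1}}[\cdot]$, together with the fact that the one-step law of $S_{t+1}$ given $(S_t,A_t)=(s,a)$ is $\bar{p}_t(\cdot\mid s,a)$ regardless of the past (the models $P_t$ are independent across $t$ and the dynamics are Markov given the model) — produces $w^\pi_t(h_t) = \erm^{\alpha\gamma^t}[r(s_t,A) + \gamma\,w^\pi_{t+1}(H_{t+1})]$ with $A \sim \pi_t(\cdot\mid h_t)$, $S' \sim \bar{p}_t(\cdot\mid s_t,A)$, $H_{t+1}=(h_t,A,S')$. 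I would then prove $w^\pi_t(h_t) \le v^\star_t(s_t)$ by backward induction on $t$: the base case $t=T$ is $0 \le 0$; for the step, monotonicity of ERM and the hypothesis $w^\pi_{t+1}(H_{t+1}) \le v^\star_{t+1}(S')$ give $w^\pi_t(h_t) \le \erm^{\alpha\gamma^t}[r(s_t,A) + \gamma v^\star_{t+1}(S')]$, and one more conditioning on $A$ via~\cref{thm:tower-erm} bounds each inner term by $\max_a \erm^{\alpha\gamma^t}[r(s_t,a) + \gamma v^\star_{t+1}(S')] = v^\star_t(s_t)$, again by~\eqref{eq:v-erm-opt}. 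Taking $t=0$ gives $\erm^{\alpha}[\mathfrak{R}_T^\pi] = w^\pi_0(s_0) \le v^\star_0(s_0)$, which together with the second part shows $\pi^\star$ attains the supremum in~\eqref{eq:RASR-erm-return}.

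The hard part is this last step: carrying the dynamic-programming decomposition over to genuinely history-dependent policies while keeping track of the shrinking risk levels $\alpha\gamma^t$. What makes it go through is that (a) independence of the $P_t$'s forces the marginal one-step transition to be $\bar{p}_t$ no matter the history, so the past enters the relevant conditional laws only through the current state, and (b) positive quasi-homogeneity is precisely the identity that lets the discount factor be absorbed into a risk-level change, so the induction closes on the single value function $v^\star$ rather than on a family indexed by risk level. For $T=\infty$ with $\gamma\in(0,1)$, the backward induction is replaced by a limiting argument over finite-horizon truncations, using boundedness of the rewards and $\alpha\gamma^t \to 0$ to control the discarded tail (or one invokes the infinite-horizon form of~\cref{thm:dynamic-program} directly).
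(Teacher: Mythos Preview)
Your proposal is correct and follows essentially the same route as the paper: both arguments establish the history-level Bellman recursion (using the tower property, translation invariance, positive quasi-homogeneity, and the reduction of the one-step law to $\bar{p}_t$) and then compare it to the Markov optimality equation~\eqref{eq:v-erm-opt} by backward induction. The only cosmetic differences are that you invoke the uniqueness clause of \cref{thm:dynamic-program} to get $v^{\pi^\star}=v^\star$ (the paper redoes a short backward induction), and you need only the inequality $w^\pi_t(h_t)\le v^\star_t(s_t)$ (i.e., $\erm[g(A)]\le\max_a g(a)$ via monotonicity), whereas the paper proves the equality $u^\star_t(h_t)=v^\star_t(s_t)$ and therefore appeals to the full deterministic-action lemma.
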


The existence of optimal deterministic policies in RASR-ERM is surprising since many risk-averse and soft-robust formulations require randomization~\cite{Delage2019,Lobo2021,Steimle2021a}. \journal{Maybe elaborate here in the journal version.} Also surprisingly, RASR-ERM does not admit a stationary optimal policy in the infinite-horizon discounted setting because of the time-dependent risk-level in RASR-ERM dynamic program. Finally, note that the above results provide stronger guarantees than the DP equations for the existing soft-robust MDP formulations~\cite{Eriksson2020,Lobo2021}. Using time-dependent risk-levels in our DP formulation in \cref{thm:dynamic-program} guarantees that the optimal value function solves the objective~\eqref{eq:RASR-erm-return} optimally. This is in contrast to other soft-robust formulations, which do not admit dynamic program formulations~\cite{Lobo2021}. 


\subsection{Algorithms for Optimizing RASR-ERM}
\label{subsec:VI-RASR-ERM}

We now turn to algorithms that can compute RASR-ERM value functions and policies. With the \emph{finite-horizon} objective ($T<\infty$), the optimal value function can be computed by adapting the standard value iteration (VI) to this setting. This algorithm computes the optimal value function $v\opt_t$ backwards in time $t = T, T-1, \dots , 0$ according to~\eqref{eq:v-erm-opt}. The optimal policy is greedy to $v\opt$ and can be computed by solving the discrete optimization problem in~\eqref{eq:pol-greedy}. We include the full algorithms in the appendix in \cref{app:sec:ERM}.

Solving the \emph{infinite-horizon} problem is considerably more challenging than the finite-horizon problem, because the risk level $\alpha$ and the optimal policy are in general time dependent. The simplest way to address this issue is to simply truncate the horizon to some $T' < \infty$ and resort to an arbitrary policy for any $t > T'$. The significant limitation to truncating the horizon is that $T'$ may need to be very large to achieve a reasonably-small approximation error. As is standard in infinite-horizon settings, we assume that the transition probabilities are \emph{stationary}. That is, there exists a transition function $P$ such that $P = P_t$ for all $t = 0, \dots, \infty$. As a result, the mean transition probability $\bar{p}_t$ is also stationary and we omit the subscript $t$ throughout this section. 

In \cref{alg:rasr-vi-inf}, we propose an approximation that is superior to a truncated planning horizon. The algorithms works as follows. First, it computes the optimal stationary risk-neutral value function $v^{\infty}$ and policy $\pi^{\infty}$ using value iteration or policy iteration~\cite{Puterman2005}. The policy $\pi^\infty$ is used for all time steps $ t > T'$ and the value function $v^{\infty}$ is used to approximate $v\opt_{T'}$. This approach takes an advantage of the fact that the risk level $\alpha \cdot \gamma^t$ in~\eqref{eq:v-erm-opt} approaches $0$ as $t \to  \infty$. This means that the ERM value function becomes ever closer to the optimal risk-neutral discounted value function $v^{\infty}$.  

\begin{algorithm}
    \KwIn{Planning horizon $T' < \infty$, risk level $\alpha > 0$}
    \KwOut{Optimal policy $\;\hat\pi\opt = (\hat\pi_t\opt)_{t=0}^\infty\;$ and value function $\;\hat{v}\opt = (\hat{v}_t\opt)_{t=0}^\infty$}
    Compute optimal $\;v^{\infty}\;$ and $\;\pi^{\infty}\;$ as a solution to the infinite-horizon discounted MDP with $\;\bar{p}$ \;
    Compute $\;(\tilde{v}\opt_t)_{t=0}^{T'}\;$ and $\;(\tilde{\pi}\opt_t)_{t=0}^{T'-1}\;$ using ~\eqref{eq:v-erm-opt} and~\eqref{eq:pol-greedy} with horizon $\;T'\;$ and terminal value $\;\tilde{v}\opt_{T'} = v^{\infty}$\;
    Construct a policy $\;(\hat{\pi}\opt_{t})_{t=0}^{\infty}\;$, where $\;\hat{\pi}\opt_t = \pi^{\infty}$ when $\;t \ge T'\;$  and $\;\hat{\pi}\opt_t = \tilde{\pi}\opt_t\;$, otherwise \;
    Construct $\;\hat{v}\opt\;$ analogously to $\;\hat{\pi}\opt$\;
\Return{$\;\hat\pi\opt\;$, $\;\hat{v}\opt$}
  \caption{VI for infinite-horizon RASR-ERM} \label{alg:rasr-vi-inf}
\end{algorithm}

To quantify the quality of the policy $\hat\pi\opt$ returned by \cref{alg:rasr-vi-inf}, we now derive a bound on its performance loss. In particular, we focus on how quickly the error decreases as a function of the planning horizon $T'$. This bound can be used both to determine the planning horizon and to quantify the improvement of \cref{alg:rasr-vi-inf} over simply truncating the planning horizon. 
\begin{theorem}\label{thm:approx-error}
The performance loss of a policy $\hat{\pi}\opt$ returned by \cref{alg:rasr-vi-inf} for a discount factor $\gamma < 1$ decreases with $T'$ as
\begin{align*}
\erm^{\alpha} \big[ \mathfrak{R}_{\infty}^{\pi\opt} \mid P=\bar{p} \big] - 
\erm^{\alpha} \big[ \mathfrak{R}_{\infty}^{\hat{\pi}\opt} \mid P=\bar{p} \big]
  \;\le\;
c \cdot \gamma^{2 T'} ~,
\end{align*}
where $\pi\opt$ is optimal in~\eqref{eq:RASR-erm-return} and $c = 8^{-1} \alpha \cdot (\triangle r)^{2}  (1-\gamma)^{-2}$.
\end{theorem}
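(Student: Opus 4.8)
The plan is to bound the performance loss by comparing $\hat\pi\opt$ with $\pi\opt$ through the finite-horizon value functions and then controlling the two sources of error: (i) using the risk-neutral value $v^\infty$ in place of the true finite-horizon RASR-ERM value at time $T'$, and (ii) using the stationary risk-neutral policy $\pi^\infty$ for $t \ge T'$ instead of the optimal continuation. The key observation, already emphasized in the text, is that the risk level at time $t$ is $\alpha\gamma^t$, so at the switchover time $T'$ it is $\alpha\gamma^{T'}$, which is small; this is what produces a $\gamma^{2T'}$ rather than $\gamma^{T'}$ rate.

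First I would establish a pointwise bound of the form $\lvert \erm^{\eta}[Y] - \E[Y]\rvert \le \tfrac{\eta}{8}\,(\mathrm{span}\,Y)^2$ for a bounded random variable $Y$ and small $\eta \ge 0$. This is a standard Hoeffding-type estimate: writing $g(\eta) = -\eta^{-1}\log\E[e^{-\eta Y}]$, one has $g(0) = \E[Y]$ and $g$ is the log-moment-generating-function rescaling, so $\E[Y] - \erm^\eta[Y] = \eta^{-1}\log\E[e^{-\eta(Y - \E Y)}] \le \eta^{-1}\cdot \tfrac{\eta^2 (\mathrm{span}\,Y)^2}{8} = \tfrac{\eta}{8}(\mathrm{span}\,Y)^2$ by Hoeffding's lemma, and the lower bound is similar (ERM is never above the mean). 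Applied with $Y = \mathfrak{R}_\infty^{\pi}\mid P = \bar p$ from state $s$ at time $T'$ onward (appropriately discounted), the relevant span is at most $\triangle r \cdot (1-\gamma)^{-1}$, and the effective risk level carried to that tail is $\alpha\gamma^{T'}$, giving a tail discrepancy of at most $\tfrac{\alpha\gamma^{T'}}{8}(\triangle r)^2 (1-\gamma)^{-2}$ at the level of the continuation value. Then, since the value at time $T'$ is discounted back by a further factor $\gamma^{T'}$ in the time-$0$ objective, and since (by \cref{thm:dynamic-program} plus \cref{thm:pos-quasi-homogen}) the Bellman backups from $0$ to $T'$ are monotone and do not amplify a uniform error in $v_{T'}$ by more than the discount factor, the two $\gamma^{T'}$ factors multiply to yield the claimed $\gamma^{2T'}$.

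Concretely, the steps in order: (1) prove the Hoeffding-type sandwich $\erm^\eta[Y] \le \E[Y]$ and $\E[Y] - \erm^\eta[Y] \le \tfrac{\eta}{8}(\mathrm{span}\,Y)^2$; (2) use \cref{thm:average-model} to replace both $\erm^\alpha[\erm^\alpha[\cdot\mid P]]$ quantities by $\erm^\alpha[\cdot \mid P=\bar p]$, so we work entirely in a single MDP with transitions $\bar p$; (3) bound the suboptimality of $\hat\pi\opt$: since $\tilde\pi\opt$ is exactly optimal for the $T'$-horizon problem with terminal value $v^\infty$, the only loss comes from (a) $v^\infty \ne v\opt_{T'}$ at the cutoff and (b) running $\pi^\infty$ rather than the true optimal continuation beyond $T'$ — both of which are controlled by step (1) applied to the tail return with risk level $\alpha\gamma^{T'}$; (4) propagate these errors back through the monotone, $\gamma$-contracting Bellman recursion~\eqref{eq:v-erm-opt} from $T'$ to $0$, picking up the extra $\gamma^{T'}$, and collect constants to get $c = \tfrac18\alpha(\triangle r)^2(1-\gamma)^{-2}$.

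The main obstacle I anticipate is step (4) together with the precise interface between the two error terms in step (3): one must argue that comparing $\erm^{\alpha\gamma^t}$-backups of two value functions that differ by a uniform constant $\delta$ yields value functions differing by at most $\gamma\delta$ — this needs ERM's translation-equivariance ($\erm^\eta[Y+\delta] = \erm^\eta[Y]+\delta$) and monotonicity, which hold, but one must be careful that the risk level changes with $t$ so the usual stationary contraction argument has to be run through the time-indexed operators of \cref{thm:dynamic-program}. A second subtlety is that the tail random variable $\mathfrak{R}_\infty^{\pi^\infty}\mid P=\bar p$ beyond time $T'$ must be compared to that of the genuinely optimal tail policy; here one uses that $v^\infty$ is the \emph{risk-neutral} optimum, so $\E[\text{tail under }\pi^\infty] \ge \E[\text{tail under any policy}]$, and then sandwiches both the optimal $\erm^{\alpha\gamma^{T'}}$-tail-value and $v^\infty$ between $\E$ and $\E - \tfrac{\alpha\gamma^{T'}}{8}(\triangle r)^2(1-\gamma)^{-2}$ using step (1), so their difference is at most that same quantity. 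Everything else is bookkeeping with geometric series.
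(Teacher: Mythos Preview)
Your proposal is correct and follows essentially the same route as the paper: the paper also proves the Hoeffding-type bound $\E[X]-\erm^{\alpha}[X]\le \tfrac{\alpha}{8}(\mathrm{span}\,X)^2$ as a lemma, uses it together with risk-neutral optimality of $\pi^{\infty}$ to bound $v\opt_{T'}-v^{\hat\pi\opt}_{T'}\le \tfrac{\alpha\gamma^{T'}(\triangle r)^2}{8(1-\gamma)^2}$, and then propagates this error from $T'$ to $0$ by backward induction using exactly the monotonicity and translation-equivariance of $\erm^{\alpha\gamma^t}$ that you identify. The only cosmetic difference is that the paper packages the propagation step by introducing an auxiliary sequence $u_t=\tilde v\opt_t-\gamma^{T'-t}\epsilon$ (which is a uniform shift of $\tilde v\opt_t$, so $\hat\pi\opt$ remains greedy to it) and shows $u_t\le v^{\hat\pi\opt}_t$ and $v\opt_t-u_t\le\gamma^{T'-t}\epsilon$; your direct comparison of $v^{\hat\pi\opt}$ with $\tilde v\opt$ amounts to the same computation.
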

The proof of \cref{thm:approx-error} uses the Hoeffding's lemma to bound the error between ERM and the expectation and propagates the error backwards using standard dynamic programming techniques. 

Analysis analogous to \cref{thm:approx-error} shows that when one truncates the horizon at $T'$ and follows an arbitrary policy thereafter, the performance loss decreases proportionally to $\gamma^{T'}$ as opposed to $\gamma^{2 T'}$. As a result, truncating a policy requires at least double the planning horizon $T'$ to achieve the same approximation guarantee as \cref{alg:rasr-vi-inf}.

In practice, one can compute bounds that are tighter than \cref{thm:approx-error} by computing both an upper bound on the optimal value function and a lower bound on the value of the policy. It is easy to see that $v^{\infty }$ is an upper bound on $v\opt$, which can be used to compute an upper bound on $v_0\opt$ and, therefore, an upper bound on the performance loss. We give more details in \cref{app:sec:ERM}.


\section{RASR-EVaR Framework} \label{sec:rasr-evar}

In this section, we introduce and analyze RASR with the EVaR objective, which we refer to as the RASR-EVaR framework. As mentioned in \cref{sec:preliminaries}, EVaR is preferable to ERM because it is coherent, positively-homogenous, interpretable, and comparable with VaR and CVaR. The main challenge with RASR-EVaR is that EVaR does not satisfy the tower property in \cref{thm:tower-erm} (or, equivalently, it is \emph{not} dynamically consistent), and thus, cannot be directly optimized using a DP.  Our main contribution here is to show that despite this issue, it is possible to solve RASR-EVaR by extending the algorithms developed for RASR-ERM in \cref{sec:rasr-erm}. The detailed proofs of all the results of this section are reported in \cref{app:sec:EVaR}.

The RASR-EVaR formulation assumes the same setting as in~\eqref{eq:RASR-erm-return} with the following objective:
\begin{equation} \label{eq:RASR-evar-return}
\max_{\pi\in\Pi_{HR}}  \evar^{\beta} \left[ \mathfrak{R}_{T}^\pi \right] ~.
\end{equation}
The EVaR operator in~\eqref{eq:RASR-evar-return} applies simultaneously to both epistemic and aleatory uncertainties over returns. Because EVaR does not satisfy the tower property, it is impossible to rewrite~\eqref{eq:RASR-evar-return} using separate risk for the aleatory and epistemic uncertainty, similarly to~\eqref{eq:RASR-erm-return}. We use $\pi\opt$ throughout this section to denote an optimal policy in~\eqref{eq:RASR-evar-return}. Prior work on EVaR in MDPs focuses exclusively on the nested (or Markov) risk formulation~\cite{Ahmadi2021a,Ahmadi2021b,Dixit2021}, which is typically overly conservative~\cite{Iancu2015a}.

Our main idea is to reformulate the RASR-EVaR objective in~\eqref{eq:RASR-evar-return} using the EVaR definition in~\eqref{eq:defn_evar} in terms of a sequence of ERM formulations as
\begin{equation}\label{eq:rasr-evar-ref}
\max_{\pi\in\Pi_{HR}}  \evar^{\beta} \left[ \mathfrak{R}_{T}^\pi \right] = \max_{\alpha \ge 0} \underbrace{\max_{\pi\in\Pi_{HR}}\, \left(\erm^\alpha[\mathfrak{R}_{T}^\pi] + \alpha^{-1} \cdot \log(1-\beta) \right)}_{= h(\alpha)}~.
\end{equation}
The equality above follows by swapping the order of maximization operators and since $\mathfrak{R}_{T}^\pi$ is bounded, the $\sup$ is attained. The equality in~\eqref{eq:rasr-evar-ref} indicates that any RASR-EVaR optimal policy must also be RASR-ERM optimal for some $\alpha\opt$. This allows us to directly carry over the following results from the RASR-ERM setting to RASR-EVaR.
\begin{theorem} \label{thm:equivalence-evar-erm}
Let $\pi\opt$ be an optimal solution to RASR-EVaR in~\eqref{eq:RASR-evar-return}. Then, there exists a risk level $\alpha\opt$ such that $\pi\opt$ is optimal in RASR-ERM (Eq.~\ref{eq:RASR-erm-return}) with $\alpha = \alpha\opt$.
\end{theorem}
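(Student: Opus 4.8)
The plan is to prove this directly from the variational definition of EVaR in~\eqref{eq:defn_evar} and the identity~\eqref{eq:rasr-evar-ref}, without invoking any of the dynamic-programming machinery of \cref{sec:rasr-erm}. First I would fix an EVaR-optimal policy $\pi\opt$, so $\evar^{\beta}[\mathfrak{R}_{T}^{\pi\opt}] = \max_{\pi\in\Pi_{HR}}\evar^{\beta}[\mathfrak{R}_{T}^{\pi}]$. Since the rewards are bounded and, in the infinite-horizon case, $\gamma<1$, the return $\mathfrak{R}_{T}^{\pi\opt}$ has bounded support, so by the remark following~\eqref{eq:defn_evar} the supremum over $\alpha>0$ in $\evar^{\beta}[\mathfrak{R}_{T}^{\pi\opt}] = \sup_{\alpha>0}\bigl(\erm^{\alpha}[\mathfrak{R}_{T}^{\pi\opt}] + \alpha^{-1}\log(1-\beta)\bigr)$ is attained; call a maximizer $\alpha\opt$ (finite and positive when $\beta\in(0,1)$). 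When $\beta=0$ both objectives reduce to the risk-neutral MDP and the claim holds trivially with $\alpha\opt=0$, so assume $\beta\in(0,1)$ below.

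Next I would show this same $\alpha\opt$ makes $\pi\opt$ optimal for RASR-ERM~\eqref{eq:RASR-erm-return}, arguing by contradiction. Suppose some $\pi'\in\Pi_{HR}$ satisfies $\erm^{\alpha\opt}[\mathfrak{R}_{T}^{\pi'}] > \erm^{\alpha\opt}[\mathfrak{R}_{T}^{\pi\opt}]$. Because $\evar^{\beta}[X]$ is, by definition, an upper bound of $\erm^{\alpha}[X]+\alpha^{-1}\log(1-\beta)$ for every $\alpha>0$, and using that $\alpha\opt$ attains the EVaR supremum for $\pi\opt$,
\begin{align*}
\evar^{\beta}\!\left[\mathfrak{R}_{T}^{\pi'}\right]
&\;\ge\; \erm^{\alpha\opt}\!\left[\mathfrak{R}_{T}^{\pi'}\right] + (\alpha\opt)^{-1}\log(1-\beta) \\
&\;>\; \erm^{\alpha\opt}\!\left[\mathfrak{R}_{T}^{\pi\opt}\right] + (\alpha\opt)^{-1}\log(1-\beta)
\;=\; \evar^{\beta}\!\left[\mathfrak{R}_{T}^{\pi\opt}\right].
\end{align*}
This contradicts the EVaR-optimality of $\pi\opt$, so no such $\pi'$ exists, i.e. $\erm^{\alpha\opt}[\mathfrak{R}_{T}^{\pi\opt}] = \max_{\pi\in\Pi_{HR}}\erm^{\alpha\opt}[\mathfrak{R}_{T}^{\pi}]$ and $\pi\opt$ is RASR-ERM optimal at $\alpha=\alpha\opt$. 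In effect this is just reading the equality~\eqref{eq:rasr-evar-ref} at the level of maximizers: the outer $\max_{\alpha\ge 0}$ is attained at $\alpha\opt$, and the inner maximization defining $h(\alpha\opt)$ is attained at $\pi\opt$.

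The only genuinely load-bearing ingredient is the attainment of the supremum in~\eqref{eq:defn_evar} for $\mathfrak{R}_{T}^{\pi\opt}$ — that $\alpha\opt$ exists (and is finite for $\beta\in(0,1)$); everything else is the short pointwise-inequality argument above. This attainment is exactly the statement quoted below~\eqref{eq:defn_evar} and rests on boundedness of the return random variable, which holds in the finite-horizon setting trivially and in the infinite-horizon discounted setting because $\gamma<1$ bounds $\mathfrak{R}_{\infty}^{\pi}$ by $\triangle r/(1-\gamma)$ up to an additive constant. I would also remark that the theorem requires $\alpha\opt$ to lie in $\Real_+\cup\{\infty\}$ (the admissible ERM risk levels), which is automatic since $\alpha\opt\in(0,\infty)$ for $\beta\in(0,1)$ and $\alpha\opt=0$ for $\beta=0$.
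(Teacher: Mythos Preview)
Your proof is correct and follows essentially the same route as the paper's: both pivot on the variational identity~\eqref{eq:rasr-evar-ref} and the pointwise inequality $\evar^{\beta}[X] \ge \erm^{\alpha}[X] + \alpha^{-1}\log(1-\beta)$ to derive a contradiction from a hypothetical ERM-improving policy. The paper phrases the argument as a contrapositive (``not ERM-optimal for any $\alpha$'' $\Rightarrow$ ``not EVaR-optimal''), whereas you argue directly by first fixing $\alpha\opt$ as the risk level attaining the EVaR supremum for $\pi\opt$ and then running the same inequality chain; your version is slightly more explicit in identifying $\alpha\opt$, but the substance is identical.
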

\cref{thm:equivalence-evar-erm} combined with the properties of RASR-ERM, shown in \cref{sec:rasr-erm}, can be used to establish the following properties for RASR-EVaR.
\begin{corollary} \label{cor:evar-markov}
The RASR-EVaR setting (Eq.~\ref{eq:RASR-evar-return}) has a Markov and deterministic optimal policy $\pi\opt\in\Pi_{MD}$. Moreover, for any policy $\pi \in \Pi_{MR}$, the RASR-EVaR objective~\eqref{eq:RASR-evar-return} equals to 
\[
\evar^{\beta} \left[  \mathfrak{R}_{T}^\pi \right] =  \evar^{\beta} \left[ \mathfrak{R}_{T}^\pi \mid P = \bar{p} \right]\;,
\]
where $\bar{p}$ is defined as in \cref{thm:average-model}.
\end{corollary}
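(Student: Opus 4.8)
The plan is to deduce both claims from the RASR-ERM results of \cref{sec:rasr-erm} together with the EVaR-to-ERM reduction \eqref{eq:rasr-evar-ref} and \cref{thm:equivalence-evar-erm}; no new estimates are needed, only careful bookkeeping. For the existence of a Markov deterministic optimal policy, I first argue that a RASR-EVaR optimal policy exists: since each $\mathfrak{R}_T^\pi$ is bounded, the supremum defining $\evar^\beta$ is attained, and \eqref{eq:rasr-evar-ref} together with \cref{th:optimal_deterministic} (which reduces the inner $\max_\pi$ to a maximum over the finite class $\Pi_{MD}$ for the relevant risk level) shows the outer maximum is attained, so there is an optimal $\pi\opt\in\Pi_{HR}$. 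By \cref{thm:equivalence-evar-erm} there is a risk level $\alpha\opt\ge 0$ — which can be taken to be the maximizer of $h$ in \eqref{eq:rasr-evar-ref}, so that $\evar^\beta[\mathfrak{R}_T^{\pi\opt}] = \erm^{\alpha\opt}[\mathfrak{R}_T^{\pi\opt}] + (\alpha\opt)^{-1}\log(1-\beta)$ — for which $\pi\opt$ is RASR-ERM optimal. Now let $\tilde\pi\in\Pi_{MD}$ be a Markov deterministic RASR-ERM optimal policy at $\alpha\opt$, which exists by \cref{th:optimal_deterministic}. Then $\erm^{\alpha\opt}[\mathfrak{R}_T^{\tilde\pi}] = \erm^{\alpha\opt}[\mathfrak{R}_T^{\pi\opt}]$, and combining the pointwise inequality $\evar^\beta[\mathfrak{R}_T^{\tilde\pi}] \ge \erm^{\alpha\opt}[\mathfrak{R}_T^{\tilde\pi}] + (\alpha\opt)^{-1}\log(1-\beta)$ (from \eqref{eq:defn_evar}) with the tightness of this bound at $\pi\opt$ yields $\evar^\beta[\mathfrak{R}_T^{\tilde\pi}] \ge \evar^\beta[\mathfrak{R}_T^{\pi\opt}]$; since $\pi\opt$ is RASR-EVaR optimal, so is $\tilde\pi\in\Pi_{MD}$.

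For the mean-posterior identity, fix $\pi\in\Pi_{MR}$. By the tower property (\cref{thm:tower-erm}), $\erm^{\alpha}[\mathfrak{R}_T^\pi] = \erm^{\alpha}\big[\erm^{\alpha}[\mathfrak{R}_T^\pi \mid P]\big]$ for every $\alpha\ge 0$, and \cref{thm:average-model} identifies the right-hand side with $\erm^{\alpha}[\mathfrak{R}_T^\pi \mid P=\bar p]$; hence $\erm^{\alpha}[\mathfrak{R}_T^\pi] = \erm^{\alpha}[\mathfrak{R}_T^\pi \mid P=\bar p]$ for all $\alpha$. Adding $\alpha^{-1}\log(1-\beta)$ to both sides and taking $\sup_{\alpha>0}$, the definition \eqref{eq:defn_evar} gives $\evar^\beta[\mathfrak{R}_T^\pi] = \evar^\beta[\mathfrak{R}_T^\pi \mid P=\bar p]$, which is the claim.

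The main obstacle is the bookkeeping in the first part: \cref{thm:equivalence-evar-erm} only guarantees that \emph{some} EVaR-optimal policy is ERM-optimal at some $\alpha\opt$, while \cref{th:optimal_deterministic} produces a Markov deterministic policy that is ERM-optimal but a priori different; the crux is to see that tightness of the bound $\evar^\beta\ge\erm^{\alpha\opt}+(\alpha\opt)^{-1}\log(1-\beta)$ at $\pi\opt$ (which holds precisely because $\alpha\opt$ is the EVaR-maximizing risk level for $\pi\opt$) transfers EVaR-optimality to \emph{every} ERM-optimal policy at $\alpha\opt$, in particular to the deterministic one. One should also dispatch the degenerate case $\beta=0$ separately, where $\evar^0=\E$, $\alpha\opt$ may equal $0$, and the statement is the classical existence of Markov deterministic optimal policies for the mean-posterior MDP, and confirm that boundedness of the returns is what makes the suprema over $\alpha$ — and hence $\pi\opt$ and $\alpha\opt$ — exist at all.
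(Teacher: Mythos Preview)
Your proposal is correct and follows essentially the same route as the paper: reduce the first claim to \cref{thm:equivalence-evar-erm} and \cref{th:optimal_deterministic}, and obtain the second claim by applying \cref{thm:average-model} inside the $\sup_{\alpha>0}$ defining $\evar^\beta$. In fact your version is more careful than the paper's, which silently skips the step you flag as the ``main obstacle''---that ERM-optimality at $\alpha\opt$ together with tightness of the bound at $\pi\opt$ transfers EVaR-optimality to the Markov deterministic policy $\tilde\pi$.
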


We are now ready to describe our algorithms for solving the RASR-EVaR objective given in \cref{alg:ERM_EVAR}. The algorithm takes advantage of the fact that the optimization problem $\max_{\alpha \ge 0} h(\alpha)$ is single-dimensional. The algorithm searches a grid  of candidate $\alpha$ values. Each $h(\alpha)$ is computed via the RASR-ERM algorithms described in \cref{sec:rasr-erm}.

\begin{algorithm} 
    \KwIn{Discretized risk levels $\Lambda = \{\;\alpha_0 \ge \dots \ge \alpha_K > 0 \}$}
    \KwOut{RASR-EVaR optimized policy $\;\hat{\pi}\opt$}
    \For{$k = 0,\ldots,K$}{
    Compute $\;v^k\;$ and $\;\pi^k\;$ by solving the RASR-ERM problem with risk level $\;\alpha_k$
    }
    Let $\;k\opt  \gets \argmax_{k=0,\ldots,K} \;v_0^k(s_0) + \alpha_k^{-1} \cdot \log (1-\beta)$\;
    \Return{Policy $\;\hat{\pi}\opt = \pi^{k\opt}$}
    \caption{Algorithm for RASR-EVaR}    \label{alg:ERM_EVAR}
\end{algorithm}

\cref{alg:ERM_EVAR} resorts to discretizing $\alpha$ values because $h(\alpha)$ is non-concave in general (see \cref{prop:non-concave}), and thus, cannot be maximized using more efficient algorithms. Our key contribution is that we use the properties of $h$ to show that a specific discrete grid of points can be used to compute a good solutions without an excessive computational burden. 
\begin{theorem} \label{thm:rasr-evar-bound}
Suppose that \cref{alg:ERM_EVAR} uses discretized risk levels $\Lambda$ that satisfy that
\[
  \alpha_k =  \frac{-\log(1-\beta)}{k \cdot \delta },\quad  k\in \{0,\ldots,K\}, \qquad\quad  K \geq \sqrt{\frac{-\log(1-\beta)}{8 }} \cdot \frac{\triangle r}{ (1-\gamma) \cdot  \delta }~,
\]
where $\delta > 0$ and $\alpha_0 = \infty$ (note $\erm^\infty[\cdot] = \operatorname{ess}\inf[\cdot]$). Then, the performance loss of the policy $\hat{\pi}\opt$ returned by \cref{alg:ERM_EVAR} is bounded by $\;\evar^{\beta} \left[ \mathfrak{R}_{\infty}^{\pi\opt} \mid P=\bar{p} \right] - 
\evar^{\beta} \left[ \mathfrak{R}_{\infty}^{\hat{\pi}\opt} \mid P=\bar{p} \right]
\le \delta$.
\end{theorem}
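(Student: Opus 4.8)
The plan is to run the reduction in~\eqref{eq:rasr-evar-ref}. Write $z \defn -\log(1-\beta) > 0$, $D \defn \triangle r/(1-\gamma)$, and $h(\alpha) \defn \max_{\pi} \big(\erm^{\alpha}[\mathfrak R_\infty^\pi] + \alpha^{-1}\log(1-\beta)\big)$, so that the RASR-EVaR optimum is $\max_{\alpha \ge 0} h(\alpha)$ and, by \cref{cor:evar-markov}, equals $\evar^\beta[\mathfrak R_\infty^{\pi\opt} \mid P = \bar p]$. By \cref{thm:tower-erm}, $\erm^{\alpha}[\mathfrak R_\infty^\pi] = -\alpha^{-1}\log \E[e^{-\alpha\,\mathfrak R_\infty^\pi}]$ with the expectation over the full epistemic--aleatory law, and $\mathfrak R_\infty^\pi$ lies in an interval of length at most $D$ (since $\gamma < 1$). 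Each RASR-ERM solve in \cref{alg:ERM_EVAR} yields $v_0^k(s_0) = \max_\pi \erm^{\alpha_k}[\mathfrak R_\infty^\pi]$, so line~3 maximizes exactly $h(\alpha_k)$, and the returned policy $\hat\pi\opt = \pi^{k\opt}$ obeys $\evar^\beta[\mathfrak R_\infty^{\hat\pi\opt} \mid P = \bar p] \ge \erm^{\alpha_{k\opt}}[\mathfrak R_\infty^{\hat\pi\opt}] + \alpha_{k\opt}^{-1}\log(1-\beta) = h(\alpha_{k\opt}) = \max_k h(\alpha_k)$, using the EVaR definition and that $\pi^{k\opt}$ attains $v_0^{k\opt}(s_0)$. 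On the other side, \cref{thm:equivalence-evar-erm} together with~\eqref{eq:rasr-evar-ref} supplies a level $\alpha\opt$ that is both the outer maximizer and one at which $\pi\opt$ is RASR-ERM optimal; hence $h(\alpha\opt) = \erm^{\alpha\opt}[\mathfrak R_\infty^{\pi\opt}] + \alpha\opt^{-1}\log(1-\beta) = \evar^\beta[\mathfrak R_\infty^{\pi\opt}\mid P=\bar p]$, and $\alpha\opt$ maximizes the single-policy function $g(\alpha) \defn \erm^{\alpha}[\mathfrak R_\infty^{\pi\opt}] + \alpha^{-1}\log(1-\beta)$. Consequently the performance loss is at most $h(\alpha\opt) - \max_k h(\alpha_k) \le g(\alpha\opt) - g(\alpha_k)$ for every $k$, and it suffices to find one grid index $k$ with $g(\alpha\opt) - g(\alpha_k) \le \delta$.

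The grid is uniform in the reciprocal variable: $\{\alpha_k^{-1}\}_{k=0}^{K} = \{0,\; \delta/z,\; 2\delta/z,\; \dots,\; K\delta/z\}$. Suppose, for the moment, that $\alpha\opt \ge \alpha_K$, i.e., $\alpha\opt^{-1} \le K\delta/z$. Then there is an index $k \le K$ with $\alpha_k^{-1} \ge \alpha\opt^{-1}$ and $\alpha_k^{-1} - \alpha\opt^{-1} \le \delta/z$; in particular $\alpha_k \le \alpha\opt$. Decomposing, $g(\alpha\opt) - g(\alpha_k) = \big(\erm^{\alpha\opt}[\mathfrak R_\infty^{\pi\opt}] - \erm^{\alpha_k}[\mathfrak R_\infty^{\pi\opt}]\big) + (\alpha\opt^{-1} - \alpha_k^{-1})\log(1-\beta)$: the first term is nonpositive because $\erm^{\alpha}$ is nonincreasing in the risk level and $\alpha_k \le \alpha\opt$, and the second equals $(\alpha_k^{-1} - \alpha\opt^{-1})\,z \le \delta$. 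Hence $g(\alpha\opt) - g(\alpha_k) \le \delta$, which finishes the proof once $\alpha\opt \ge \alpha_K$ is established. This step uses only monotonicity of ERM in the risk level and the grid spacing; the inclusion of $\alpha_0 = \infty$ handles the degenerate case $\alpha\opt = \infty$ (where $\erm^\infty = \operatorname{ess}\inf$) with zero loss.

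The crux is the bound $\alpha\opt \ge \alpha_K$. Since the hypothesis $K \ge \sqrt{z/8}\cdot D/\delta$ gives $\alpha_K = z/(K\delta) \le \sqrt{8z}/D$, it is enough to show $\alpha\opt \ge \sqrt{8z}/D$. Put $X = \mathfrak R_\infty^{\pi\opt}$, $\Lambda(\alpha) = \log\E[e^{-\alpha X}]$, and let $Q_\alpha$ be the exponentially tilted law $\mathrm dQ_\alpha \propto e^{-\alpha X}\,\mathrm d\prob$. A direct computation gives $\tfrac{\mathrm d}{\mathrm d\alpha}\erm^{\alpha}[X] = -\alpha^{-2}\KL(Q_\alpha \,\|\, \prob)$, hence $g'(\alpha) = \alpha^{-2}\big(z - \KL(Q_\alpha \,\|\, \prob)\big)$. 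I then bound the divergence: from $\KL(Q_\alpha \,\|\, \prob) = \alpha\Lambda'(\alpha) - \Lambda(\alpha)$ and $\Lambda''(\alpha) = \vari_{Q_\alpha}(X)$, integrating twice from $0$ yields $\KL(Q_\alpha \,\|\, \prob) = \int_0^{\alpha} u\,\vari_{Q_u}(X)\,\mathrm du$; since every tilted law is supported on the same interval of length $\le D$, Popoviciu's inequality gives $\vari_{Q_u}(X) \le D^2/4$, so $\KL(Q_\alpha \,\|\, \prob) \le \alpha^2 D^2/8$. Therefore $g'(\alpha) > 0$ whenever $\alpha < \sqrt{8z}/D$, so $g$ is strictly increasing on $(0, \sqrt{8z}/D)$ and its maximizer $\alpha\opt$ is at least $\sqrt{8z}/D$ (and $=\infty$ if $g$ never stops increasing). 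I expect this variance-integral estimate of $\KL(Q_\alpha\,\|\,\prob)$ to be the delicate point: arguing only through Hoeffding's lemma controls the gap $\E[X] - \evar^\beta[X]$ and loses a factor of two, yielding the weaker $\alpha\opt \ge \sqrt{2z}/D$, which is not enough for the stated choice of $K$. Finally, in the infinite-horizon setting each RASR-ERM subproblem in \cref{alg:ERM_EVAR} is solved by \cref{alg:rasr-vi-inf} (the $\alpha_0=\infty$ instance being the exact robust value), and by \cref{thm:approx-error} that approximation error is absorbed by taking the planning horizon $T'$ large, so the bound is governed by the discretization error $\delta$.
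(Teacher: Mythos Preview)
Your proof is correct and follows the same overall strategy as the paper: first lower-bound the optimal risk level by $\alpha\opt \ge \sqrt{8z}/D$ via a first-order argument, then exploit the uniform spacing of $\{\alpha_k^{-1}\}$ to bound the discretization error by $\delta$. The paper's proof works with the envelope $h(\alpha)=\max_\pi g_\pi(\alpha)$, derives a first-order inequality from $h(\alpha\opt)\ge h(\alpha\opt+\epsilon)$, and then asserts the derivative bound $-\tfrac{d}{d\alpha}\erm^\alpha[X]\le D^2/8$ (implicitly appealing to \cref{lem:bound-approximation}); for the grid step it invokes \cref{lem:evar-discretize}, which compares optimal policies at neighbouring levels.

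Your version differs in two clean ways. First, you pass from $h$ to the single-policy function $g=g_{\pi\opt}$, using that $\alpha\opt$ also maximizes $g$ (since $g(\alpha\opt)=h(\alpha\opt)\ge h(\alpha)\ge g(\alpha)$); this sidesteps the nondifferentiability of the pointwise maximum and makes the calculus honest. Second, you actually \emph{prove} the derivative bound: the identity $\tfrac{d}{d\alpha}\erm^\alpha[X]=-\alpha^{-2}\KL(Q_\alpha\|\prob)$ together with $\KL(Q_\alpha\|\prob)=\int_0^\alpha u\,\vari_{Q_u}(X)\,du\le \alpha^2 D^2/8$ via Popoviciu gives exactly $-\tfrac{d}{d\alpha}\erm^\alpha[X]\le D^2/8$, which is what the paper needs but does not fully justify (and your remark that Hoeffding alone on $\E[X]-\erm^\alpha[X]$ would lose a factor of two is apt). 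Your grid argument on $g$---pick $\alpha_k\le\alpha\opt$ with $\alpha_k^{-1}-(\alpha\opt)^{-1}\le\delta/z$, so the ERM difference is nonpositive and the $z/\alpha$ difference is $\le\delta$---is a streamlined variant of \cref{lem:evar-discretize}. In short: same route, but your treatment of the derivative bound and the passage to $g$ are tidier and fill a gap the paper leaves implicit.
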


One can accelerate \cref{alg:ERM_EVAR} by realizing that \cref{alg:rasr-vi-inf} computes value functions for multiple risk levels $\alpha, \gamma\alpha, \gamma^2\alpha, \ldots $. For instance, running  \cref{alg:rasr-vi-inf} with $\alpha = 0.5$ computes $v_0$ with a risk $\alpha=0.5$, $v_1$ with a risk $\alpha=0.5\gamma$, $v_2$ with a risk level $\alpha=0.5\gamma ^2$ and so on. This observation can significantly reduce the computational effort while introducing an additional small error due to the effective approximate horizon $T'$ being different for different risk levels $\alpha$. Given that this is the first work proposing and optimizing RASR-EVaR, we focus on the conceptually simple \cref{alg:ERM_EVAR} and leave computational improvements for future work.

\section{Empirical Evaluation} \label{sec:empirical}

In this section, we evaluate our RASR framework empirically on several MDPs used previously to evaluate soft-robust and risk-averse algorithms. The empirical evaluation focuses on RASR-EVaR for two reasons. First, as discussed in \cref{sec:preliminaries}, EVaR is a more practical risk measure than ERM because it is closely related to the popular VaR and CVaR. Second, any RASR-EVaR optimal policy is also a RASR-ERM policy for some $\alpha$ optimal in~\eqref{eq:rasr-evar-ref}. We provide additional results, information, and details in \cref{sec:experiments-detail}.

We now describe the experimental setup. As the primary metric for the comparison, we use $\evar^{0.99}[\mathfrak{R}_{\infty}^{\pi}]$ for a policy $\pi$ computed by RASR-EVaR and other baseline algorithms. For the sake of completeness, we also compare the risk computed using VaR and CVaR, two common risk measures. The epistemic uncertainty in our experiments follows the dynamic model described in \cref{sec:preliminaries}. We use the following three domains from the robust RL literature to evaluate the algorithms: \emph{river-swim}~\cite{Behzadian2021}, \emph{population}~\cite{Russel2019beyond}, and \emph{inventory}~\cite{Behzadian2021}. The \emph{river-swim} problem is used to test whether the algorithms are sufficiently risk-averse. It involves small epistemic uncertainty with a significant impact on the return. In contrast, we use the \emph{population} problem to test if the algorithms are overly risk-averse. The epistemic uncertainty is large but makes a small difference in the overall return. Finally, the \emph{inventory} domain combines the characteristics of both these domains.

To understand how well RASR-EVaR performs, we compare the policy it computes with several related methods. Even though these baselines were designed to be risk-averse to the epistemic uncertainty, comparing RASR-EVaR with them helps us understand the importance of jointly optimizing for epistemic and aleatory uncertainties. The \emph{Naive} algorithm computes the ERM value function by solving a dynamic program akin to \cref{thm:dynamic-program}, but with risk $\alpha$ that is constant across time. Algorithms \emph{Erik}~\cite{Eriksson2020}, \emph{Derman}~\cite{Derman2018}, \emph{BCR}~\cite{Behzadian2021}, \emph{RSVF}~\cite{Russel2019beyond}, and \emph{SRVI}~\cite{Lobo2021} originated in the robust RL literature and their objectives are summarized in \cref{sec:related_work} and \cref{sec:addit-relat-work}. \emph{BCR} and \emph{RSVF} are two recent algorithms that have been proposed to optimize the percentile objective (which is equivalent to VaR). \emph{SRVI} optimizes a CVaR objective. Finally, we also compare with a risk-averse MDP algorithm by Chow et al.~\cite{Chow2015} (\emph{Chow}), which is related to RASR-ERM. It augments the state space in a way that is superficially similar to our time-dependent value functions. We use risk-averse methods with the average model $\bar P$ as described in \cref{thm:average-model,cor:evar-markov}. The downsides of \emph{Chow} are that the augmented state space they use is infinite and their policies are history dependent.

\journal{Emphasize that this is also the first paper that does EVaR in reinforcement learning in general.}

\begin{table}
\centering
\begin{tabular}{l|rrr}
  \toprule
 \multicolumn{1}{c|}{Method} & \multicolumn{1}{c}{RS} & \multicolumn{1}{c}{POP} & \multicolumn{1}{c}{INV} \\
  \midrule
\textbf{RASR} & \textbf{50} & \textbf{-7020} & \textbf{294} \\
Naive & \textbf{50} & -8291 & 290 \\
Erik & 45 & -8628 & 290 \\
Derman & 7 & -7259 & 287 \\
RSVF & 45 & -8874 & 257 \\
BCR & 34 & -8731 & 281 \\
SRVI & 34 & -8714 & 280 \\
Chow & 23 & -7238 & 290 \\
  \bottomrule
\end{tabular}
\caption{$\evar^{0.99}[\mathfrak{R}^{\pi}_{\infty}]$ of the policy $\pi$ returned by each method.}
\label{tab:evar_099}
\end{table}

\begin{table}
\centering
\begin{tabular}{l|l|l|l}
\toprule
& & \multicolumn{2}{|c}{Risk Measure}  \\    
Method  & Object. &  Epistemic & Aleatory \\
\midrule
\textbf{RASR} & Disc. & EVaR & EVaR \\
Erik~\cite{Eriksson2020} & Disc. & ERM & E  \\
Derman~\cite{Derman2018} & Aver. & E & E  \\
RSVF~\cite{Russel2019beyond} & Disc. & VaR & E  \\
BCR~\cite{Behzadian2021} & Disc. & VaR & E  \\
SRVI~\cite{Lobo2021} & Disc. & CVaR & E \\
Chow~\cite{Chow2015} & Disc. &  -- & CVaR \\
\bottomrule
\end{tabular}
  \caption{Summary of the soft-robust and risk-averse models in the MDP/RL literature.}
  \label{tab:related}
\end{table}

\begin{figure}
\centering
\includegraphics[width=0.48\linewidth]{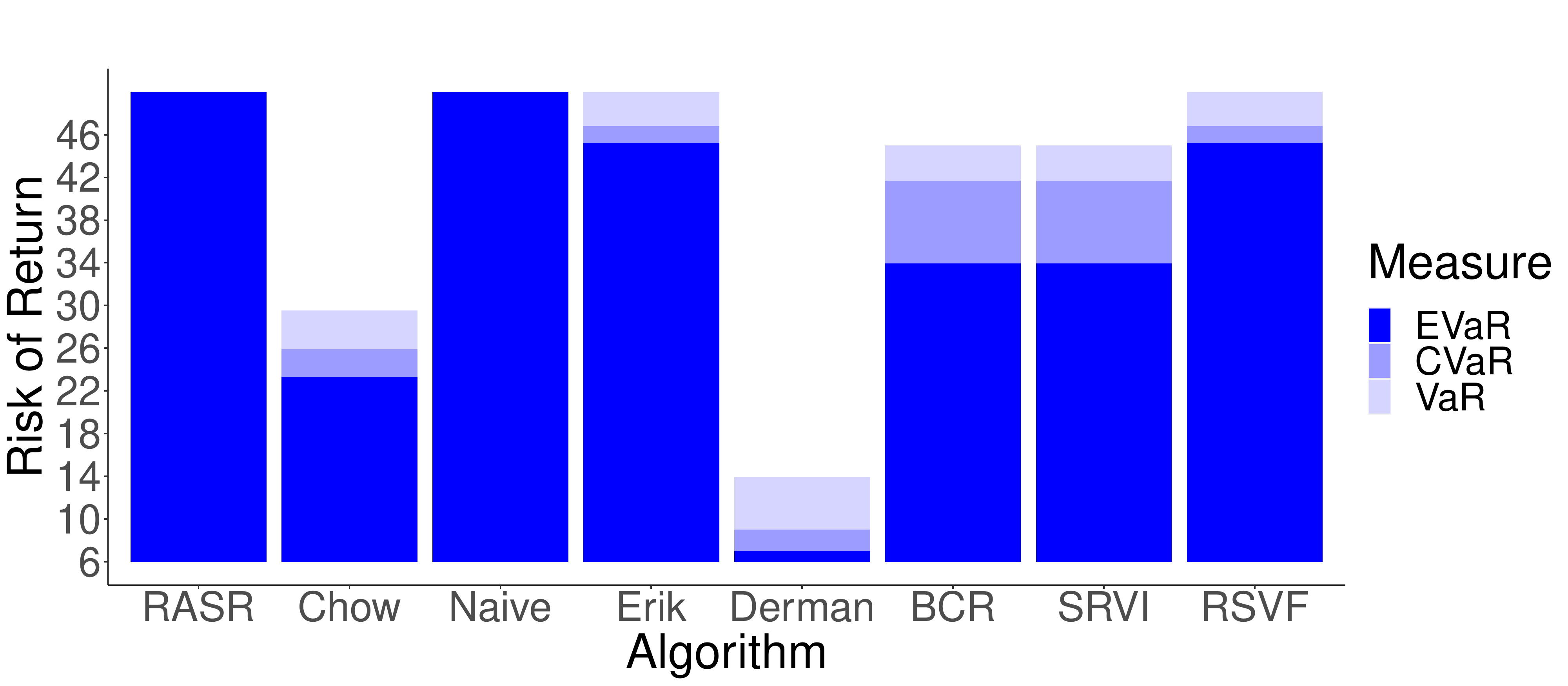}
\includegraphics[width=0.48\linewidth]{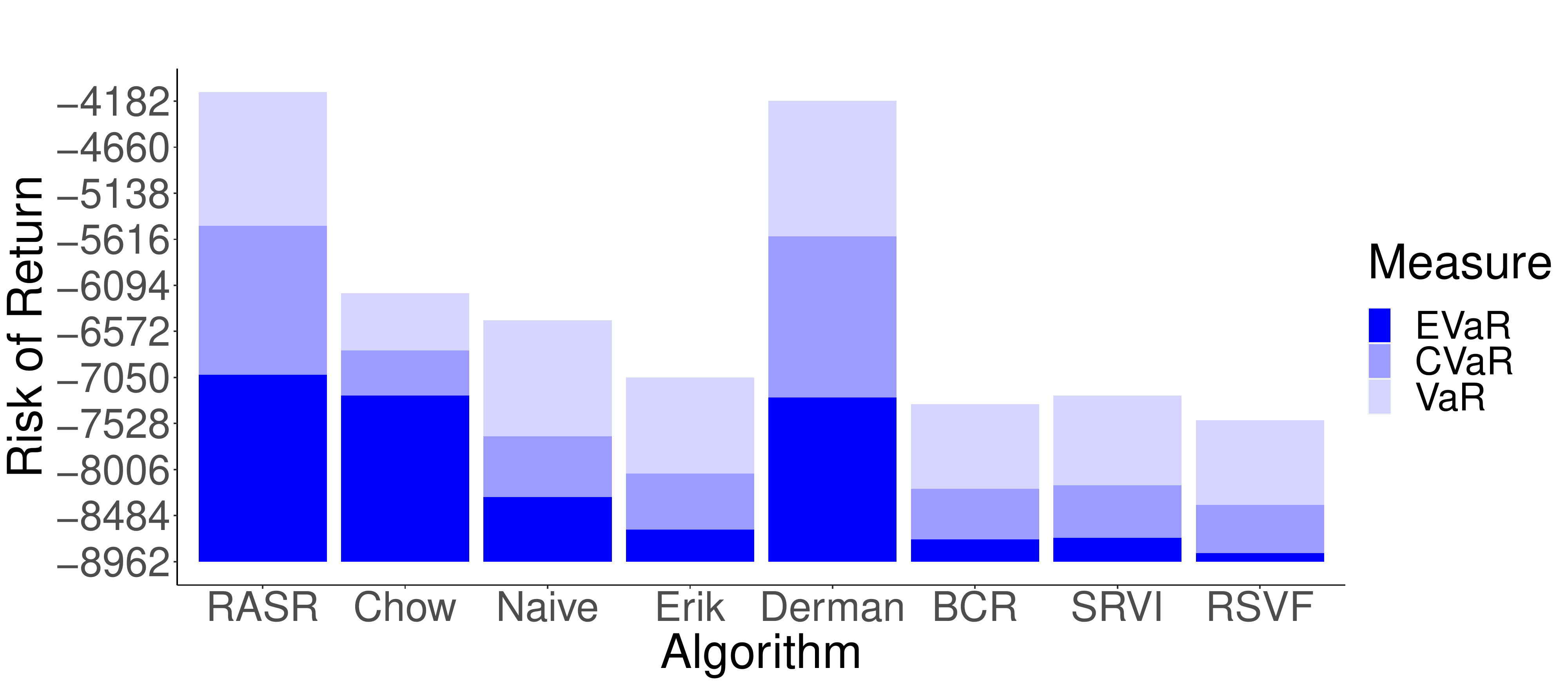}
\caption{$\psi^{0.99}[\mathfrak{R}^{\pi}_{\infty}]$ in \emph{river-swim} (left) and \emph{population} (right) problems~. }
\label{fig:combined_barplot}
\end{figure}

\cref{tab:evar_099} summarizes the risk $\evar^{0.99}[\mathfrak{R}^{\pi}_{\infty}]$ for policies $\pi$ computed by RASR-EVaR and the baseline algorithms described above; please see \cref{fig:combined_barplot} and \cref{sec:experiments-detail} for an evaluation with other risk measures. The results show that RASR-EVaR chooses the \emph{appropriate} level of risk-aversion across all domains. The plots in \cref{fig:combined_barplot} help to visualize the situation for two of the domains. \emph{Derman} is risk neutral and performs particularly poorly in \emph{river-swim}, which has small but impactful epistemic risk. Risk averse algorithms, like \emph{RSVF} and \emph{Erik}, perform well in this domain. In contrast, \emph{Derman} performs well in \emph{population}, which involves large but inconsequential epistemic uncertainty. The risk-averse algorithms (\emph{RSVF}, \emph{Erik}, \emph{BCR}) put too much emphasis on the epistemic uncertainty in this domain and compute policies that are too conservative. 

Examining the results in \cref{fig:combined_barplot} closer leads one to several other important conclusions. First, the figures show that RASR-EVaR outperforms other algorithms even when the risk is evaluated using CVaR or VaR and may be a viable approximate approach optimizing these other risk measures. Second, the results in \cref{fig:combined_barplot} point to the importance of using the time dependent risk in the dynamic program equations. The \emph{Naive} algorithm performs poorly in the \emph{population} domain.

\section{Related Work} 
\label{sec:related_work}


Our RASR framework falls under the broader umbrella of robust and soft-robust MDP and RL. Robust optimization is a methodology that reduces the sensitivity of the solution to model errors~\cite{Ben-Tal2009} and has been extensively studied in MDP~\cite{Nilim2005,Iyengar2005,Wiesemann2013,Ho2021a} and RL~\cite{Xu2012,Petrik2014,Russel2019beyond,Grand-Clement2021b}. Since robust MDPs often compute policies that are overly conservative, soft-robust (also known as Bayesian robust, light robust, or multi-model objectives) formulations were proposed as an alternative that can better balance robustness and the quality of an average solution (e.g.,~\cite{Ben-Tal2010,Derman2018,Mankowitz2019,Buchholz2020}). Soft-robust algorithms replace the worst-case objective of robust optimization with risk-aversion to some distribution over uncertain models. \Cref{tab:related} summarizes representative soft-robust and risk-averse algorithms studied in the MDP/RL literature. We defer a more comprehensive overview of related work to \cref{sec:addit-relat-work}.

Risk-averse MDP methods account only for the aleatory uncertainty in the return and do not explicitly consider the error in the model. The risk-averse formulations that are most closely related to our work use ERM. This list includes the results in the average reward~\cite{Borkar2014,Borkar2002a,Borkar2002Qrisk} and those in the undiscounted finite-horizon settings~\cite{Fei2020,Nass2020,Dowson2021}. Note that some of these papers address risk-aversion in stochastic programming and not in MDPs~\cite{Dowson2021}. To the best of our knowledge, none of the prior work has studied ERM in the discounted case. We believe this is because ERM is not positive-homogeneous, which complicates using it with a discount factor, as shown in \cref{thm:pos-quasi-homogen}. Moreover, we are unaware of any EVaR adaptation of these earlier ERM algorithms. Most other formulations for risk-averse RL are based on VaR and CVaR~\cite{Borkar2014,Chow2014,Tamar2015,Chow2018}, which are not dynamically consistent and are NP hard to optimize. To build a DP in these formulations, one must augment the state space and optimize over a continuously infinite variable~\cite{Bauerle2011,Chow2014,Chow2015,Pflug2016}, which significantly complicates the computation in comparison with the time-dependent value functions in RASR-ERM. Finally, existing application of EVaR to MDPs have been limited to the nested (or Markov) formulation, which embeds the risk measure directly into the Bellman operator~\cite{Ahmadi2021a,Ahmadi2021b,Dixit2021}. The nested EVaR risk measure differs from the ordinary EVaR and generally approximates if only very poorly~\cite{Iancu2015a}.

\section{Conclusion and Future Work} 
\label{sec:conclusion} 

We proposed RASR, a framework that can mitigate the risk associated with both model uncertainty (epistemic) and random dynamics (aleatory) in MDPs. We studied RASR with two separate risk measures: ERM and EVaR. In RASR-ERM, we derived the first exact DP formulation for ERM in discounted MDPs. We also showed that there optimal value function and deterministic Markov policies exist and can be computed using value iteration. For RASR-EVaR, we show that the RASR-EVaR objective can be optimized by reducing it to multiple RASR-ERM problems. Our empirical results highlight the utility of our RASR algorithms. 

Future directions include scaling our RASR algorithms beyond tabular MDPs and dynamic epistemic uncertainty. It is also essential to better understand the relation between RASR and regularized (robust) MDPs~\cite{Derman2021,neu2017unified,Geist2019}. 




\bibliography{biblio}


\appendix

\section{Proofs of \cref{sec:preliminaries}}
\label{app:sec:prelim} 

The following proposition states a simple, but important property of the expectation operator which plays a crucial role in formulating the dynamic programs. The property is known under several different names, including \emph{the tower property}, \emph{the law of total expectation}, and \emph{the law of iterated expectations}.
\begin{proposition}[Tower Property, e.g.,~Proposition~3.4 in~\cite{Ross2007}] \label{lem:tower-exp}
Any two random variables $X_1,X_2\in\mathbb X$ satisfy that
\[\E[X_1] \;=\; \E\left[\E[X_1 \mid X_2]\right]~.\]
\end{proposition}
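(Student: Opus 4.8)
The statement is the classical \emph{law of total expectation} (also called the law of iterated expectations), and the shortest route is to invoke the defining property of conditional expectation directly. Recall that $\E[X_1 \mid X_2]$ is, by definition, the (almost surely unique) $\sigma(X_2)$-measurable random variable $Y$ satisfying $\E[Y \cdot \mathbf{1}_A] = \E[X_1 \cdot \mathbf{1}_A]$ for every event $A \in \sigma(X_2)$. So the plan is: (i) recall this characterization; (ii) observe that $A = \Omega$ is an admissible test set, since $\Omega$ belongs to every $\sigma$-algebra; (iii) read off $\E[Y] = \E[X_1]$, which is exactly $\E[\E[X_1 \mid X_2]] = \E[X_1]$. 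This needs no machinery beyond the existence of conditional expectations, which holds here because the elements of $\mathbb X$ are real-valued and, in every case used in the paper, integrable.

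If one prefers to avoid the measure-theoretic definition, one can argue in the discrete setting that is actually relevant here, where the returns of interest arise from finite state and action sets together with finitely supported model distributions. In that case the identity reduces to the elementary manipulation $\E[\E[X_1 \mid X_2]] = \sum_{x_2} \prob[X_2 = x_2]\,\E[X_1 \mid X_2 = x_2] = \sum_{x_2}\sum_{x_1} x_1\,\prob[X_1 = x_1, X_2 = x_2] = \sum_{x_1} x_1 \,\prob[X_1 = x_1] = \E[X_1]$, where the last equality marginalizes over $x_2$. Alternatively one simply cites Proposition~3.4 in~\cite{Ross2007}, as the statement already does.

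There is no genuine obstacle here: the only point requiring care is well-definedness, i.e.\ that $X_1$ be integrable so that $\E[X_1 \mid X_2]$ exists and is itself integrable, which is automatic in this paper since returns are bounded (finite horizon with bounded rewards, or discounted with $\gamma < 1$). The reason this proposition is isolated is not its difficulty but its role as the template for the substantive \cref{thm:tower-erm}: there the ERM tower property is obtained by applying the present proposition to the transformed random variable $e^{-\alpha X_1}$, namely $\erm^\alpha\bigl[\erm^\alpha[X_1 \mid X_2]\bigr] = -\alpha^{-1}\log \E\bigl[e^{-\alpha\cdot \erm^\alpha[X_1\mid X_2]}\bigr] = -\alpha^{-1}\log \E\bigl[\E[e^{-\alpha X_1} \mid X_2]\bigr] = -\alpha^{-1}\log \E[e^{-\alpha X_1}] = \erm^\alpha[X_1]$, where the middle equality is exactly this proposition and the outer ones use the definition $\erm^\alpha[X] = -\alpha^{-1}\log\E[e^{-\alpha X}]$.
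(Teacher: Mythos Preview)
Your argument is correct: both the measure-theoretic route (take $A=\Omega$ in the defining identity for conditional expectation) and the discrete computation are valid proofs of the tower property, and your remark on integrability is the only point that needs checking. Note, however, that the paper does not actually prove this proposition at all---it merely states it and cites Proposition~3.4 in~\cite{Ross2007}, so there is nothing to compare against beyond observing that your third option (``simply cite Ross'') is exactly what the paper does. Your closing paragraph, which derives the ERM tower property by applying the present proposition to $e^{-\alpha X_1}$, is in fact the paper's proof of \cref{thm:tower-erm} (phrased there via the certainty-equivalent representation $\erm^\alpha[X]=u^{-1}(\E[u(X)])$ with $u(x)=\alpha^{-1}-\alpha^{-1}e^{-\alpha x}$, which is algebraically the same computation); that content belongs to \cref{thm:tower-erm} rather than to this proposition, but it is correct.
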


A convenient way to represent ERM is to use its \emph{certainty equivalent} form. This form relates the risk measure to the popular expected utility framework for decision-making~\cite{Ben-Tal2005}. In the expected utility framework, one prefers a lottery (or a random reward) $X_1\in \mathbb{X}$ over $X_2 \in \mathbb{X}$ if and only if
\[
 \E[u(X_1)] \ge \E[u(X_2)]~,
\]
for some increasing \emph{utility function} $u\colon \Real \to  \Real$.

The expected utility $\E[u(X)]$ is difficult to interpret because its units are incompatible with $X$. A more interpretable characterization of the expected utility is to use the \emph{certainty equivalent} $z\in \Real$, which is defined as the certain quantity that achieves the same expected utility as $X$:
\begin{equation}\label{eq:ce-def}
  \E[u(z)] = \E[u(X)], \quad \text{and therefore}, \quad  z=u^{-1}(\E[u(X)])~.
\end{equation}
Algebraic manipulation from \eqref{eq:ce-def} then shows that ERM for any $X\in \mathbb{X}$ can be represented as the certainty equivalent
 \begin{equation}\label{eq:erm-ce}
   \erm^{\alpha}[X] \;=\; u^{-1}(\E[u(X)])~,
\end{equation}
for the utility function $u:\Real \to \Real$ (see definition 2.1 in \cite{Ben-Tal2005}) defined as
\[
  u(x) = \alpha^{-1} - \alpha^{-1} \cdot \exp {-\alpha \cdot x}~.
\]
Because the function $u$ is strictly increasing, its inverse $u^{-1}\colon  \Real \to \Real$ exists and equals to
\[
  u^{-1}(z) = - \alpha^{-1} \cdot \log (1- \alpha \cdot z)~.
\]

\begin{proof}[Proof of \cref{thm:tower-erm}]
The property is trivially true when $\alpha = 0$ from \cref{lem:tower-exp} since $\erm^0= \E$. The property then follows by algebraic manipulation for $\alpha > 0$ using the certainty equivalent representation in~\eqref{eq:erm-ce} as
\begin{align*}
\erm^{\alpha}\big[\erm^{\alpha}[X_1 \mid X_2]\big] &= \erm^{\alpha}\left[u^{-1} \left(   \E[u(X_1) \mid X_2] \right)\right] \\
&= u^{-1}\left(\E\left[ u\left( u^{-1} \left(   \E[u(X_1) \mid X_2] \right) \right) \right] \right) \\
&= u^{-1}(\E \left[ \E[u(X_1) \mid X_2] \right]) \\
&= u^{-1}( \E[u(X_1)]) && \text{~\cref{lem:tower-exp}}\\
&= \erm^{\alpha}[X_1]~.
\end{align*}
\end{proof}

\section{Proofs of \cref{sec:rasr-erm}}
\label{app:sec:ERM}

\begin{algorithm} 
    \KwIn{Horizon $T < \infty$, risk level $\alpha > 0$, terminal value $v_T(s),\;\forall s\in\states$}
    \KwOut{Optimal value $(v_t\opt)_{t=0}^{T}$ and policy $(\pi_t\opt)_{t=0}^{T-1}$}
    Initialize $v\opt_{T}(s) \gets v'(s), \; \forall s\in \states$ \;
    \For{$t = T-1, \dots ,0$}{ 
      Update $v_t\opt$ using~\eqref{eq:v-erm-opt} and $\pi_t\opt$ using~\eqref{eq:pol-greedy}\;
    }
    \Return $v\opt , \pi\opt$ \;
    \caption{VI for finite-horizon RASR-ERM}    \label{alg:RASR_VI}
\end{algorithm}

\begin{proof}[Proof of \cref{thm:pos-quasi-homogen}]
The property is trivially true for $c = 0$ or $\alpha = 0$ because $\erm^0 = \E$ and $\erm^{\alpha}[0] = 0$. Then, for $c > 0$ and $\alpha >0$, the property follows by rearranging the terms as
\begin{align*}
  \erm^{\alpha \cdot c}[X] &= -\frac{1}{\alpha c}\log\big(\E[e^{-\alpha \cdot c \cdot X}]\big) \\
  c \cdot \erm^{\alpha \cdot c}[X] &= -\frac{1}{\alpha}\log\big(\E[e^{-\alpha \cdot c \cdot X}]\big) && \text{Multiply by $c$}\\
&= \erm^{\alpha}[c\cdot X].
\end{align*}
\end{proof}

In order to prove the correctness of the dynamic program formulation for the ERM objective, we need to formalize the soft-robust uncertainty model more rigorously. For the purpose of this discussion, we assume some fixed state $\hat{s}\in \states$, action $\hat{a}\in \actions$, and a time step $\hat{t} \in 0, \dots , T-1$. To streamline the notation, we drop the subscripts of $f$, $P$, and $\bar{p}$ for $\hat{s}$, $\hat{a}$, and $\hat{t}$ in the subsequent discussion.

Now we formalize the nested distribution function, in which the random model $P$ governs the transition function of the random next state. Define a probability space $(\Omega \times \mathcal{S}, 2^{\Omega \times \mathcal{S}}, \hat{f})$, which combines the uncertainty over the model $\omega$ and the next state $s' \in \states$ at time $\hat{t}$ when taking an action $\hat{a}$ in a state $\hat{s}$. Define a 2-step filtration $\mathcal{F}_0 = 2^{\Omega}$ and $\mathcal{F}_1 = 2^{\Omega \times \mathcal{S}}$. The first step represents the model $P$ and the second step represents the choice of state $S$. 

The random variable $S\colon (s,\omega) \mapsto s$ is measurable in $\mathcal{F}_1$ and represents the state transition. For the probability function $\hat{f}$ to be consistent with model probabilities $f$ defined in \cref{sec:preliminaries}, the function $\hat{f}$ must satisfy that 
\begin{equation}\label{eq:f-fhat}
 f(\omega) = \sum_{s\in \states} \hat{f}(\omega, s), \quad \forall \omega \in \Omega~.  
\end{equation}
The vector-valued random variable $P\colon \Omega \to \Delta^S$ is measurable in $\mathcal{F}_0$ and represents the uncertain transition function. It is defined as
\[
  P(\omega , s)  = P_{\hat{t}}(\cdot \mid  \hat{s},\hat{a})~,
\]
for each $s \in \states$ and $\omega \in \Omega$. The random variable is independent of the component $s$ in the probability space. Finally, since $P$ governs the distribution of $S$, the function $\hat{f}$ must also satisfy that 
\begin{equation} \label{eq:P-definition-simple}
\hat{f}(\omega ,s') = f(\omega)\cdot P(\omega)_{s'}, \quad  \forall \omega \in \Omega , s'\in \states ~. 
\end{equation} 
That this, this equality ensures that $S \sim P$.

Our goal now is to replace any random variable $X \colon \Omega \times \states \to  \Real$  defined on the entire filtration by a simpler random variable $\tilde{X} \colon \states \to \Real$ measurable that has the same ERM value. 
The random variable $\tilde{X}$ is defined on a probability space $(\states, 2^{\states}, \bar{p})$ with
\begin{equation} \label{eq:pbar-definition}
  \bar{p}(s) = \sum_{\omega \in \Omega } \hat{f}(\omega, s) = \sum_{\omega \in \Omega } f(\omega ) \cdot P(\omega )_s, \quad  \forall s\in \states ~.  
\end{equation}
Intuitively, the value $\bar{p}$ represents the mean transition probability of the uncertain $P$. The second equality in \eqref{eq:pbar-definition} follows immediately from \eqref{eq:P-definition-simple}. The following lemma shows that the ERM values of $X$ and $\tilde{X}$ coincide. 
\begin{lemma} \label{lem:double-expectation}
Suppose that random variables $P$, $X$, and $\tilde{X}$ are defined as above and that $X$ is independent of $\omega$:
\[
 X(\omega_1, s) = X(\omega_2, s), \quad \forall s\in \states, \forall \omega_1, \omega_2 \in \Omega ~.  
\]
Then, for any $\alpha \ge 0$, the entropic risk measure satisfies that
\begin{equation} \label{eq:double-expectation}
\erm^{\alpha }[X] = \erm^{\alpha }[\tilde{X}] = \erm^{\alpha }[X \mid  P = \bar{p}]~. 
\end{equation}
\end{lemma}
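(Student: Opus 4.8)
The plan is to reduce the claim to a one-line computation of the quantity inside the logarithm in the definition~\eqref{eq:defn_ent_risk} of $\erm$, exploiting the fact that summing the model variable out of the joint law $\hat f$ produces exactly the marginal $\bar p$ of~\eqref{eq:pbar-definition}. I first record the structural consequences of the hypotheses. Since $X$ does not depend on $\omega$, there is a function $g\colon\states\to\Real$ with $X(\omega,s)=g(s)$ for all $(\omega,s)\in\Omega\times\states$, and $\tilde X$ is this same $g$ regarded as a random variable on $(\states,2^{\states},\bar p)$. Moreover, following the convention used throughout the paper (as in \cref{thm:dynamic-program}), $\erm^{\alpha}[X\mid P=\bar p]$ denotes the entropic risk of $g(S)$ when $S$ is drawn from the deterministic kernel $\bar p$, i.e.\ of $\tilde X$; hence the second equality in~\eqref{eq:double-expectation} is immediate once the first is proved, and it suffices to show $\erm^{\alpha}[X]=\erm^{\alpha}[\tilde X]$.

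Next I treat $\alpha>0$ by direct computation:
\[
\E_{\hat f}\!\left[e^{-\alpha X}\right]
=\sum_{\omega\in\Omega}\sum_{s\in\states}\hat f(\omega,s)\,e^{-\alpha g(s)}
=\sum_{s\in\states}e^{-\alpha g(s)}\sum_{\omega\in\Omega}\hat f(\omega,s)
=\sum_{s\in\states}\bar p(s)\,e^{-\alpha g(s)}
=\E_{\bar p}\!\left[e^{-\alpha \tilde X}\right],
\]
where the first step uses $X(\omega,s)=g(s)$ and the independence of $X$ from $\omega$ (so the inner sum acts only on $\hat f$), the third uses the definition of $\bar p$ in~\eqref{eq:pbar-definition}, and the last uses $\tilde X=g$ on $(\states,2^{\states},\bar p)$. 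Applying $-\alpha^{-1}\log(\cdot)$ and~\eqref{eq:defn_ent_risk} to both sides gives $\erm^{\alpha}[X]=\erm^{\alpha}[\tilde X]$. For $\alpha=0$, where $\erm^{0}=\E$, the same summation with $e^{-\alpha g(s)}$ replaced by $g(s)$ --- equivalently, \cref{lem:tower-exp} applied to the pair $(X,P)$ --- gives $\E_{\hat f}[X]=\sum_{s}\bar p(s)\,g(s)=\E_{\bar p}[\tilde X]$. Reading the same chain of equalities against the convention for $\erm^{\alpha}[\cdot\mid P=\bar p]$ stated above then yields the remaining equality in~\eqref{eq:double-expectation}.

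There is no genuinely hard step here; the only thing to be careful about is the bookkeeping across the three spaces $(\Omega\times\states,2^{\Omega\times\states},\hat f)$, $(\states,2^{\states},\bar p)$, and the deterministic-model space, and the observation that ``$X$ independent of $\omega$'' is exactly what lets the marginalization $\sum_{\omega}\hat f(\omega,s)=\bar p(s)$ pass through the exponential. As an alternative one could instead invoke the tower property of ERM (\cref{thm:tower-erm}): $\erm^{\alpha}[X]=\erm^{\alpha}\big[\erm^{\alpha}[X\mid P]\big]$, note that $\erm^{\alpha}[X\mid P=p]=-\alpha^{-1}\log\sum_{s}p_s\,e^{-\alpha g(s)}$, and observe that unfolding the outer $\erm$ cancels this logarithm against its exponential and leaves $\E_{P}[P_s]=\bar p_s$ inside the sum. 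I would mention this route as a remark but carry out the direct computation, which is shorter and needs no appeal to the tower property.
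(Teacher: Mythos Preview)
Your proof is correct and follows essentially the same approach as the paper's own argument. The paper phrases the computation via the certainty-equivalent utility $u(x)=\alpha^{-1}-\alpha^{-1}e^{-\alpha x}$ and shows $\E[u(X)]=\E[u(\tilde X)]$, whereas you work directly with $\E[e^{-\alpha X}]$; since $u$ is an affine transformation of $e^{-\alpha x}$, this is the same marginalization $\sum_{\omega}\hat f(\omega,s)=\bar p(s)$ pushed through a function of $s$ alone, and the separate handling of $\alpha=0$ and the second equality are minor bookkeeping points that the paper leaves implicit.
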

\begin{proof}
We prove the property using the certainty equivalent representation of ERM in \eqref{eq:erm-ce}. For the utility function $u$ in \eqref{eq:erm-ce}, we get that
\begin{equation} \label{eq:double-exp-der}
\begin{aligned}
   \E[u(x)] &= \sum_{s\in \states } \sum_{\omega \in \Omega } u(X(s,\omega )) \hat{f}(s, \omega) =
   \sum_{s\in \states } u(X(s)) \left(  \sum_{\omega \in \Omega} \hat{f}(s,\omega) \right) \\
   &\stackrel{\text{Eq.~\eqref{eq:P-definition-simple}}}{=} \sum_{s\in \states } u(X(s)) \left(  \sum_{\omega \in \Omega} P(\omega)_s \cdot f(\omega) \right) \\
   &\stackrel{\text{Eq.~\eqref{eq:pbar-definition}}}{=} \sum_{s\in \states } u(X(s)) \left(  \sum_{\omega \in \Omega} \bar{p}(\omega)_s  \right)  = \E[u(\tilde{X})]~.
\end{aligned}
\end{equation}
Because $X$ is independent of $\omega$, the derivation above abbreviates it as $X(s) = X(\omega, s)$ for an arbitrary $\omega\in \Omega$. Equality~\eqref{eq:double-expectation} then follows from~\eqref{eq:erm-ce} and~\eqref{eq:double-exp-der} as  
\[
   \erm^{\alpha}[X] = u^{-1}\left(\E[u(X)]\right) =u^{-1}\left(\E[u(\tilde{X})]\right) = \erm^{\alpha}[\tilde{X}]~.
\]
\end{proof}

Although \cref{lem:double-expectation} is stated for ERM, one could generalize it to CVaR and certain other risk measures that admit an \emph{optimized certainty equivalent} representation~\cite{Ben-Tal2005}.

\begin{proof}[Proof of \cref{thm:dynamic-program}]
\journal{We also need to elaborate on how the uncertain model $P$ can be decomposed to be inside of the Bellman update. Because the objective is conditioned on the entire sequence $(P_t)_{t=0}^{T}$, while here we are conditioning based on $P_t$}
The proof is divided into two steps, proving it for $v^{\pi}$ first and for $v\opt$  second.

We prove the result for $v^{\pi}$ for any fixed $\pi\in \Pi_{MR}$ by backward induction on $t$ from $t=T$ to $t=0$. In particular, we show that any $v^{\pi}_t$ that satisfies~\eqref{eq:v-erm-pi} must equal to its definition in~\eqref{eq:erm-rasr-v} and is, therefore, also unique. The base case of the induction with $t = T$ is trivial because $v_T(s) = 0$ for each $s\in \states$ by definition.

To prove the inductive step, we assume that any $v^{\pi}$ that satisfies the Bellman optimality condition in~\eqref{eq:v-erm-pi} must equal to~\eqref{eq:erm-rasr-v} for $t+1 \le T$. We then prove that~\eqref{eq:v-erm-pi} implies~\eqref{eq:erm-rasr-v} also for $t$ and each $s_t\in \states$.

Next, we first express the equality in~\eqref{eq:v-erm-pi} in terms of a random model $P_t$ and introduce random variables $A_t \colon \actions  \times \Omega \times \states \to \actions$ and $S''\colon  \actions \times \Omega \times \states \to \states$ where $A_t(a,\omega,s) = a$ represents the actions taken and $S''(a,\omega, s) = s$ represents the next state for each $a\in \actions$, $\omega \in \Omega$, $s\in \states$, and the randomized action is distributed as $A_t \sim \pi_t(s_t)$. The random variables $A_t$ and $S''$ are defined over the joint probability space of $\actions \times \Omega \times \states$ with a probability mass function
\[
  \hat{f}(a, \omega, s) =  \pi_t(a \mid s_t) \cdot  f_t(\omega) \cdot P_t(\omega)(s \mid  s_t, a)~.
\]
The function $\hat{f}$ is the joint probability over $a$, $\omega$, and $s$. Recall also that $S' \sim \bar{p}_t(\cdot|s,A)$. The Bellman equation in \eqref{eq:v-erm-pi} can be reformulated as
\begin{align}
\nonumber
  v_t^{\pi}(s_t)
  &= \erm^{\alpha \cdot \gamma^t} \left[r(s_t,A_t) + \gamma\cdot v_{t+1}^{\pi}(S') \right] \\
\nonumber
  &= \erm^{\alpha \cdot \gamma^t}\left[\erm^{\alpha \cdot \gamma^t} \left[r(s_t,A_t) + \gamma\cdot v_{t+1}^{\pi}(S') \mid  A_t \right] \right] && \text{\cref{thm:tower-erm}} \\
\nonumber
  &= \erm^{\alpha \cdot \gamma^t}\left[\erm^{\alpha \cdot \gamma^t} \left[r(s_t,A_t) + \gamma\cdot v_{t+1}^{\pi}(S'') \mid  A_t \right] \right] && \text{\cref{lem:double-expectation}}\\
  \label{eq:proof-dp-eqdp}
  &= \erm^{\alpha \cdot \gamma^t} \left[r(s_t,A_t) + \gamma\cdot v_{t+1}^{\pi}(S'') \right]~.  && \text{\cref{thm:tower-erm}}
\end{align}
Above, \cref{lem:double-expectation} is applied to random variables \(  \tilde{X} = \erm^{\alpha \cdot \gamma^t} \left[r(s_t,A_t) + \gamma\cdot v_{t+1}^{\pi}(S') \mid  A_t \right] \) and $X = \erm^{\alpha \cdot \gamma^t} \left[r(s_t,A_t) + \gamma\cdot v_{t+1}^{\pi}(S'') \mid  A_t \right]$.

Next, by the induction hypothesis we can substitute the definition of $v_{t+1}^{\pi}(S'')$ from~\eqref{eq:erm-rasr-v} into~\eqref{eq:proof-dp-eqdp} and use the positive quasi-homogeneity of ERM~(\cref{thm:pos-quasi-homogen}) to get that
\begin{align*}
  v_t^{\pi}(s_t) &= \erm^{\alpha\cdot \gamma^t} \left[r(s_t,A_t) + \gamma \cdot \erm^{\alpha \cdot \gamma^{t+1}} \left[\sum_{t'=t+1}^{T-1} \gamma^{t'-t-1} \cdot r(S_{t'},A_{t'})  \mid S_{t+1} = S'' \right] \right] \\
  &= \erm^{\alpha\cdot \gamma^t} \left[r(s_t,A_t) + \erm^{\alpha \cdot \gamma^{t}} \left[\sum_{t'=t+1}^{T-1} \gamma^{t'-t} \cdot r(S_{t'},A_{t'})  \mid S_{t+1} = S'' \right] \right]~. 
\end{align*}
Here, $S_t = s_t$, $A_{t'} \sim \pi_{t'}(S_{t'})$ for each $t' \ge t+1$. The random variables $S_{t'}, t'=t+1, \dots , T$ follow the policy $\pi$ and state transition probabilities.

Finally, using translation equivariance~(A2 in \cref{def:convex-risk}) and tower~(\cref{thm:tower-erm}) properties of ERM, we conclude that
\begin{align*}
  v_t^{\pi}(s_t)
  &= \erm^{\alpha\cdot \gamma^t} \left[r(S_t,A_t) + \erm^{\alpha \cdot \gamma^{t}} \left[\sum_{t'=t+1}^{T-1} \gamma^{t'-t} \cdot r(S_{t'},A_{t'})  \mid S_{t+1} = S'', A_t \right]  \mid S_t = s_t\right]  \\
  &= \erm^{\alpha\cdot \gamma^t} \left[ \erm^{\alpha \cdot \gamma^{t}} \left[r(S_t,A_t) +\sum_{t'=t+1}^{T-1} \gamma^{t'-t} \cdot r(S_{t'},A_{t'})  \mid S_{t+1} = S'', A_t \right]  \mid S_t = s_t\right] \\
  &= \erm^{\alpha\cdot \gamma^t} \left[ \erm^{\alpha \cdot \gamma^{t}} \left[ \sum_{t'=t}^{T-1} \gamma^{t'-t} \cdot r(S_{t'},A_{t'})  \mid S_{t+1} = S'', A_t \right]  \mid S_t = s_t\right] \\
  &= \erm^{\alpha \cdot \gamma^{t}} \left[ \sum_{t'=t}^{T-1} \gamma^{t'-t} \cdot r(S_{t'},A_{t'}) \mid S_t = s_t\right] ~.
\end{align*}
The derivation above shows that any $v^{\pi}$ that satisfies the Bellman equation in~\eqref{eq:v-erm-pi} is unique and satisfies the definition in \eqref{eq:erm-rasr-v}. 

To prove the second part of the theorem, which concerns $v\opt$, we formally define the optimal value function $v\opt$ for each $t = 0, \dots, T$ as
\begin{equation} \label{eq:erm-rasr-v-opt}
v\opt_t(s) \;=\; \max_{\pi \in \Pi_{MR}^t} \erm^{\alpha\cdot \gamma^{t}} \left[ \sum_{t'=t}^{T-1} \gamma^{t'-t} \cdot  R^{\pi}_{t'}  \mid S_t = s \right], \quad \forall s\in\mathcal S~.
\end{equation}
Here, the set $\Pi_{MR}^t$ represents the randomized Markov policies for time steps $t, \dots, T-1$.

The proof for the optimal value function $v\opt$ now proceeds by backward induction analogously to the proof of \eqref{eq:v-erm-pi} with the difference that it incorporates the optimization over actions. As before, the base case with $t = T$ is trivial because $v\opt_T(s) = 0$ for each $s\in \states$.

To prove the inductive step, assume that \eqref{eq:v-erm-opt} implies \eqref{eq:erm-rasr-v-opt} for $t+1 \le T$ and show the implication for $t$ and each $s_t\in \states$. Let $S''$ and $A_t$ be defined as in the first part of the proof, except that $A_t$ need not be distributed according to $\pi_t$. Then from \cref{lem:double-expectation}, we formulate the Bellman equation in terms of $S''$ that is jointly random over $\Omega$ and $\states$ as
\begin{align*}
  v_t\opt(s_t)
  &= \max_{a\in \actions} \erm^{\alpha \cdot \gamma^t} \left[r(s_t,A_t) + \gamma\cdot v_{t+1}\opt (S') \mid  A_t = a \right] \\
  &= \max_{a\in \actions}\erm^{\alpha \cdot \gamma^t} \left[r(s_t,A_t) + \gamma\cdot v_{t+1}\opt (S'') \mid  A_t = a \right] ~.
\end{align*}
Then, \cref{proof:deterministic_action} shows that the maximization over actions can be replaced by a maximization over randomized policies which consider distributions over actions as
\begin{align}
\nonumber
  v\opt(s_t) &= \max_{a\in \actions}\erm^{\alpha \cdot \gamma^t} \left[r(s_t,A_t) + \gamma\cdot v_{t+1}\opt (S'') \mid  A_t = a \right]  \\
  \label{eq:dp-opt}
&= \max_{d\in \Delta^A}\erm^{\alpha \cdot \gamma^t} \left[r(s_t,A_t) + \gamma\cdot v_{t+1}\opt(S'') \mid  A_t \sim d \right]~.
\end{align}

Next, we can substitute the definition of $v_{t+1}\opt(S'')$ from~\eqref{eq:erm-rasr-v-opt} into \eqref{eq:dp-opt} by the induction hypothesis and use the positive quasi-homogeneity of ERM~(\cref{thm:pos-quasi-homogen}) to get that
\begin{align*}
  v_t\opt(s_t) &= \max_{d\in \Delta^A} \erm^{\alpha\cdot \gamma^t} \left[r(S_t,A_t) + \gamma   \max_{\pi \in \Pi_{MR}^{t+1}} \erm^{\alpha \cdot \gamma^{t+1}} \left[\sum_{t'=t+1}^{T-1} \gamma^{t'-t-1}  r(S_{t'},A_{t'})  \mid S_{t+1} = S'' \right]  \right] \\
  &= \max_{d\in \Delta^A} \erm^{\alpha\cdot \gamma^t} \left[r(S_t,A_t) +  \max_{\pi \in \Pi_{MR}^{t+1}} \erm^{\alpha \cdot \gamma^{t}} \left[\sum_{t'=t+1}^{T-1} \gamma^{t'-t} \cdot r(S_{t'},A_{t'})  \mid S_{t+1} = S'' \right] \right]~. 
\end{align*}
Here, $S_t = s_t$, $A_t \sim d$, and $A_{t'} \sim \pi_{t'}(S_{t'})$ for each $t' \ge t+1$ and the random variables $S_{t'}, t'=t+1, \dots , T$ are governed by $\pi_{t'}$ and transition probabilities. We can move $\gamma$ inside the maximization operator because it is non-negative.

Finally, using monotonicity~(A1 in \cref{def:convex-risk}), translation equivariance~(A2 in \cref{def:convex-risk}) and tower~(\cref{thm:tower-erm}) properties of ERM, we conclude that
\begin{align*}
  v_t\opt (s_t)
  &= \max_{d\in \Delta^A} \erm^{\alpha\cdot \gamma^t} \left[r(S_t,A_t) + \max_{\pi \in \Pi_{MR}^{t+1}}\erm^{\alpha \cdot \gamma^{t}} \left[\sum_{t'=t+1}^{T-1} \gamma^{t'-t} \cdot r(S_{t'},A_{t'})  \mid S_{t+1} = S'', A_t \right] \right]  \\
  &= \max_{d\in \Delta^A}\erm^{\alpha\cdot \gamma^t} \left[ \max_{\pi \in \Pi_{MR}^{t+1}}\erm^{\alpha \cdot \gamma^{t}} \left[r(S_t,A_t) +\sum_{t'=t+1}^{T-1} \gamma^{t'-t} \cdot r(S_{t'},A_{t'})  \mid S_{t+1} = S'', A_t \right] \right] \\
  &= \max_{d\in \Delta^A}\erm^{\alpha\cdot \gamma^t} \left[ \max_{\pi \in \Pi_{MR}^{t+1}}\erm^{\alpha \cdot \gamma^{t}} \left[ \sum_{t'=t}^{T-1} \gamma^{t'-t} \cdot r(S_{t'},A_{t'})  \mid S_{t+1} = S'', A_t \right] \right] \\
  &= \max_{\pi \in \Pi_{MR}^t}\erm^{\alpha \cdot \gamma^{t}} \left[ \sum_{t'=t}^{T-1} \gamma^{t'-t} \cdot r(S_{t'},A_{t'}) \mid S_t = s_t\right] ~.
\end{align*}
As before, $S_t = s_t$, $A_t \sim d$, and $A_{t'} \sim \pi_{t'}(S_{t'})$ for each $t' \ge t+1$.The derivation above shows that any $v\opt$ that satisfies \eqref{eq:v-erm-opt} equals to the definition in \eqref{eq:erm-rasr-v-opt} and is, therefore, unique. 
\end{proof}

\begin{proof}[Proof of \cref{thm:average-model}]
The proof follows immediately by reformulating the objective in~\eqref{eq:RASR-evar-return} as the dynamic programming equations in \cref{thm:dynamic-program}. These dynamic programming equations are identical for the dynamic programming equations for a risk-averse ERM objective with the average model $\bar{p} = (\bar{p}_t)_{t=0}^{T-1}$ as defined in \cref{thm:dynamic-program}.
\end{proof}

\begin{proof}[Proof of \cref{th:optimal_deterministic}]
Following the notation of Chapter 4 in \cite{Puterman2005}, let $\mathcal{H}_t$ be the set of all histories up to time $t$ inclusively. Let the optimal history-dependent value function be $u_t\opt \colon  \mathcal{H}_t \to \Real, t = 0, \dots , T$. The value function $u\opt = (u\opt_t)_{t=0}^{T}$ is achieved by the optimal history-dependent policy because the state and actions are finite and, therefore, the space of randomized history-dependent policies is compact.

The proof proceeds in 2 steps. First, we show that $v\opt$ attains the return of the optimal history-dependent value function:
\[
u\opt_t(h_t) = v\opt_t(s_t) \quad  \forall h_t\in \mathcal{H}_t, t = 0, \dots , T~,
\]
where $s_t$ is the $t$-th and final state in the history $h_t$. This result is a consequence of the dynamic programming formulation in \cref{thm:dynamic-program}. 

An argument analogous to the proof of Theorem 4.4.2(a) in \cite{Puterman2005} shows that $u\opt_t(h_t)$ only depends on $s_t$, which is the final state in the history $h_t$.

Using the standard backward-induction argument on $t$, assume that $u\opt_{t+1}(h_{t+1}) = v\opt_{t+1}(s_{t+1})$ holds and we prove that $u\opt_t(h_t) = v\opt_t(s_t)$. Let $d_t \in \Delta^A$ be the randomized decision-rule (action) that achieves $u\opt(h_t)$ where $s_t$ is the last state in the history $h_t$. Then applying \cref{thm:dynamic-program} applied to $u\opt$ (using history-dependent state space) and the inductive assumption we conclude that
\begin{align*}
  u\opt_t(h_t) &= \erm^{\alpha \cdot \gamma^t} \left[r(s_t,A) + \gamma\cdot u_{t+1}\opt ((h_t,A,S')) \mid  A \sim d_t\right] \\
               &= \erm^{\alpha \cdot \gamma^t} \left[r(s_t,A) + \gamma\cdot v_{t+1}\opt(S') \mid  A \sim d_t\right] \\
  &=  \max_{d\in \Delta^A} \erm^{\alpha \cdot \gamma^t} \left[r(s_t,A) + \gamma\cdot v_{t+1}\opt(S') \mid A \sim  d \right]  ~.
\end{align*}
The notation $(h_t,A,S')$ signifies a new history $h_{t+1}$ constructed by appending the action $A$ and state $S'$. The state $S'$ is a random variable distributed according to $\bar{p}(\cdot \mid s_t, A)$. Then, using \cref{app:sec:tech-lem} and the Bellman optimality condition in~\eqref{eq:v-erm-opt}, we conclude that
\begin{align*}
  u\opt_t(h_t) &=  \max_{d\in \Delta^A} \erm^{\alpha \cdot \gamma^t} \left[r(s_t,A) + \gamma\cdot v_{t+1}\opt(S') \mid A \sim  d \right]  \\
  &= v\opt_t(s_t).
\end{align*}

The second part of the proof is to show that $\pi\opt$ achieves the optimal value function $v\opt$:
\[
v_t^{\pi\opt}(s) = v_t\opt(s), \quad \forall s\in \states, t = 0, \dots, T~.
\]
This result follows using the standard backward induction argument and algebraic manipulation from \cref{thm:dynamic-program}. The derivation relies on the fact that $\actions$ is finite and the maximum in \eqref{eq:v-erm-opt} exists (and is achieved). The fact that $\pi\opt$ is greedy to $v\opt$ is trivial from the definition of a greedy policy.  
\end{proof}

The following lemma plays an important role in bounding the difference between ERM and the expectation. This result serves to bound the error of replacing the risk-averse value function by a risk-neutral value function in \cref{alg:rasr-vi-inf}.
\begin{lemma} \label{lem:bound-approximation}
Let $X \in \mathbb{X}$ be a bounded random variable such that $x_{\min} \le X \le x_{\max}$ a.s. Then, for any risk level $\alpha > 0$, we have $\E[X] - \epsilon(\alpha) \le \erm^{\alpha}[X] \le \E[X]$, where
\[
\epsilon(\alpha) \;=\; 8^{-1} \cdot \alpha \cdot (x_{\max}-x_{\min})^2 \,.
\]
The gap vanishes with a decreasing risk:~$\lim_{\alpha \to 0} \epsilon(\alpha) = 0$.
\end{lemma}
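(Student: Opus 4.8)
The plan is to prove the two inequalities separately. The upper bound $\erm^{\alpha}[X] \le \E[X]$ is a direct consequence of Jensen's inequality: since $\alpha > 0$ and the map $x \mapsto e^{-\alpha x}$ is convex, we have $\E[e^{-\alpha X}] \ge e^{-\alpha \E[X]}$, and applying the strictly decreasing function $t \mapsto -\alpha^{-1}\log t$ to both sides reverses the inequality to give $\erm^{\alpha}[X] = -\alpha^{-1}\log \E[e^{-\alpha X}] \le \E[X]$. (Equivalently, this is just the fact that ERM is a risk measure, so $\erm^{\alpha}[X] \le \E[X]$ by monotonicity and the general relation $\psi \le \E$ for convex risk measures; but the Jensen argument is self-contained.)

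For the lower bound, the key tool is Hoeffding's lemma: if $x_{\min} \le X \le x_{\max}$ almost surely, then for every real $\lambda$,
\[
\E\!\left[e^{\lambda (X - \E[X])}\right] \;\le\; \exp\!\left(\tfrac{1}{8}\lambda^2 (x_{\max}-x_{\min})^2\right).
\]
I would apply this with $\lambda = -\alpha$. Taking logarithms gives $\log \E[e^{-\alpha X}] + \alpha \E[X] \le \tfrac{1}{8}\alpha^2 (x_{\max}-x_{\min})^2$, i.e.\ $-\alpha^{-1}\log \E[e^{-\alpha X}] \ge \E[X] - \tfrac{1}{8}\alpha (x_{\max}-x_{\min})^2$. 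The left-hand side is exactly $\erm^{\alpha}[X]$, so $\erm^{\alpha}[X] \ge \E[X] - \epsilon(\alpha)$ with $\epsilon(\alpha) = 8^{-1}\alpha(x_{\max}-x_{\min})^2$, as claimed. The final assertion $\lim_{\alpha \to 0}\epsilon(\alpha) = 0$ is immediate since $\epsilon$ is linear in $\alpha$ with $\epsilon(0) = 0$.

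I do not expect any serious obstacle here; the proof is essentially a one-line invocation of Hoeffding's lemma plus Jensen's inequality. The only points requiring a modicum of care are bookkeeping ones: ensuring $\alpha > 0$ so that $\lambda = -\alpha \neq 0$ and the logarithm/exponential manipulations are valid, and confirming that boundedness of $X$ guarantees $\E[e^{-\alpha X}] \in (0,\infty)$ so that $\erm^{\alpha}[X]$ is well defined and finite. If one prefers not to cite Hoeffding's lemma as a black box, it can be reproved in a line or two by bounding the log-moment-generating function of a bounded random variable via convexity of the exponential on $[x_{\min}, x_{\max}]$ and optimizing, but for the purposes of this lemma citing it directly is cleanest.
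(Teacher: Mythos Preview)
Your proposal is correct and follows essentially the same route as the paper: both prove the lower bound by applying Hoeffding's lemma with $\lambda = -\alpha$ and rearranging. The only minor difference is the upper bound $\erm^{\alpha}[X] \le \E[X]$: you use Jensen's inequality directly, whereas the paper appeals to the dual (KL-divergence) representation of ERM established elsewhere in the appendix; your argument is the more elementary and self-contained of the two.
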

\begin{proof}
To streamline the notation, let $X = \mathfrak{R}^{\pi}_T$ for any policy $\pi$ which is bounded between $x_{\min}$ and $x_{\max}$. Also, recall that the Hoeffding's lemma shows that for any $\forall \lambda \in \real$ \cite{Massart2003,Boucheron2013}
\[
    \E[e^{\lambda X}] \leq e^{\lambda \E[x] + \frac{\lambda^2 (x_{\max}-x_{\min})^2}{8}}~.
  \]
Applying $\log $ to both sides of the inequality above gives
\[
    \log\big(\E[e^{\lambda X}]\big) \leq \lambda \E[x] + \frac{\lambda^2 (x_{\max}-x_{\min})^2}{8} ~.
\]
Then substitute $\lambda = -\alpha$ into the equation above to by algebraic manipulation that
\begin{align*}
    \log\big(\E[e^{-\alpha X}]\big) & \leq -\alpha\cdot  \E[x] + \frac{\alpha^2 \cdot (x_{\max}-x_{\min})^2}{8}\\
    -\frac{1}{\alpha}\log\big(\E[e^{-\alpha X}]\big) & \geq  \E[x] - \frac{\alpha (x_{\max}-x_{\min})^2}{8}\\
    \E[x] - \frac{\alpha (x_{\max}-x_{\min})^2}{8} & \leq \erm^\alpha[X]~.
\end{align*}
Therefore,  $\E[X] - \epsilon(\alpha) \le \erm^{\alpha}[X]$ where $\epsilon(\alpha) = 8^{-1}\alpha (x_{\max}-x_{\min})^2$.

The second inequality $\erm^{\alpha}[X] \leq \E[X]$ follows immediately from the dual representation of ERM used in the proof of \cref{proof:deterministic_action}. 
\end{proof}

\begin{proof}[Proof of \cref{thm:approx-error}]
The main idea of the proof is to lower-bound the value function $v^{\hat{\pi}\opt}$ of the policy $\hat{\pi}\opt$ using the value function $v^{\infty}$ of the optimal risk-neutral policy. Recall that \cref{lem:bound-approximation} bounds the error between the risk-neutral and ERM value function of any policy $\pi$ and any $t = 0, \dots$ as
\begin{equation} \label{eq:basic-bound}
    0 \le  v^{\infty}_{\pi} - v^{\pi }_t \le \frac{\alpha \cdot \gamma^t \cdot (\triangle r)^2}{8\cdot (1-\gamma)^2}~.
\end{equation}
The symbol $v^{\infty}_{\pi}$ denotes the ordinary risk-neutral ($\erm^0$) $\gamma$-discounted infinite-horizon value function of the policy $\pi$. Note that this value function is stationary. The left-hand side of the equation above holds because $\E$ is an upper bound on the ERM.

As the first step of the proof, we bound the error at time $T'$ as follows. Consider any state $s\in \mathcal{S}$, then:
\begin{align*}
  v\opt_{T'} (s)- v_{T'}^{\hat{\pi}\opt}(s) &\le  v\opt_{T'}(s) - v^{\infty}_{\hat{\pi}\opt}(s) +  \frac{\alpha\cdot \gamma^{T'} \cdot (\triangle r)^2}{8\cdot (1-\gamma)^2} && \text{from r.h.s of~\eqref{eq:basic-bound}} \\
  &\le  v^{\infty}_{\pi\opt}(s) - v^{\infty}_{\hat{\pi}\opt}(s) +  \frac{\alpha\cdot \gamma^{T'} \cdot (\triangle r)^2}{8\cdot (1-\gamma)^2} && \text{from l.h.s. of~\eqref{eq:basic-bound}} \\
  &\le  \frac{\alpha \cdot \gamma^{T'} \cdot (\triangle r)^2}{8\cdot (1-\gamma)^2} && \text{from } \hat{\pi}\opt\in \arg \max_{\pi \in \Pi} v^{\infty}_{\pi }(s) ~.
\end{align*}

As the second step of the proof, we construct an approximation $u_t\in \Real^S, t = 0, \dots ,T'$ of the value function $v_t^{\hat{\pi}\opt}$ for $t = 0, \dots, T'-1$ and all $s\in \states$ as:
\begin{align*}
  u_{T'}(s) &= v^{\infty}_{\hat{\pi}\opt} -  \frac{\alpha \cdot \gamma^{T'} \cdot (\triangle r)^2}{8\cdot (1-\gamma)^2} \\
  u_t(s) &= \max_{a\in \actions} \erm^{t \cdot \gamma^t} \left[ r(s,a) + \gamma \cdot u_{t+1}(S'_{t+1,a}) \right] \\
  &= \erm^{t \cdot \gamma^t} \left[ r(s,\hat{\pi}\opt(s)) + \gamma \cdot u_{t+1}(S'_{t+1,\hat{\pi}\opt(s)}) \right] ~,
\end{align*}
where $S'_{t+1,a}$ denotes the random variable that represents the state that follows $s$ at time $t+1$ after taking an action $a$. The last equality holds from the definition of $\hat{\pi}\opt_t$ being greedy to $u_t$; subtracting a constant from all states does not change the greedy policy. The function $u_t$ is constructed to be a lower bound on $v_t^{\hat{\pi}\opt}$ and at the same time be a value such that $\hat{\pi}\opt$ is greedy to it. 

From~\eqref{eq:basic-bound}, we have that $v^{\pi}_{T'}(s) \ge u_{T'}(s)$ for all $s\in \states$. Then, assuming $v^{\pi}_{t+1}(s) \ge u_{t+1}(s)$ for all $s\in\states$, we can use backward induction on $t$ to show that
\begin{align*}
  v^{\hat{\pi}\opt}_t(s) - u_t(s)  &= \erm^{t \cdot \gamma^t} \left[ r(s,\hat{\pi}\opt_t(s)) + \gamma \cdot v^{\hat{\pi}\opt}_{t+1}(S'_{t+1,\hat{\pi}\opt_t(s)}) \right] - \\
                                &\qquad  - \erm^{t \cdot \gamma^t} \left[ r(s,\hat{\pi}\opt_t(s)) + \gamma \cdot u_{t+1}(S'_{t+1,\hat{\pi}\opt_t(s)}) \right] \\
     &\stackrel{\text{(a)}}{=} \erm^{t \cdot \gamma^t} \left[ \gamma \cdot v^{\hat{\pi}\opt}_{t+1}(S'_{t+1,\hat{\pi}\opt_t(s)}) \right] -  \erm^{t \cdot \gamma^t} \left[ \gamma \cdot u_{t+1}(S'_{t+1,\hat{\pi}\opt_t(s)}) \right]  \\
     &\stackrel{\text{(b)}}{=} \gamma \cdot\left(\erm^{t \cdot \gamma^{t+1}} \left[  v^{\hat{\pi}\opt}_{t+1}(S'_{t+1,\hat{\pi}\opt_t(s)}) \right] -  \erm^{t \cdot \gamma^{t+1}} \left[  u_{t+1}(S'_{t+1,\hat{\pi}\opt_t(s)}) \right] \right) \\
     &\stackrel{\text{(c)}}{\ge} 0~.
\end{align*}
The equality (a) is shown by subtracting the constant reward from both terms which can be done because ERM is translation equivariant. The equality (b) follows from the positive quasi-homogeneity in \cref{thm:pos-quasi-homogen}, and (c) follows from the monotonicity of ERM.

As the third step of the proof, we show for each $s\in \states$ and $t = 0, \dots, T'$ that
\begin{equation}\label{eq:v-u-bound}
 v_t\opt (s) - u_t(s) \le \gamma^{T' - t}  \cdot \frac{\alpha \cdot \gamma^{T'} \cdot (\triangle r)^2}{8\cdot (1-\gamma)^2}~.
\end{equation}
The inequality~\eqref{eq:v-u-bound} holds for $t=T'$ from~\eqref{eq:basic-bound} and the construction of $u_{T'}$. To prove~\eqref{eq:v-u-bound} by induction, assume it holds for $t+1$. Then for each $s\in \states$:
\begin{align}
  \notag v_t\opt (s) -  u_t(s) &\stackrel{\text{(a)}}{=} \erm^{t \cdot \gamma^t} \left[ r(s,\pi\opt_t (s)) + \gamma \cdot v\opt_{t+1}(S'_{t+1,\pi\opt_t(s)}) \right] - \\
  \notag &\qquad  - \erm^{t \cdot \gamma^t} \left[ r(s,\hat{\pi}\opt_t(s)) + \gamma \cdot u_{t+1}(S'_{t+1,\hat{\pi}\opt_t(s)}) \right] \\
  \notag &\stackrel{\text{(b)}}{=} \erm^{t \cdot \gamma^t} \left[ r(s,\pi\opt_t (s)) + \gamma \cdot v\opt_{t+1}(S'_{t+1,\pi\opt_t(s)}) \right] - \\
  \notag &\qquad  - \erm^{t \cdot \gamma^t} \left[ r(s,\pi\opt_t(s)) + \gamma \cdot u_{t+1}(S'_{t+1,\pi\opt_t(s)}) \right] \\
    \notag &\stackrel{\text{(c)}}{=} \erm^{t \cdot \gamma^t} \left[ \gamma \cdot v^{\hat{\pi}\opt}_{t+1}(S'_{t+1,\pi\opt_t(s)}) \right] -  \erm^{t \cdot \gamma^t} \left[ \gamma \cdot u_{t+1}(S'_{t+1,\pi\opt_t(s)}) \right]  \\
     \label{eq:ubound-rhs} &\stackrel{\text{(d)}}{=} \gamma \cdot\left(\erm^{t \cdot \gamma^{t+1}} \left[  v^{\pi\opt}_{t+1}(S'_{t+1,\pi\opt(s)}) \right] -  \erm^{t \cdot \gamma^{t+1}} \left[  u_{t+1}(S'_{t+1,\pi\opt(s)}) \right] \right)
\end{align}
The identity (a) is true by the definition of $v\opt_t$ and $u_t$, (b) follows from $\hat{\pi}\opt$ being greedy to $u$, (c) follows by subtracting the constant reward from both terms which can be done because ERM is translation equivariant. Finally, the equality (d) follows from the positive quasi-homogeneity in \cref{thm:pos-quasi-homogen}. Then, from the inductive assumption we get the desired inequality from the monotonicity and translation equivariance of ERM by bounding the terms in~\eqref{eq:ubound-rhs} above as:
\begin{align}
  \nonumber
  v_{t+1}^{\pi\opt}(s) - u_{t+1}(s) &\le \gamma^{T' - t - 1}  \cdot \frac{\alpha \cdot \gamma^{T'} \cdot (\triangle r)^2}{8\cdot (1-\gamma)^2}  \qquad \forall  s \in \states \\
  \label{eq:approx-proof-bound}
  \erm^{t \cdot \gamma^{t+1}}[v_{t+1}^{\pi\opt}(S)] - \erm^{t \cdot \gamma^{t+1}}[u_{t+1}(S)] &\le \gamma^{T' - t - 1}  \cdot \frac{\alpha \cdot \gamma^{T'} \cdot (\triangle r)^2}{8\cdot (1-\gamma)^2} \\
  \nonumber
\gamma \cdot (\erm^{t \cdot \gamma^{t+1}} [v_{t+1}^{\pi\opt}(S)] - \erm^{t \cdot \gamma^{t+1}} [u_{t+1}(S)] ) &\le \gamma^{T' - t }  \cdot \frac{\alpha \cdot \gamma^{T'} \cdot (\triangle r)^2}{8\cdot (1-\gamma)^2} ~.
\end{align}
Because inequality in~\eqref{eq:approx-proof-bound} holds for any random variables $S$, we can substitute $S = S'_{t+1,\pi\opt _{t+1}(s)}$ from~\eqref{eq:ubound-rhs}. The substitution then proves the bound on $u_t$.

The theorem then follows from the properties established above as
\begin{align*}
  \erm^{\alpha} \big[ \mathfrak{R}_{\infty}^{\pi\opt} \mid P=\bar{p} \big] - 
\erm^{\alpha} \big[ \mathfrak{R}_{\infty}^{\hat{\pi}\opt} \mid P=\bar{p} \big]
  &=  v_0\opt (s_0) - v^{\hat{\pi}\opt}_0(s_0) \le  v_0\opt (s_0) - u_0 \\
  &\le  \frac{\alpha \cdot \gamma^{2\cdot T'} \cdot (\triangle r)^2}{8\cdot (1-\gamma)^2}~.
\end{align*}
\end{proof}


\section{Proofs of \cref{sec:rasr-evar}}
\label{app:sec:EVaR}

\begin{figure}
  \centering
 \includegraphics[width=0.45\textwidth]{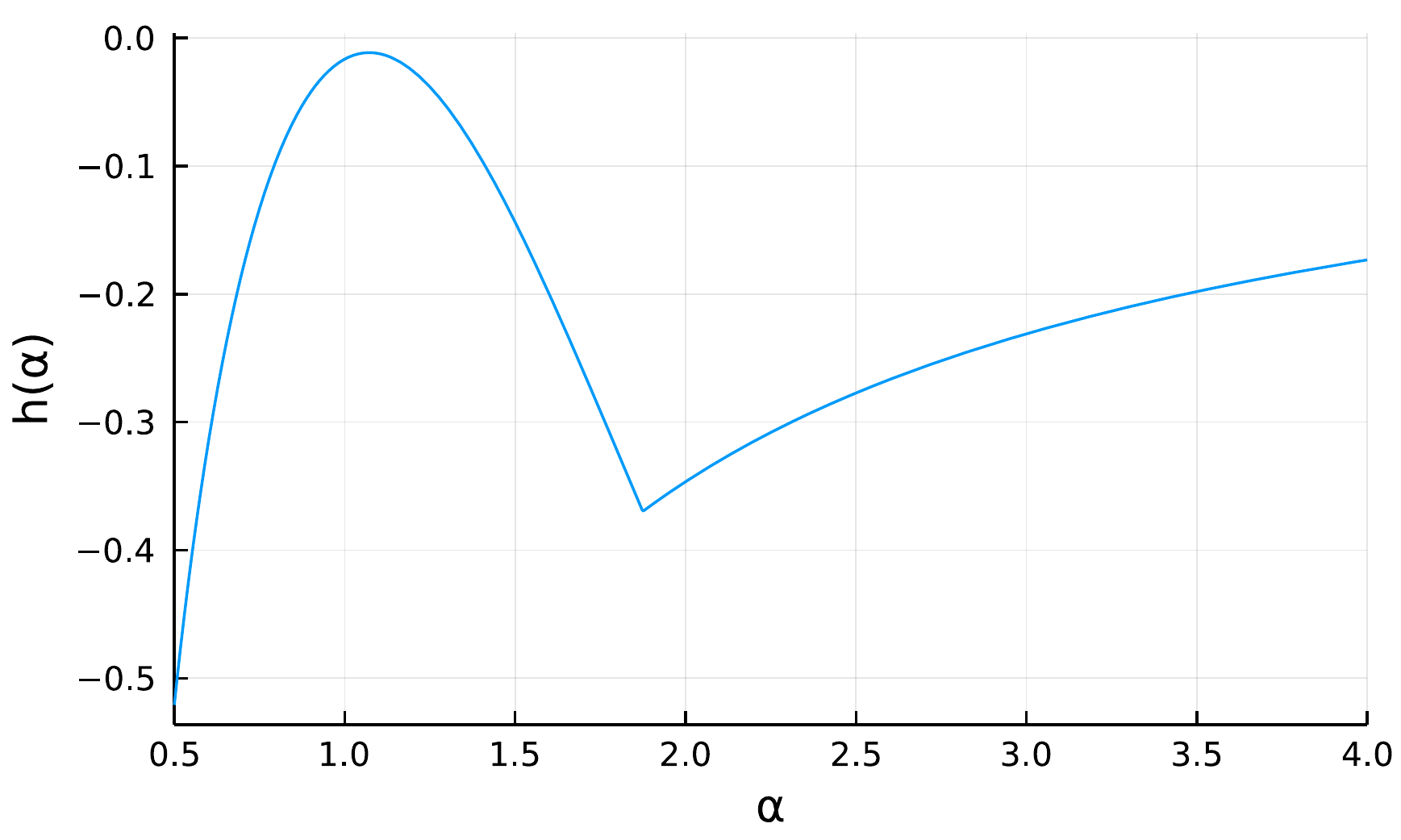}
 \hspace{0.08\textwidth}
 \includegraphics[width=0.45\textwidth]{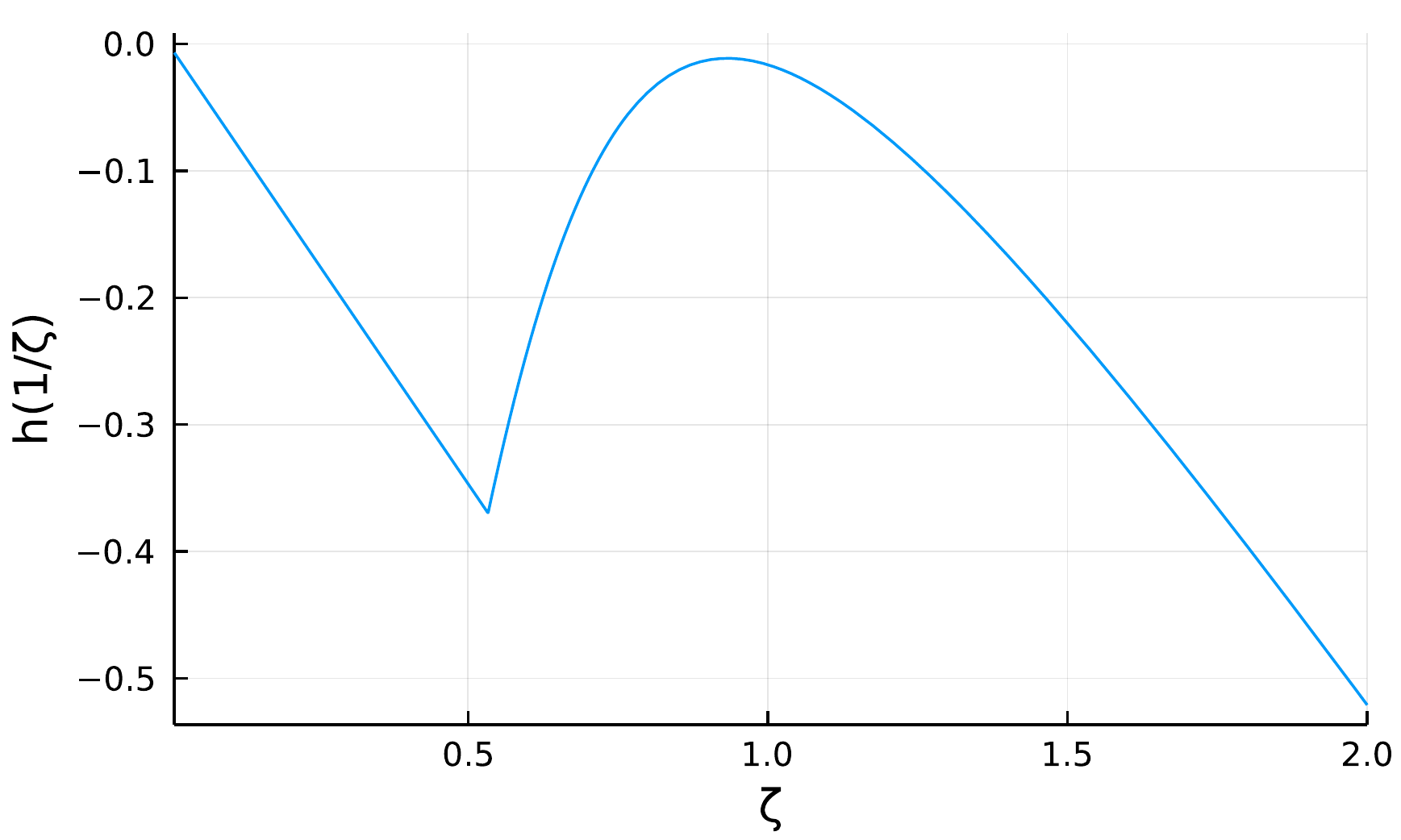}
  \caption{Plots of $h(\alpha)$ (left) and $h(\zeta^{-1})$ (right) for $\beta=0.5$, which are used in the proof of \cref{prop:non-concave}.}
  \label{fig:h-counter-proof}
\end{figure}

Recall that the function $h\colon \Real \to \Real$ is defined in \eqref{eq:rasr-evar-ref} as
\[
 h(\alpha) \;=\; \max_{\pi\in\Pi_{MR}}\, \left(\erm^\alpha[\mathfrak{R}_{T}^\pi] + \alpha^{-1} \cdot \log(1-\beta) \right) ~.
\]
Because solving RASR-EVaR reduces to computing $\max_{\alpha \ge 0} h(\alpha)$, it would be ideal if it were concave or at least quasi-concave. When $|\Pi_{MR}| = 1$, it is known that the function $\zeta \mapsto h(\zeta^{-1})$ is concave~\cite{Ahmadi-Javid2012} and $h$ is quasi-concave. Unfortunately, the following proposition shows that $h$ is not necessarily quasi-concave when $|\Pi_{MR}| > 1$ which precludes the use of more efficient optimization methods when solving $\max_{\alpha \ge 0} h(\alpha)$.
\begin{proposition}\label{prop:non-concave}
There exists an MDP and $\beta \in [0,1)$ such that the function $h\colon  \Real \to \Real$ defined in \eqref{eq:rasr-evar-ref} is neither concave nor convex either in $\alpha$ or $\alpha^{-1}$.
\end{proposition}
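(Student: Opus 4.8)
The plan is to collapse \cref{prop:non-concave} to an explicit one-dimensional function and then read the four failures (non-concavity and non-convexity, in $\alpha$ and in $\alpha^{-1}$) off its graph. It is enough to produce two return distributions $X_1,X_2$ realizable as policy returns in some MDP, since any pair of finitely supported random variables arises this way: take a two-stage MDP with initial state $s_0$, two actions, zero reward at $s_0$, $\gamma=1$, and each action $a_i$ inducing a chosen distribution over absorbing states whose terminal rewards realize $X_i$ (the epistemic uncertainty may be taken trivial, or one simply works with $\bar p$ via \cref{thm:average-model}). Then $\Pi_{MR}$ is effectively $\{1,2\}$ and, by \eqref{eq:rasr-evar-ref}, $h(\alpha)=\max_{i\in\{1,2\}}h_i(\alpha)$ with $h_i(\alpha)=\erm^{\alpha}[X_i]+\alpha^{-1}\log(1-\beta)=-\alpha^{-1}\log\E[e^{-\alpha X_i}]+\alpha^{-1}\log(1-\beta)$. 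I would take $\beta=\tfrac12$ and, say, $X_1$ uniform on $\{0,10\}$ and $X_2$ uniform on $\{2,6\}$, which gives the clean closed forms $h_1(\alpha)=-\alpha^{-1}\log(1+e^{-10\alpha})$ and $h_2(\alpha)=2-\alpha^{-1}\log(1+e^{-4\alpha})$. Here $\E[X_1]=5>4=\E[X_2]$ but $\operatorname{ess}\inf X_1=0<2=\operatorname{ess}\inf X_2$; since $h_i(\alpha)=\E[X_i]+O(\alpha)+\alpha^{-1}\log(1-\beta)$ as $\alpha\to0^+$ and $h_i(\alpha)\to\operatorname{ess}\inf X_i$ as $\alpha\to\infty$, curve $1$ is the envelope for small $\alpha$ and curve $2$ for large $\alpha$, so the two graphs cross transversally at a unique $\alpha^\star$.

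For non-concavity (in both parametrizations) I would use the crossing point. On either side of $\alpha^\star$ the envelope coincides with one of $h_1,h_2$, so $h$ has one-sided slopes $h_1'(\alpha^\star)$ (left) and $h_2'(\alpha^\star)$ (right); since curve $2$ overtakes curve $1$ there, $(h_2-h_1)'(\alpha^\star)\ge 0$, i.e.\ the slope of the envelope jumps \emph{up} at $\alpha^\star$, which is incompatible with concavity. The identical slope jump occurs for the reparametrized envelope $\zeta\mapsto h(\zeta^{-1})=\max_i\bigl(\erm^{1/\zeta}[X_i]+\zeta\log(1-\beta)\bigr)$ at $\zeta^\star=1/\alpha^\star$, so $h$ is not concave in $\alpha^{-1}$ either. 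Note this step does not use concavity of the individual $h_i$ --- which fails for $\erm$ in the $\alpha$-variable in general --- only the slope jump produced by the pointwise maximum.

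For non-convexity in $\alpha$ I would use a global shape argument: by \cref{lem:bound-approximation}, $h(\alpha)\le\max_i\erm^{\alpha}[X_i]\le\max_i\E[X_i]=:M$ for every $\alpha>0$, so $h$ is bounded above on $(0,\infty)$; a convex function bounded above on a right-unbounded interval must be non-increasing, yet $h(\alpha)\to-\infty$ as $\alpha\to0^+$ (because $\log(1-\beta)<0$) while $\lim_{\alpha\to\infty}h(\alpha)=\max_i\operatorname{ess}\inf X_i>-\infty$, a contradiction. For non-convexity in $\alpha^{-1}$ the boundedness argument is inconclusive, so instead I would invoke the classical fact~\cite{Ahmadi-Javid2012} that each $\zeta\mapsto\erm^{1/\zeta}[X_i]+\zeta\log(1-\beta)$ is concave; being analytic and non-affine for a non-degenerate $X_i$, it is strictly concave on some subinterval, and I would choose the atoms of $X_1,X_2$ so that this subinterval lies where curve $i$ is the envelope --- there $h(\zeta^{-1})$ equals a strictly concave function, so three of its points violate convexity. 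Finally I would certify all four inequalities numerically for the chosen data, which is exactly what \cref{fig:h-counter-proof} displays.

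The hard part is the bookkeeping of meeting all four requirements with a single instance: the two envelope branches must cross transversally with distinct slopes in \emph{both} the $\alpha$ and the $\alpha^{-1}$ coordinate, and one branch must sit strictly above the other over a full subinterval so that its strict concavity is inherited by the envelope; verifying that a concrete choice of $X_1,X_2,\beta$ accomplishes this --- and in particular that the (non-obvious, non-concave) shape of $\alpha\mapsto\erm^{\alpha}[X_i]$ does not accidentally render $h$ convex --- is the delicate computation underlying \cref{fig:h-counter-proof}.
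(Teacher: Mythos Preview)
Your proposal is correct and shares the paper's core idea: exhibit a two-action MDP so that $h$ is the pointwise maximum of two explicit branches $h_1,h_2$, and then read off the four failures. The paper's concrete instance is different (one action yields a deterministic return, the other a skewed two-point distribution, again with $\beta=\tfrac12$), and its argument is considerably terser: it simply names three sample points $\alpha\in\{1,2,4\}$, asserts that evaluating $h$ there certifies non-quasi-concavity in both parametrizations, and refers to \cref{fig:h-counter-proof} for the full picture. Your route supplies an explanation of \emph{why} a two-branch envelope must fail---the transversal crossing forces an upward slope jump, killing concavity in either coordinate; global boundedness of $h$ together with $h(\alpha)\to-\infty$ as $\alpha\to0^+$ rules out convexity in $\alpha$; and strict concavity of a single dominating branch in $\zeta=\alpha^{-1}$ rules out convexity there---at the cost of having to check transversality and branch dominance for your particular $X_1,X_2$, which you correctly flag as the residual computation. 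Both approaches ultimately rest on a numerical certificate; yours just makes the mechanism visible.
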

\begin{proof}
We show the property by constructing a counter-example for which the function $h$ is not concave. Consider a risk-averse problem (no epistemic uncertainty) with an MDP with states $\states  = \{s_0, s_1, s_2, s_3\}$, actions $\actions = \{a_1, a_2\}$, and a finite-horizon objective with $\gamma = 1$ and $T = 2$. The initial state is $s_0$. Define the MDP's parameters as
\begin{align*}
    p(\cdot  \mid s_0, a_1) &= [0, 0, 1, 0] \\
    p(\cdot  \mid s_0, a_2) &= [0, 0.02, 0, 0.98] \\
    r(\cdot, a_i) &= [-2, 0, 1], \quad  i \in  \{1,2\}~.
\end{align*}
The transition probabilities from $s_1, s_2, s_3$ are irrelevant because of the limited horizon $T=2$. Setting $\beta = 0.5$, one can easily verify that $h(\alpha)$ and $h(\zeta^{-1})$ are not quasi-concave by computing $h(1)$, $h(2)$, and $h(4)$. \Cref{fig:h-counter-proof} shows the plots of both $h(\alpha)$ and $h(\zeta^{-1})$.  
\end{proof}

\begin{proof}[Proof of \cref{thm:equivalence-evar-erm}]
We prove the contra-positive: If $\pi \opt$ is not optimal policy in RASR-ERM for all $\alpha > 0$, then $\pi \opt$ is not an optimal solution to RASR-EVaR. Assume $\pi \opt$ is not an optimal policy for all $\alpha >0$, and $\pi_\alpha$ is an optimal policy for RASR-ERM$^\alpha$,
\begin{align*}
    \erm^\alpha\left[X^{\pi \opt}\right] & < \erm^\alpha\left[X^{\pi_\alpha}\right] \qquad , \forall ~ \alpha > 0 \\
    \sup_{\alpha >0} \left\{\erm^\alpha[X^{\pi \opt}] + \frac{\log(1-\beta)}{\alpha}\right\} 
    & < \sup_{\alpha >0} \left\{\erm^\alpha[X^{\pi_\alpha}] + \frac{\log(1-\beta)}{\alpha}\right\}  \\
    \evar^\beta[X] & < \sup_{\alpha >0} \left\{\erm^\alpha\left[X^{\pi_\alpha}\right] + \frac{\log(1-\beta)}{\alpha}\right\}
\end{align*}
We prove that if $\pi \opt$ is not optimal policy in RASR-ERM for all $\alpha > 0$, then $\pi \opt$ is not an optimal solution to RASR-EVaR. With contra-positive we prove that if $\pi\opt$ is an optimal solution to RASR-EVaR$^\beta$ in~\eqref{eq:RASR-evar-return} then there exists a value $\alpha\opt$ such that $\pi\opt$ is optimal in RASR-ERM with risk level $\alpha = \alpha\opt$.
\end{proof}

\begin{proof}[Proof of \cref{cor:evar-markov}]
\cref{thm:equivalence-evar-erm} shows that the optimal policy $\pi \opt$ for $\evar^\beta[X^{\pi \opt}]$ implies there exists $\alpha\opt$ such that $\erm^{\alpha \opt}[X^{\pi \opt}]$ is optimal in RASR-ERM and \cref{th:optimal_deterministic} shows that there exists a Markov deterministic time-dependent optimal policy $\pi \opt = (\pi_t\opt)_{t=0}^{T-1}\in\Pi_{MD}$ for~\eqref{eq:RASR-erm-return}. Therefore there exists a Markov deterministic time-dependent optimal policy $\pi \opt$ which optimizes the EVaR objective in~\eqref{eq:RASR-evar-return}.

The second part of the corollary can be shown as follows. For any policy $\pi \in \Pi_{MR}$, the RASR-EVaR objective in~\eqref{eq:RASR-erm-return} can be written as 
\begin{align*}
  \evar^{\beta} \left[  \mathfrak{R}_{T}^\pi \right] &= \sup_{\alpha >0} \left( \erm^\alpha[\mathfrak{R}_{T}^\pi] + \frac{\log(1-\beta)}{\alpha} \right) \\
&= \sup_{\alpha >0} \left( \erm^\alpha[\mathfrak{R}_{T}^\pi \mid P=\bar{p}] + \frac{\log(1-\beta)}{\alpha} \right) \\
 &=  \evar^{\beta} \left[ \mathfrak{R}_{T}^\pi \mid P=\bar{p} \right]\;.
\end{align*}
\end{proof}

The following lemma plays an important role in bounding the error introduced by discretizing the risk level $\alpha$ in \cref{alg:ERM_EVAR}. 
\begin{lemma}\label{lem:evar-discretize}
Suppose that $\alpha\opt$ attains the maximum in~\eqref{eq:rasr-evar-ref} then $\alpha_0 \ge \alpha\opt \ge \alpha_K$, and $h(\hat{\alpha}) \ge h(\alpha_k)$ for $k = 0, \dots , K$ and some $\alpha_0 \ge \dots \ge \alpha_K$. Then
\begin{equation*}
h(\alpha\opt) - h(\hat{\alpha}) \le \log(1-\beta) \max_{k\in 0, \dots, K-1}  \left(\alpha_k^{-1} - \alpha_{k+1}^{-1}\right)  ~.
\end{equation*}
Also, $h(\alpha\opt) - h(\hat{\alpha}) \le - \log(1-\beta) \alpha_0^{-1}$ when $\alpha\opt > \alpha_0$.
\end{lemma}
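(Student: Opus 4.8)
The plan is to split $h$ into its policy-dependent and policy-independent parts. Since the term $\alpha^{-1}\log(1-\beta)$ does not depend on $\pi$, we may write $h(\alpha) = g(\alpha) + \alpha^{-1}\log(1-\beta)$, where $g(\alpha) \defn \max_{\pi\in\Pi_{MR}} \erm^{\alpha}[\mathfrak{R}_T^\pi]$. The one auxiliary fact I would record first is that $g$ is non-increasing in $\alpha$: for each fixed $\pi$, the map $\alpha \mapsto \erm^{\alpha}[\mathfrak{R}_T^\pi]$ is non-increasing (a standard monotonicity property of ERM, which interpolates monotonically from $\E[\mathfrak{R}_T^\pi]$ at $\alpha = 0$ down to $\operatorname{ess}\inf\mathfrak{R}_T^\pi$ as $\alpha\to\infty$, and can be obtained from Jensen's inequality applied to $\alpha\mapsto\log\E[e^{-\alpha\mathfrak{R}_T^\pi}]$), and a pointwise maximum of non-increasing functions is non-increasing.

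For the main inequality, suppose $\alpha\opt \in [\alpha_{k+1},\alpha_k]$ for some $k\in\{0,\dots,K-1\}$; such a $k$ exists because $\alpha_K \le \alpha\opt \le \alpha_0$ and the grid is decreasing. Since $g$ is non-increasing and $\alpha_{k+1}\le \alpha\opt$, we have $g(\alpha_{k+1}) \ge g(\alpha\opt)$, and therefore
\[
h(\alpha\opt) - h(\alpha_{k+1}) \;\le\; \bigl((\alpha\opt)^{-1} - \alpha_{k+1}^{-1}\bigr)\,\log(1-\beta).
\]
Now $\alpha\opt \le \alpha_k$ gives $(\alpha\opt)^{-1} \ge \alpha_k^{-1}$, and multiplying by the nonpositive number $\log(1-\beta)$ reverses the inequality, so the right-hand side is at most $\bigl(\alpha_k^{-1}-\alpha_{k+1}^{-1}\bigr)\log(1-\beta) \le \max_{j\in\{0,\dots,K-1\}}\bigl(\alpha_j^{-1}-\alpha_{j+1}^{-1}\bigr)\log(1-\beta)$. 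Since $\hat\alpha$ satisfies $h(\hat\alpha)\ge h(\alpha_{k+1})$, combining the displays yields $h(\alpha\opt)-h(\hat\alpha) \le \log(1-\beta)\max_{k}(\alpha_k^{-1}-\alpha_{k+1}^{-1})$, as claimed.

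The second statement is handled the same way: if $\alpha\opt > \alpha_0$ then $g(\alpha_0)\ge g(\alpha\opt)$, hence $h(\alpha\opt)-h(\alpha_0) \le \bigl((\alpha\opt)^{-1}-\alpha_0^{-1}\bigr)\log(1-\beta) \le -\alpha_0^{-1}\log(1-\beta)$, where the last step uses $(\alpha\opt)^{-1}\log(1-\beta) \le 0$; then $h(\hat\alpha)\ge h(\alpha_0)$ finishes the argument. The only real obstacle here is sign bookkeeping: $\log(1-\beta)$ is negative and the grid $\alpha_0\ge\cdots\ge\alpha_K$ is decreasing, so several intermediate inequalities look reversed until one consistently tracks the sign of $\log(1-\beta)$. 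That is precisely why I would isolate the monotonicity-of-$g$ statement and pull the $\alpha^{-1}\log(1-\beta)$ term out explicitly before doing any estimation.
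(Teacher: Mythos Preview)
Your proof is correct and follows essentially the same approach as the paper's: both arguments sandwich $\alpha\opt$ between consecutive grid points and exploit that $\alpha\mapsto\max_{\pi}\erm^{\alpha}[\mathfrak{R}_T^\pi]$ is non-increasing, so the gap $h(\alpha\opt)-h(\alpha_{k+1})$ is controlled entirely by the $\alpha^{-1}\log(1-\beta)$ term. Your version is slightly cleaner in that you explicitly isolate the decomposition $h=g+\alpha^{-1}\log(1-\beta)$ and state the monotonicity of $g$ once, whereas the paper establishes the same inequality through a chain involving the optimal policies $\pi_{\alpha_l},\pi_{\alpha\opt},\pi_{\alpha_{l+1}}$; the content is identical.
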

\begin{proof}
Given that the optimal risk $\alpha_{l+1} \leq \alpha \opt \leq \alpha_{l}$, where $\alpha_l$ and $\alpha_{l+1}$ are in the set of ERM levels $\Lambda$ we have computed. We can bound 
\begin{align*}
    \evar^\beta(X) - \max_{\alpha \in \Lambda} \left\{\erm^\alpha[X] + \frac{\log(1-\beta)}{\alpha}\right\} \leq  \log(1-\beta) \left(\frac{1}{\alpha_l} - \frac{1}{\alpha_{l+1}}\right)
\end{align*}
By the monotonicity property of ERM we get
\begin{align*}
    \erm^{\alpha_l}[X^\pi_{\alpha_{l+1} }] &\leq \erm^{\alpha_l}[X^\pi_{\alpha_{l}}] 
           \leq \erm^{\alpha{\opt}}[X^\pi_{\alpha_{l}}] \\
    &\leq \erm^{\alpha\opt}[X^\pi_{\alpha \opt}] 
    \leq \erm^{\alpha_{l+1}}[X^\pi_{\alpha \opt}] \leq \erm^{\alpha_{l+1}}[X^\pi_{\alpha_{l+1}}]
\end{align*}
where $X^\pi_{\alpha}$ refers to the total discounted reward distribution deploying the optimal policy of $\erm^\alpha$. On the other hand, 
\begin{align*}
    &\frac{\log(1-\beta)}{\alpha_{l+1}} \leq \frac{\log(1-\beta)}{\alpha\opt}  \leq \frac{\log(1-\beta)}{\alpha_{l}} ~.
\end{align*}
Using the inequality above, we can conclude that
\begin{align*}
    \erm^{\alpha_l}[X^{\pi}_{\alpha_{l}}] + \frac{\log(1-\beta)}{\alpha_{l+1}}
    \leq \erm^{\alpha {\opt}}[X^\pi_{\alpha \opt}] + \frac{\log(1-\beta)}{\alpha \opt} 
    \leq \erm^{\alpha_{l+1}}[X^{\pi}_{\alpha_{l+1}}] + \frac{\log(1-\beta)}{\alpha_{l}}
\end{align*}
Therefore,
\begin{align*}
    &\evar^\beta(X) - \max_{\alpha \in \Lambda} \left\{\erm^\alpha[X] + \frac{\log(1-\beta)}{\alpha}\right\} \\
    &\leq \erm^{\alpha {\opt}}[X^\pi_{\alpha \opt}] + \frac{\log(1-\beta)}{\alpha \opt}  - \max_{\alpha \in \{\alpha_{l+1}\} }\left\{\erm^\alpha[X^\pi_\alpha] + \frac{\log(1-\beta)}{\alpha}\right\} \\
    &\leq \frac{\log(1-\beta)}{\alpha_{l}} - \frac{\log(1-\beta)}{\alpha_{l+1}}  \\
    &= \log(1-\beta) \left(\frac{1}{\alpha_l} - \frac{1}{\alpha_{l+1}}\right)~.
\end{align*}
Now we relax the assumption to $\alpha \opt \in [\alpha_0,\alpha_K]$, and conclude that
\[
h(\alpha\opt) - h(\hat{\alpha}) \le \max_{k = 0, \dots , K-1} \left\{ \log(1-\beta) \left(\frac{1}{\alpha_k} - \frac{1}{\alpha_{k+1}}\right) ~\right\}~.
\]

The last part of the theorem can be proved as follows. Given an arbitrary error tolerance $\delta$, $\beta$ and $\alpha_k$ \cref{col:opt_spacing} shows that we can set $\alpha_{k+1} = ( \frac{1}{\alpha_k} - \frac{\delta }{\log(1-\beta)})^{-1}$ such that $h(\alpha\opt) - h(\hat{\alpha}) \le \delta$. Moreover, for $\alpha \opt > \alpha_0$, given $\alpha_{0}$ and $\beta$ the error $h(\alpha\opt) - h(\hat{\alpha})  \le - \frac{\log(1-\beta)}{\alpha_{0}}$.
\end{proof}

\begin{proof}[Proof of \cref{thm:rasr-evar-bound}]
Assume $\alpha\opt \in \arg \max_{\alpha >0} h(\alpha)$ be the $\alpha$ that achieves the optimality in the definition $\evar^{\beta} \big[ X \big] = \sup_{\alpha >0} h(\alpha)$. The supremum is achieved whenever $\beta>0$ since then there exists an optimal $\alpha\opt >0$. Then, $h(\alpha\opt) \geq h(\alpha\opt + \epsilon)$ for any $\epsilon > 0$
\begin{gather*}
h(\alpha\opt) \geq h(\alpha\opt + \epsilon)\\
\erm^{\alpha\opt}[X] + \frac{\log(1-\beta)}{\alpha\opt} \geq \erm^{\alpha\opt+\epsilon}[X] + \frac{\log(1-\beta)}{ \alpha\opt+\epsilon} \\
\erm^{\alpha\opt}[X] -\erm^{\alpha\opt+\epsilon}[X] \geq  \frac{\log(1-\beta)}{ \alpha\opt+\epsilon} - \frac{\log(1-\beta)}{\alpha\opt} \\
\frac{(\triangle r)^2}{8(1-\gamma)^2} \geq \frac{d (\erm^{\alpha\opt}[X])}{d \alpha\opt} \geq \log(1-\beta) \frac{d (\alpha \opt)^{-1}}{d \alpha \opt} \\
\frac{(\triangle r)^2}{8(1-\gamma)^2} \geq -\log(1-\beta)(\alpha \opt)^{-2}\\
(\alpha \opt)^{2} \geq -\log(1-\beta) \frac{8(1-\gamma)^2}{(\triangle r)^2} \\
\alpha \opt \geq \sqrt{-8\log(1-\beta)} \frac{(1-\gamma)}{(\triangle r)}
\end{gather*}
We let $\alpha_0 \to \infty$. Then, to achieve the desired bound, we need to choose the number of points $K$ such that $\sqrt{-8 \log(1-\beta)} \frac{1-\gamma}{\triangle r} \geq \alpha_K$. Then, following the construction in \cref{col:opt_spacing}, we get that $\alpha_K = \frac{-\log(1-\beta)}{K \delta}$ and 
\begin{gather*}
\sqrt{-8 \log(1-\beta)} \frac{1-\gamma}{\triangle r} \geq \frac{-\log(1-\beta)}{K \delta}\\
K \geq \sqrt{\frac{-\log(1-\beta)}{8 }} \frac{\triangle r}{(1-\gamma) \delta }~.
\end{gather*}
We conclude the proof with \cref{lem:evar-discretize} since $\alpha_0 \geq \alpha \opt \geq \alpha_K$.
\end{proof}


\section{Technical Lemmas}
\label{app:sec:tech-lem}

The following lemma helps to show that a deterministic policy can attain the same return as a randomized policy when the objective is an ERM. This result is not surprising and derives from the fact that the $\erm^{\alpha}[X] \le \max_{\omega \in \Omega} X$ for any random variable $X\in \mathbb{X}$ defined over a finite probability space.
\begin{lemma}  \label{proof:deterministic_action}
Let $X\colon \Omega \to \mathcal{A}$ be a random variable defined over a finite action set $\mathcal{A}$ and let $g\colon  \mathcal{A}\to \Real$ be a function defined for each action. Then, for any $\alpha \ge 0$, we have that
\[
    \max_{a \in \actions} g(a) \;=\;  \max_{d \in \Delta^X} \erm^{\alpha} \left[ g(X) \mid  X \sim d \right]~.
  \]
\end{lemma}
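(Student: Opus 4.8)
The plan is to prove the identity by establishing the two inequalities separately, each of which follows from elementary properties of ERM, uniformly for $\alpha\ge 0$.

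For the direction $\max_{a\in\actions} g(a) \le \max_{d\in\Delta^A}\erm^{\alpha}\big[g(X)\mid X\sim d\big]$ (where $\Delta^A$ denotes the set of distributions over $\actions$), I would pick $a^\star \in \argmax_{a\in\actions} g(a)$ and take $d = \delta_{a^\star}$, the Dirac distribution concentrated on $a^\star$. Under this distribution $g(X)$ is the constant random variable equal to $g(a^\star)$ almost surely, and since $\erm^{\alpha}$ of a constant equals that constant (immediate from the definition in~\eqref{eq:defn_ent_risk}, since $-\alpha^{-1}\log e^{-\alpha c}=c$, or from translation equivariance applied to the zero random variable), we get $\erm^{\alpha}\big[g(X)\mid X\sim \delta_{a^\star}\big] = g(a^\star) = \max_{a\in\actions}g(a)$. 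Maximizing over $d$ can only increase this value, which gives the inequality.

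For the reverse direction, I would invoke monotonicity of ERM (A1 in \cref{def:convex-risk}): for any $d\in\Delta^A$ the random variable $g(X)$ satisfies $g(X)\le \max_{a\in\actions}g(a)$ almost surely, hence $\erm^{\alpha}\big[g(X)\mid X\sim d\big]\le \erm^{\alpha}\big[\max_{a\in\actions}g(a)\big] = \max_{a\in\actions}g(a)$, again using that ERM of a constant is the constant. Taking the maximum over $d\in\Delta^A$ preserves the inequality, so $\max_{d\in\Delta^A}\erm^{\alpha}\big[g(X)\mid X\sim d\big]\le \max_{a\in\actions}g(a)$. Combining the two inequalities yields the claimed equality.

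There is essentially no hard step here: the result is a direct consequence of ERM being monotone and constant-preserving, and the $\alpha=0$ case is automatically covered since $\erm^{0}=\E$ and the identical argument applies ($\E[g(X)]\le\max_a g(a)$, with equality under $\delta_{a^\star}$). The only point worth stating carefully is that the maximum over $d\in\Delta^A$ is attained, which holds because $\Delta^A$ is compact and $d\mapsto \erm^{\alpha}\big[g(X)\mid X\sim d\big] = -\alpha^{-1}\log\big(\sum_{a\in\actions}d(a)\,e^{-\alpha g(a)}\big)$ is continuous; but even if one only has a supremum, the argument above goes through verbatim with $\max$ replaced by $\sup$.
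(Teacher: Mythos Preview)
Your proof is correct. The first inequality is handled exactly as in the paper, by plugging in the Dirac distribution at a maximizing action and using that ERM of a constant equals the constant. For the reverse inequality, however, you take a different and more elementary route: you invoke monotonicity of ERM directly on the pointwise bound $g(X)\le \max_{a}g(a)$. The paper instead appeals to the dual (KL) representation of ERM to show $\erm^{\alpha}[g(X)\mid X\sim d]\le \E[g(X)\mid X\sim d]$ and then bounds the expectation by $\max_{a}g(a)$. Your argument is shorter and avoids importing the dual form; the paper's route has the side benefit of exhibiting the general inequality $\erm^{\alpha}\le \E$, which it reuses elsewhere (e.g., in \cref{lem:bound-approximation}).
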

\begin{proof}
We first prove the inequality $\max_{a \in \actions} g(a) \le \max_{d \in \Delta^A} \erm^{\alpha} \left[ g(X) \mid  X \sim d \right]$. Let $a\opt \in \arg\max_{a\in \actions } g(a)$ be an optimal action. We now construct a policy $\bar{d} \in \Delta^A$ as $\bar{d}(a\opt) = 1$ and $\bar{d}(a) = 0$ for each $a\in \actions \setminus \{a\opt \}$. Substituting $\bar{d}$ in the definition of ERM yields that
\begin{equation} \label{eq:erm-singleton}
\begin{aligned}
\erm^{\alpha} [g(X) \mid X \sim \bar{d}] &= - \alpha^{-1} \cdot \log \Bigl( \E\left[\exp{-\alpha\cdot  g(X)} \mid  X \sim \bar{d} \right] \Bigr)  \\
  &= - \alpha^{-1} \cdot \log \Bigl( \exp{-\alpha\cdot  g(-\alpha \cdot a\opt)} \Bigr) \\
  &= g(a\opt)~.
\end{aligned}
\end{equation}
Using~\eqref{eq:erm-singleton} and the fact that $\bar{d}$ is a valid probability distribution in $\Delta^{A}$, we obtain the desired inequality as
\[
  \max_{a \in \actions} \; g(a) = g(a\opt) = \erm^{\alpha} [g(X) \mid X \sim \bar{d}] \le  \max_{d \in \Delta^A} \; \erm^{\alpha} [g(X) \mid X \sim d] ~.
\]

To prove the converse inequality $\max_{a \in \actions} g(a) \ge \max_{d \in \Delta^{A}} \erm^{\alpha} \left[ g(X) \mid  X \sim d\right]$, let $d\opt \in \arg\max_{d \in \Delta^{A}} \erm^{\alpha} \left[ g(X) \mid X \sim d \right]$ be an optimal distribution. It will be convenient to use the dual representation of $\erm^\alpha[g(X) \mid X \sim d]$ which takes the following form for any $d \in \Delta^{A}$~(e.g.,~\cite{Ahmadi-Javid2012}):
\[
  \erm^\alpha[g(X) \mid X \sim d] \; =\; \inf_{\bar{d} \in \Delta^A}\left\{ \E[g(X) \mid X \sim \bar{d}] + \frac{1}{\alpha} \KL(\bar{d} \| d) \;\vert\; \bar{d} \ll d\right\}~,
\]
where $\KL$ is the KL-divergence and $\ll$ denotes absolute continuity. Using this dual representation, we get the following upper bound on $\erm^{\alpha} \left[ g(X)  \mid  X \sim d\opt\right]$:
\begin{align*}
\erm^\alpha[g(X) \mid X \sim d\opt] &= \inf_{\bar{d} \in \Delta^A}\left\{ \E[g(X) \mid X \sim \bar{d}] + \frac{1}{\alpha} \KL(\bar{d} \| d\opt)\;\vert\; \bar{d} \ll d\opt \right\}  \\
	&\le \E[g(X) \mid X \sim d\opt] + \frac{1}{\alpha} \KL(d\opt  \| d\opt)  \\ 
        &\stackrel{\text{(a)}}{=} \E[g(X) \mid X \sim d\opt] \\
  &\stackrel{\text{(b)}}{\leq} \max_{a\in\actions} g(a)~.
\end{align*}
The equality (a) holds $\KL(d \| d) = 0$, and equality (b) follows because $A$ is finite, and, therefore, for each $d \in \Delta^{A}$:
\[
  \max_{a\in\actions} g(a) \geq \E\left[g(X) \mid  X \sim d \right]~.
\]
This proves the second desired inequality since $d\opt \in \Delta^{A}$ and concludes the proof. 
\end{proof}

The proof techniques used to show \cref{proof:deterministic_action} would also work to to establish an equivalent property for CVaR and other coherent risk measures $\psi\colon  \mathbb{X}\to \Real$ that satisfy that $\psi(X) \le \E[X]$ for any random variable $X$.

\begin{corollary} \label{col:opt_spacing}
Given an arbitrary error tolerance $\delta$, $\beta$ and $\alpha_k$ we construct $\alpha_{k+1}$ as $\alpha_{k+1} = ( \frac{1}{\alpha_k} - \frac{\delta }{\log(1-\beta)})^{-1}$ such that $\alpha_k \ge \alpha_{k+1} > 0$ and 
\[
 \log(1-\beta) \left(\frac{1}{\alpha_k} - \frac{1}{\alpha_{k+1}}\right) = \delta ~.
\]
Moreover, given $\alpha_{k+1}$ and $\beta$ the error $\delta  \le - \frac{\log(1-\beta)}{\alpha_{k+1}}$.
\end{corollary}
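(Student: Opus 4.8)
The plan is to verify all three claims by elementary algebra; the only thing requiring care is sign bookkeeping, since for $\beta\in(0,1)$ we have $\log(1-\beta)<0$, while $\delta>0$ and $\alpha_k>0$, so inequalities reverse when one takes reciprocals or multiplies through by $\log(1-\beta)$. I would first dispatch the degenerate case $\beta=0$, where $\evar^0=\E$ and no discretization is needed; note also that in the setting of \cref{thm:rasr-evar-bound} the convention $\alpha_0=\infty$ corresponds to $1/\alpha_0=0$, which is consistent with everything below.

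First I would establish the identity. Taking reciprocals in the definition $\alpha_{k+1}=\big(\tfrac1{\alpha_k}-\tfrac{\delta}{\log(1-\beta)}\big)^{-1}$ gives $\tfrac1{\alpha_{k+1}}=\tfrac1{\alpha_k}-\tfrac{\delta}{\log(1-\beta)}$, hence $\tfrac1{\alpha_k}-\tfrac1{\alpha_{k+1}}=\tfrac{\delta}{\log(1-\beta)}$, and multiplying by $\log(1-\beta)$ yields $\log(1-\beta)\big(\tfrac1{\alpha_k}-\tfrac1{\alpha_{k+1}}\big)=\delta$, which is the asserted equality. For the ordering $\alpha_k\ge\alpha_{k+1}>0$: since $\delta>0$ and $\log(1-\beta)<0$, the term $-\tfrac{\delta}{\log(1-\beta)}$ is strictly positive, so $\tfrac1{\alpha_{k+1}}>\tfrac1{\alpha_k}>0$; in particular the denominator defining $\alpha_{k+1}$ is nonzero and positive, so $\alpha_{k+1}$ is well-defined, and inverting both (positive) sides reverses the inequality to give $0<\alpha_{k+1}<\alpha_k$.

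For the final bound I would rewrite the identity just obtained as $\delta=-\log(1-\beta)\,\big(\tfrac1{\alpha_{k+1}}-\tfrac1{\alpha_k}\big)$. Because $\tfrac1{\alpha_k}>0$ we have $\tfrac1{\alpha_{k+1}}-\tfrac1{\alpha_k}<\tfrac1{\alpha_{k+1}}$, and multiplying by the nonnegative factor $-\log(1-\beta)$ preserves this inequality, so $\delta\le-\log(1-\beta)/\alpha_{k+1}$. The entire argument uses nothing beyond arithmetic, so the only real obstacle is correctly tracking the sign reversals when taking reciprocals and multiplying by the negative quantity $\log(1-\beta)$; there is no deeper content to extract.
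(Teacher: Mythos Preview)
Your proof is correct and follows essentially the same elementary-algebra route as the paper: verify the identity by inverting the defining formula, check the ordering via the sign of $-\delta/\log(1-\beta)$, and bound $\delta$ by dropping the $1/\alpha_k$ term. The paper's write-up is slightly more roundabout (it introduces $c=\alpha_{k+1}/\alpha_k$ and, for the last inequality, passes to the limit $\alpha_k\to\infty$ rather than simply using $1/\alpha_k>0$ as you do), but the content is identical.
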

\begin{proof}
Let $\alpha_{k+1} = c \cdot  \alpha_k$ for $c \in (0,1)$, we can derive the following
\begin{align*}
    \log(1-\beta) \left(\frac{1}{\alpha_k} - \frac{1}{\alpha_{k+1}}\right) &= \delta   \\ 
    \log(1-\beta) \left(\frac{c - 1}{c \cdot \alpha_k}\right) &= \delta   \\ 
    c - 1 &= \frac{\delta \cdot c \cdot \alpha_k}{\log(1-\beta)}\\  
        c \cdot \alpha_k  \left( \frac{1}{\alpha_k} - \frac{\delta }{\log(1-\beta)}\right) &=  1  \\ 
        c \cdot \alpha_k   &=  \left( \frac{1}{\alpha_k} - \frac{\delta }{\log(1-\beta)}\right)^{-1}  \\ 
        \alpha_{k+1} &=  \left( \frac{1}{\alpha_k} - \frac{\delta }{\log(1-\beta)}\right)^{-1}
\end{align*}
Let $\alpha_k$ approach $\infty$, the reverse implication of $\alpha_{k+1}$ to the error $\delta$ can be verified to be
\begin{align*}
    \alpha_{k+1} =  \left( \frac{1}{\alpha_k} - \frac{\delta }{\log(1-\beta)}\right)^{-1} &\leq \lim_{\alpha_k \to \infty} \left( \frac{1}{\alpha_k} - \frac{\delta }{\log(1-\beta)}\right)^{-1} =-\frac{\log(1-\beta)}{\delta }~.
\end{align*}
Combining the inequalities above, we conclude that 
\begin{align*}
    \delta & \leq - \frac{\log(1-\beta)}{\alpha_{k+1}}~.
\end{align*}
\end{proof}


\section{Risk Measures}
\label{secapp:risk-measure-prop}

Consider a probability space $(\Omega,\mathcal F,P)$. 
Let $\mathbb X:\Omega\rightarrow\mathbb R$ be a space of $\mathcal F$-measurable functions (space of $\mathcal F$-measurable random variables). 

\begin{definition}[Risk Measure]
\label{def:risk}
A risk measure is a function $\psi:\mathbb X\rightarrow\mathbb R$ that maps a random variable $X\in\mathbb X$ to real numbers. 
\end{definition}

\begin{definition}[Coherent Risk Measure] \label{def:coherent-risk}
A risk measure $\psi$ is \emph{coherent} if it satisfies the following four axioms~\citep{Artzner1999}:
\begin{itemize}
\item[A1.] Monotonicity: \hfill $X_1 \leq X_2 \; (a.s.) \Longrightarrow \psi[X_1] \leq \psi[X_2],\;\;\forall X_1,X_2\in\mathbb X$.
\item[A2.] Translation Equivariance: \hfill $\psi[c+X] = c + \psi[X],\;\;\forall c\in\mathbb R,\;\forall X\in\mathbb X$. 
\item[A3.] (a) Sub-Additivity: \hfill $\psi[X_1 + X_2] \leq \psi[X_1] + \psi[X_2],\;\;\forall X_1,X_2\in\mathbb X$. \\
(b) Super-Additivity: \hfill $\psi[X_1 + X_2] \geq \psi[X_1] + \psi[X_2],\;\;\forall X_1,X_2\in\mathbb X$.
\item[A4.] Positive Homogeneity: \hfill $\psi[cX] = c\psi[X],\;\;\forall c\in\mathbb {R}_+,\;\forall X\in\mathbb X$.
\end{itemize}
Axioms A3(a) and A3(b) are used for cost minimization and reward maximization, respectively. 
\end{definition}
Common coherent risk measures include  $\cvar^\beta$, and $\evar^\beta$ that we define them below. Convex risk measures are a more general class of risk measures (than coherent risk measures) and are defined as

\begin{definition}[Convex Risk Measure] \label{def:convex-risk}
A \emph{convex} risk measure $\psi$ satisfies axioms A1 and A2 (in \cref{def:coherent-risk}) and replaces axioms A3 and A4 with the following axiom:
\begin{itemize}
\item[A5.] (a) Convexity: \hfill $\psi\big[cX_1 + (1 - c)X_2\big] \leq c\psi[X_1] + (1 - c)\psi[X_2],\;\;\forall c\in[0,1],\;\forall X_1,X_2\in\mathbb X$. \\
(b) Concavity: \hfill $\psi\big[cX_1 + (1 - c)X_2\big] \geq c\psi[X_1] + (1 - c)\psi[X_2],\;\;\forall c\in[0,1],\;\forall X_1,X_2\in\mathbb X$.
\end{itemize}
Axioms A5(a) and A5(b) are used for cost minimization and reward maximization, respectively. 
\end{definition}

Every coherent risk measure is a convex risk measure but a convex risk measure may not be coherent. In particular, if a risk measure satisfies A3 (sub or super additivity) and A4 (positive homogeneity), then it satisfies A5 (convexity), but the reverse is not always true. For instance, Entropic risk measure (ERM), defined below, is convex but not incoherent. 


\subsection{Value-at-Risk}
\label{subsec:VaR}

For a random variable $X\in\mathbb X$, its value-at-risk with confidence level $\beta$, denoted by $\var^\beta[X]$, is the $(1-\beta)$-quantile of $X$:
\begin{equation*}
\var^\beta[X] = \inf_{x\in\mathbb R}\big\{F_X(x) > 1 - \beta\big\} = F_X^{-1}(1-\beta),\quad\beta\in [0,1),
\end{equation*}
where $F_X$ is the cumulative distribution function of $X$.


\subsection{Conditional Value-at-Risk}
\label{subsec:CVaR}

For a random variable $X\in\mathbb X$, its conditional value-at-risk with confidence level $\beta$, denoted by $\cvar^\beta[X]$, is defined as the expectation of the worst $(1-\beta)$-fraction of $X$, and can be computed as the solution of the following optimization problem:
\begin{equation*}
\cvar^\beta[X] = \inf_{\zeta\in\mathbb R}\left(\zeta - \frac{1}{1 - \beta}\cdot\mathbb E\big[(\zeta - X)_+\big]\right),\quad\beta\in [0,1).
\end{equation*}
It is easy to see that $\cvar^0[X] = \mathbb E[X]$ and $\lim_{\beta\rightarrow 1}\cvar^\beta[X] = \operatorname{ess} \inf[X]$, where the \emph{essential infimum} of $X$ is defined as $\operatorname{ess} \inf[X] = \sup_{\zeta\in\mathbb R}\big\{\mathbb P(X < \zeta)=0\big\}$.


\subsection{Entropic Risk Measure}
\label{subsec:ERM}

For a random variable $X\in\mathbb X$, its entropic risk measure with risk parameter $\alpha$, denoted by $\erm^\alpha[X]$, is defined as
\begin{equation*}
\erm^\alpha[X] = -\frac{1}{\alpha}\log\left(\mathbb E[e^{-\alpha X}]\right),\quad\alpha>0.
\end{equation*}

\paragraph{Properties of ERM:}
\begin{enumerate}
\item It is easy to see that $\lim_{\alpha\rightarrow 0}\erm^\alpha[X] = \mathbb E[X]$ and $\lim_{\alpha\rightarrow \infty}\erm^\alpha[X] =\operatorname{ess}\inf[X]$.
\item For any random variable $X\in\mathbb X$, we have $\erm^\alpha[X] = \mathbb E[X] - \frac{\alpha}{2}\vari[X] + o(\alpha)$. 
\item If $X$ is a Gaussian random variable, we have $\erm^\alpha[X] = \mathbb E[X] - \frac{\alpha}{2}\vari[X]$.
\item We have $\erm^\alpha[X_1 \mid X_2] = -\frac{1}{\alpha}\log\left(\mathbb E[e^{-\alpha X_1} \mid X_2]\right),\;\forall X_1,X_2\in\mathbb X$.
\item We have $\erm^\alpha[cX] \neq c\erm^\alpha[X]$, thus, ERM does not satisfy the axiom A4 (positive homogeneity) and is not a coherent risk measure. 
\end{enumerate}


\subsection{Entropic Value-at-Risk}
\label{subsec:EVaR}

For a random variable $X\in\mathbb X$, its entropic value-at-risk with confidence level $\beta$, denoted by $\evar^\beta[X]$, is defined as
\begin{equation*}
\evar^\beta[X] = \sup_{\alpha>0}\left(\erm^\alpha[X] + \frac{\log(1 - \beta)}{\alpha}\right),\quad\beta\in[0,1).
\end{equation*}

\paragraph{Properties of EVaR:}
\begin{enumerate}
\item EVaR with confidence level $\beta$ is the tightest possible lower-bound that can be obtained from the Chernoff ineqaulity for $\var$ and $\cvar$ with confidence level $\beta$:
\begin{equation*}
\evar^\beta[X] \leq \cvar^\beta[X] \leq \var^\beta[X].    
\end{equation*}
\item The following inequality also holds for EVaR:
\begin{equation*}
\operatorname{ess}\inf[X] \leq \evar^\beta[X] \leq \mathbb E[X].      
\end{equation*}
\item It is easy to see that $\evar^0[X] = \mathbb E[X]$ and $\lim_{\beta\rightarrow 1}\evar^\beta[X] = \operatorname{ess}\inf[X]$.
\end{enumerate}


\subsection{Properties of Risk Measures}
\label{subsec:risk-measure-prop}

\Cref{tab:risk_measure_props} summarizes some properties of convex risk measures that are desirable in RL and MDP. 

\begin{table}
  \centering
  \begin{tabular}{l|ccc}
    \toprule
    Risk measure & LI & DC & PH \\
    \midrule
    $\mathbb{E}$, Min & \cmark & \cmark & \cmark \\
    CVaR & \cmark & $\cdot$ & \cmark \\
    EVaR & \cmark & $\cdot$ & \cmark \\
    ICVaR & $\cdot$ & \cmark & \cmark \\
    ERM & \cmark & \cmark & $\cdot$ \\
    \bottomrule
    \end{tabular}
  \caption{Properties of representative risk measures.}  \label{tab:risk_measure_props}
\end{table}

A \emph{law-invariant} (LI) risk measure depends only on the total return and not on the particular sequence of individual rewards~\cite{Shapiro2014}. A \emph{dynamically-consistent} (DC), or time-consistent, risk measure satisfies the tower property~\cite{Shapiro2014} and can be optimized using a dynamic program~\cite{Cvitanic1999,Pflug2005,Riedel2004,Delbaen2006,Frittelli2004,Artzner2004,Dowson2021}. Finally, a positively-homogeneous (PH) risk measure satisfies $\psi(c\cdot X) = c \cdot \psi(X)$, for any $c \ge 0$, which is an important property in the risk-averse parameter selection and discounted setting~\cite{Artzner1999,Follmer2011}. Unfortunately, expectation ($\E[\cdot]$) and minimum (Min) are the only convex risk measures that satisfy all the desirable conditions. In \cref{tab:risk_measure_props}, ICVaR is an iterated version of CVaR~\cite{Petrik2012b,Iancu2015a}.


\section{Additional Experimental Results and Details}\label{sec:experiments-detail}



To remove the bias of the hyperparameter selection across domains algorithm, we use the same set $\Lambda$ for all domains when computing the EVaR solution. In all the numerical results in the paper, we only call \cref{alg:rasr-vi-inf} once ($K=1$) by using $\alpha = e^{10}$, $T' =(10+15)/(1-\gamma)$ without discarding any intermediate $\alpha_t$. By doing so, we have $\alpha_{0:(T'+1)} = \{e^{10}, e^{10}\gamma,e^{10}\gamma^2, ..., e^{10}\gamma^{T'},0 \} = \Lambda$ for EVaR where $e^{-15} > e^{10}\gamma^{T'} \approx 0$. This method allow us to generate each $\alpha_t$ beyond $0$ in one single value iteration. 

Furthermore, for the \cref{tab:evar_099} and \cref{fig:combined_barplot} in the main body of the paper, we sample 100,000 Monte-Carlo instances with 1,000 time horizon for each instance which take days to compute.

Just like RASR-ERM, \emph{naive} and \emph{Erik} algorithms require one to choose a risk parameter $\alpha \in (0, \infty)$. The range of $\alpha$ makes it challenging to compare  algorithms that optimize ERM with other algorithms that target VaR, CVaR, and EVaR. VaR, CVaR, and EVaR use a parameter $\beta \in [0,1]$ and can all be seen as approximations of each other. To ensure that the comparison with RASR-EVaR and other algorithms is fair, we use optimal $\alpha\opt$ computed in \cref{alg:ERM_EVAR}. 

We use a fixed seed of 1, sample only 10,000 Monte-Carlo instances, and use only 500 time horizon for each instance. The risk of return in the appendix are consistent with the paper despite generated with different Monte Carlo samples. In \cref{tab:appendix_Results}, all other baselines except \emph{Derman} perform badly in \emph{population}, and \emph{Derman} performs poorly in \emph{river-swim}. However, RASR is able to consistently mitigate risk of return when measured in all VaR, CVaR, and EVaR for all domains. Moreover, RASR was able to be computed in polynomial-time and outperform the other baseline algorithms in computation time (see \cref{tab:algorithm_time}) which makes it the most practical method available for risk-averse soft-robust RL.

\begin{table}
  \small
  \centering
\begin{minipage}{0.85\linewidth}
\small
\centering
\begin{tabular}{l|rrr}
  \toprule
 \multicolumn{1}{c|}{Method} & \multicolumn{1}{c}{RS} & \multicolumn{1}{c}{POP} & \multicolumn{1}{c}{INV} \\
  \midrule
RASR & $< 2$ & 24 & $< 7$ \\
Naive & 27 & 175 & 186 \\
Erik & 1117 & 110306 & 9977 \\
Chow & 69 & 861 & 572 \\
  \bottomrule
\end{tabular}
\caption{Time (sec) to compute each algorithm}
  \label{tab:algorithm_time}
\end{minipage}
\end{table}

\begin{table}

90\% Risk of return
\centering
\begin{tabular}[t]{l|rrr|rrr|rrr}
\hline
\multicolumn{1}{c|}{domain} & \multicolumn{3}{c|}{river-swim} & \multicolumn{3}{c|}{inventory} & \multicolumn{3}{c}{population} \\
\cline{1-1} \cline{2-4} \cline{5-7} \cline{8-10}
  & VaR & CVaR & EVaR & VaR & CVaR & EVaR & VaR & CVaR & EVaR\\
\hline
\textbf{RASR} & \textbf{50} & \textbf{50} & \textbf{50} & 327 & \textbf{319} & \textbf{310} & -623 & \textbf{-1954} & \textbf{-3920}\\
\hline
Naive & \textbf{50} & \textbf{50} & \textbf{50} & 325 & 317 & \textbf{310} & \textbf{-566} & -2014 & -4378\\
\hline
Erik & \textbf{50} & \textbf{50} & 47 & 327 & 317 & 307 & -1916 & -4090 & -5792\\
\hline
Derman & \textbf{50} & 36 & 24 & 327 & 316 & 305 & -625 & -2082 & -4364\\
\hline
RSVF & \textbf{50} & 49 & 42 & 304 & 298 & 292 & -2807 & -4881 & -6204\\
\hline
BCR & \textbf{50} & 49 & 42 & 307 & 301 & 295 & -2969 & -4985 & -6282\\
\hline
RSVI & \textbf{50} & 49 & 41 & 306 & 300 & 294 & -2646 & -4702 & -6104\\
\hline
Chow & \textbf{50} & 46 & 34 & \textbf{328} & \textbf{319} & 307 & -914 & -2126 & -4517\\
\hline
\end{tabular}

\vspace{0.1in}
95\% Risk of return
\centering
\begin{tabular}[t]{l|rrr|rrr|rrr}
\hline
\multicolumn{1}{c|}{domain} & \multicolumn{3}{c|}{river-swim} & \multicolumn{3}{c|}{inventory} & \multicolumn{3}{c}{population} \\
\cline{1-1} \cline{2-4} \cline{5-7} \cline{8-10}
  & VaR & CVaR & EVaR & VaR & CVaR & EVaR & VaR & CVaR & EVaR\\
\hline
\textbf{RASR} & \textbf{50} & \textbf{50} & \textbf{50} & 320 & 312 & \textbf{305} & -1531 & \textbf{-2948} & \textbf{-4735}\\
\hline
Naive & \textbf{50} & \textbf{50} & \textbf{50} & 318 & 311 & 304 & \textbf{-1525} & -3052 & -5285\\
\hline
Erik & \textbf{50} & 49 & 46 & 320 & 310 & 301 & -3620 & -5553 & -6739\\
\hline
Derman & 39 & 26 & 18 & 318 & 309 & 297 & -1626 & -3117 & -5277\\
\hline
RSVF & \textbf{50} & 48 & 40 & 272 & 268 & 263 & -4950 & -6465 & -7292\\
\hline
BCR & \textbf{50} & 48 & 40 & 302 & 296 & 291 & -4640 & -6258 & -7177\\
\hline
RSVI & \textbf{50} & 48 & 40 & 301 & 296 & 291 & -4314 & -6042 & -7000\\
\hline
Chow & \textbf{50} & 33 & 29 & \textbf{321} & \textbf{313} & 301 & -2305 & -3428 & -5557\\
\hline
\end{tabular}

\vspace{0.1in}
99\% Risk of return
\centering
\begin{tabular}[t]{l|rrr|rrr|rrr}
\hline
\multicolumn{1}{c|}{domain} & \multicolumn{3}{c|}{river-swim} & \multicolumn{3}{c|}{inventory} & \multicolumn{3}{c}{population} \\
\cline{1-1} \cline{2-4} \cline{5-7} \cline{8-10}
  & VaR & CVaR & EVaR & VaR & CVaR & EVaR & VaR & CVaR & EVaR\\
\hline
\textbf{RASR} & \textbf{50} & \textbf{50} & \textbf{50} & 307 & \textbf{301} & 295 & -4059 & \textbf{-5349} & \textbf{-6387}\\
\hline
Naive & \textbf{50} & \textbf{50} & \textbf{50} & 306 & 300 & 295 & -6397 & -7534 & -8127\\
\hline
Erik & \textbf{50} & 46 & 45 & 306 & 300 & \textbf{296} & -6978 & -7956 & -8474\\
\hline
Derman & 17 & 11 & 9 & 303 & 294 & 282 & \textbf{-3976} & -5450 & -7197\\
\hline
RSVF & \textbf{50} & 46 & 45 & 266 & 262 & 258 & -7465 & -8262 & -8722\\
\hline
BCR & 45 & 43 & 36 & 293 & 288 & 284 & -7400 & -8212 & -8650\\
\hline
RSVI & 45 & 43 & 36 & 291 & 285 & 281 & -7215 & -8087 & -8560\\
\hline
Chow & 30 & 26 & 23 & \textbf{308} & 300 & 289 & -6131 & -6822 & -7489 \\
\hline
\end{tabular}
\caption{Risk of Return for 10,000 Monte Carlo instances}
  \label{tab:appendix_Results}
\end{table}

\section{Additional Related Work} \label{sec:addit-relat-work}

\Cref{tab:related2} summarizes soft-robust and risk-averse results studied in the MDP/RL literature, together with the properties of their proposed formulations and algorithms. Other than the two RASR results presented in this paper: RASR-ERM and RASR-EVaR, we used the name of a representative author to refer to all results in each category. Most relevant refereneces for each entry are as follows: \emph{Iyengar et al.}~\cite{Iyengar2005,Mankowitz2019}, \emph{Xu et al.}~\cite{Xu2012,Wiesemann2013,Grand-Clement2021b}, \emph{Eriksson et al.}~\cite{Eriksson2020}, \emph{Delage et al.}~\cite{Delage2009,Russel2019beyond,Behzadian2021}, \emph{Lobo et al.}~\cite{Lobo2021,Angelotti2021,Javed2021,Brown2020}, \emph{Derman et al.}~\cite{Derman2018}, \emph{Steimle et al.}~\cite{Buchholz2019,Steimle2021a}, \emph{Chen et al.}~\cite{Chen2012}, \emph{Chow et al.}~\cite{Chow2015}, \emph{Osogami et al.}~\cite{Osogami2011}, \emph{Borkar et al.}~\cite{Borkar2002a}.

\begin{table*}
  \centering
  \begin{small}
  \begin{tabular}{l|l|l|l|l|l}
    \toprule
  & & & \multicolumn{2}{|c|}{Risk Measure} & \\    
    Name / author & Horizon & Uncertainty & Epistemic & Aleatory & Complexity \\
    \midrule
    RASR-ERM & Discounted $\infty$ & Dynamic & ERM & ERM & P \\
    RASR-EVaR & Discounted $\infty$ & Dynamic & EVaR & EVaR & P \\
    \midrule
    Iyengar et al. & Discounted $\infty$ & Dynamic & Min & E & P \\
    Xu et al. & Discounted $\infty$ & Dynamic & CVaR & E & NP-Hard \\
    Eriksson et al. & Discounted $\infty$ & Dynamic & ERM & E & -- \\
    Delage et al. & Discounted $\infty$ & Static & VaR & E & NP-Hard \\
    Lobo et al. & Discounted $\infty$& Static & CVaR & E & NP-Hard \\
    Derman et al. & Average $\infty$& Dynamic & E & E & P \\
    Steimle et al. & Finite & Static & E & E & NP-Hard \\
    Chen et al. & Finite & Static & CVaR & E &  NP-Hard\\
    \midrule
    Chow et al. & Discounted $\infty$ & -- & -- & CVaR & NP-Hard \\
    Osogami et al. & Discounted $\infty$ & -- & -- & I-CVaR/I-ERM & P \\
    Borkar et al. & Average $\infty$ & -- & -- & ERM &  \\
    \bottomrule
  \end{tabular}
  \end{small}
  \caption{Summary of the soft-robust and risk-averse models in the MDP/RL literature.}
  \label{tab:related2}
\end{table*}

The description of the rest of the columns is as follows: ``horizon'' indicates the considered MDP setting; ``uncertainty'' shows whether the uncertainty is static or dynamic as discussed in \cref{sec:rasr-erm}; ``risk measure'' contains the risk measure used by the work for epistemic and aleatory uncertainties (with E being the expectation or risk-neutral), and finally, ``complexity'' indicates the complexity of the proposed algorithm(s), if known. \cref{alg:rasr-vi-inf,alg:ERM_EVAR} are marked as ``P'' because they can compute an $\epsilon$-optimal policy in polynomial time for any fixed $\epsilon >0$, $\gamma < 1$, $r_{\min}$, and $r_{\max}$ as shown in \cref{thm:approx-error,thm:rasr-evar-bound}.

\end{document}